\newcommand{\EE}{\mathcal E}
\newcommand{\lift}{\ell}
\newcommand{\scaledlift}{\ell_s}
\newcommand{\ilift}{\ell^\dagger}
\newcommand{\loss}{\mathcal L}
\newcommand{\net}{\mathcal N}
\newcommand{\lI}{\underline{t}}
\newcommand{\rI}{\overline{t}}
\pgfplotsset{
  tick label style={font=\footnotesize},
  label style={font=\footnotesize},
  legend style={font=\footnotesize}
}
\begin{document}

\let\thefootnotetmp\thefootnote
\renewcommand{\thefootnote}{*}
\thispagestyle{empty}
\begin{center}
\vspace*{0.03\paperheight}
{\Large\bf\color{black}\theTITLE}\\
\bigskip
\bigskip
\bigskip
{\large Peter Ochs$^\text{\ref{note1}}$ $^\dagger$, %
        Tim Meinhardt$^\ddagger$, %
        Laura Leal-Taixe$^\ddagger$, %
        Michael Moeller\footnote{\label{note1}These authors have equally contributed.}$^\sharp$, \\ \bigskip %
{\small
$^\sharp$~University of Siegen, Siegen, Germany \\
$^\dagger$~Saarland University, Saarbr\"{u}cken, Germany \\
$^\ddagger$~TU Munich, Munich, Germany\\
}
}
\end{center}
\bigskip

\let\thefootnote\thefootnotetmp
\setcounter{footnote}{0}

\begin{abstract}
The great advances of learning-based approaches in image processing and computer vision are largely based on deeply nested networks that compose linear transfer functions with suitable non-linearities. Interestingly, the most frequently used non-linearities in imaging applications (variants of the rectified linear unit) are uncommon in low dimensional approximation problems. In this paper we propose a novel non-linear transfer function, called \textit{lifting}, which is motivated from a related technique in convex optimization. A \textit{lifting layer} increases the dimensionality of the input, naturally yields a linear spline when combined with a fully connected layer, and therefore closes the gap between low and high dimensional approximation problems. Moreover, applying the lifting operation to the loss layer of the network allows us to handle non-convex and flat (zero-gradient) cost functions. We analyze the proposed lifting theoretically, exemplify interesting properties in synthetic experiments and demonstrate its effectiveness in deep learning approaches to image classification and denoising. 
\end{abstract}

\makekeywords

\section{Introduction}

Deep Learning has seen a tremendous success within the last 10 years improving the state-of-the-art in almost all computer vision and image processing tasks significantly. While one of the main explanations for this success is the replacement of handcrafted methods and features with data-driven approaches, the architectures of successful networks remain handcrafted and difficult to interpret. 

The use of some common building blocks, such as convolutions, in imaging tasks is intuitive as they establish translational invariance. The composition of linear transfer functions with non-linearities is a natural way to achieve a simple but expressive representation, but the choice of non-linearity is less intuitive: Starting from biologically motivated step functions or their smooth approximations by sigmoids, researchers have turned to rectified linear units (ReLUs),
\begin{align}
\label{eq:relu}
 \sigma(x) = \max(x,0) 
 \end{align}
to avoid the optimization-based problem of a vanishing gradient. The derivative of a ReLU is $\sigma'(x)=1$ for all $x>0$. Nonetheless, the derivative
remains zero for $x<0$, which does not seem to make it a natural choice for an activation function, and often leads to ``dead'' ReLUs. This problem has been partially addressed with ReLU variants, such as leaky ReLUs \cite{MHN13}, parameterized ReLUs \cite{HZRS15}, or maxout units \cite{GWMC+13}. These remain amongst the most popular choice of non-linearities as they allow for fast network training in practice.

\begin{figure}[t!]
  \begin{center}
    \includegraphics[width=0.95\textwidth]{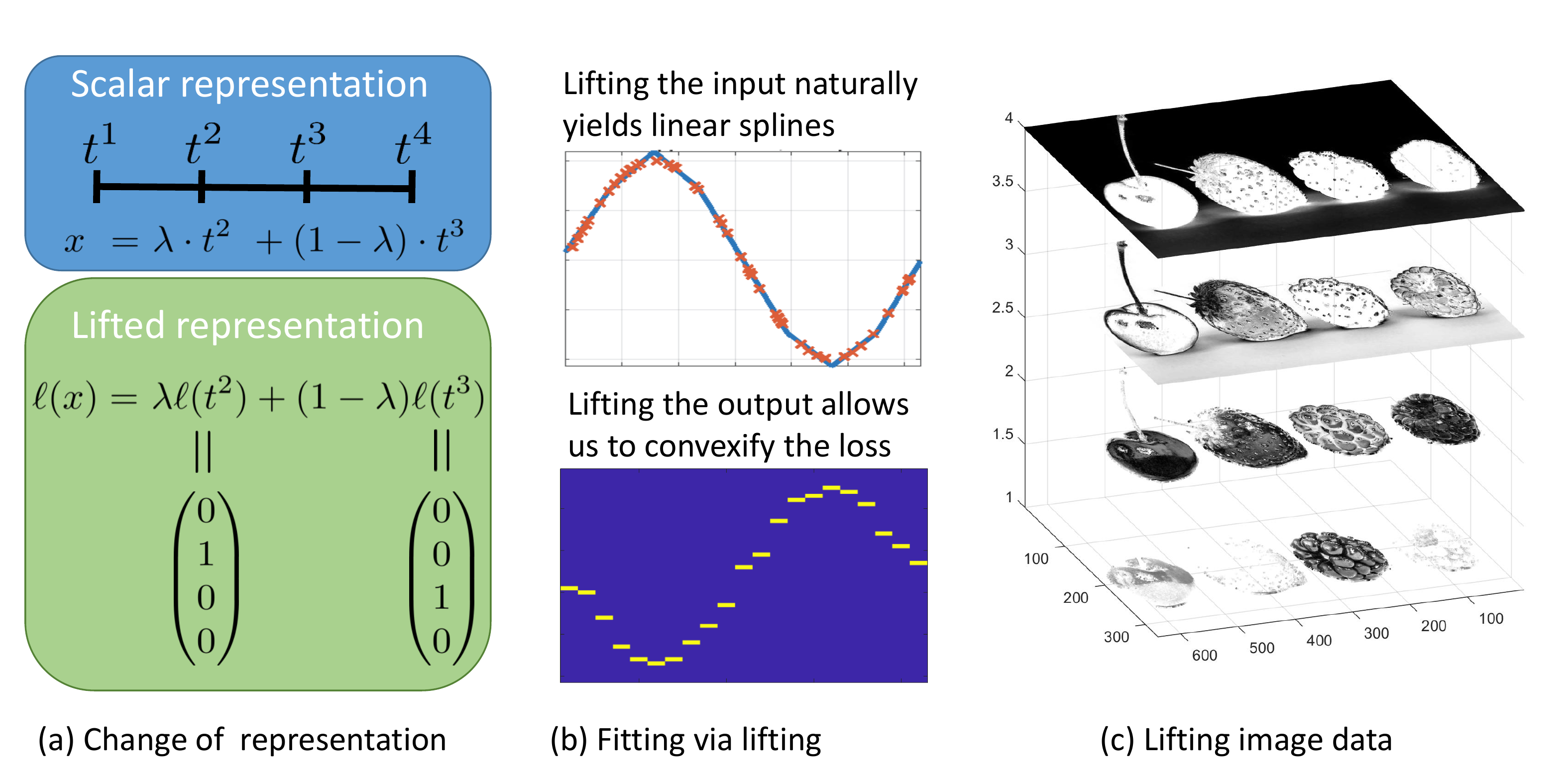}
  \end{center}
  \caption{\label{fig:teaser}The proposed lifting identifies predefined labels $t^i \in \mathbb{R}$ with the unit vectors $e_i$ in $\R^L$, $L\geq2$. As illustrated in (a), a number $x$ that is represented as a convex combination of $t^i$ and $t^{i+1}$ has a natural representation in a higher dimensional \textit{lifted} space, see \eqref{eq:ll}. When a lifting layer is combined with a fully connected layer it corresponds to a linear spline, and when both the input as well as the desired output are \textit{lifted} it allows non-convex cost functions to be represented as a convex minimization problem (b). Finally, as illustrated in (c), coordinate-wise lifting yields an interesting representation of images, which allows textures of different intensities to be filtered differently.}
\end{figure}

In this paper we propose a novel type of non-linear layer, which we call \textit{lifting layer} $\lift$. In contrast to ReLUs \eqref{eq:relu}, it does not discard large parts of the input data, but rather \textit{lifts} it to different channels that allow the input $x$ to be processed independently on different intervals. As we discuss in more detail in Section~\ref{sec:practicalScaledLifting}, the simplest form of the proposed lifting non-linearity is the mapping 
\begin{align}
\label{eq:reducedLifting}
 \sigma(x) = \begin{pmatrix}
 \max(x,0) \\ \min(x,0)
 \end{pmatrix},
 \end{align}
which 
essentially consists of two complementary ReLUs and therefore 
neither discards half of the incoming inputs nor has intervals of zero gradients. 

More generally, the proposed non-linearity depends on \textit{labels} $t^1< \hdots < t^L \in \mathbb{R}$ (typically linearly spaced) and is defined as a function $\map{\lift}{\mathbb{R}}{\mathbb{R}^L}$ that maps a scalar input $x \in \mathbb{R}$ to a vector $\lift(x)\in \mathbb{R}^L$ via
\begin{align}
\label{eq:ll}
\lift(x) = \Big(0, \hdots, 0, \underbrace{\frac{t^{l+1}-x}{t^{l+1}-t^l}}_{\text{$l$-th coordinate}} , \frac{x - t^l}{t^{l+1}-t^l}, 0, \hdots, 0\Big)^T ~  \text{ for } x \in [t^l, t^{l+1}].
\end{align}

The motivation of the proposed lifting non-linearity is illustrated in Figure~\ref{fig:teaser}. In particular, we highlight the following \emph{contributions}: 
\begin{enumerate}
\item The concept of representing a low dimensional variable in a higher dimensional space is a well-known optimization technique called \textit{functional lifting}, see \cite{PCBC10}. Non-convex problems are reformulated as the minimization of a convex energy in the higher dimensional 'lifted' space. While the {\bf introduction of lifting layers} does not directly correspond to the optimization technique, some of the advantageous properties carry over as we detail in Section~\ref{sec:liftingLayer}.  
\item ReLUs are commonly used in deep learning for imaging applications, however their low dimensional relatives of interpolation or regression problems are typically tackled differently, e.g. by fitting (piecewise) polynomials. We show that a lifting layer followed by a fully connected layer {\bf yields a linear spline}, which {\bf closes the gap between low and high dimensional interpolation problems}. In particular, the aforementioned architecture can {\bf approximate any continuous function} $\map{f}{\R}{\R}$ to arbitrary precision and can still be trained {\bf by solving a \textit{convex} optimization problem} whenever the loss function is convex, a favorable property that is, for example, not shared even by the simplest ReLU-based architecture.
\item By additionally lifting the desired output of the network, one can {\bf represent non-convex cost functions in a convex fashion}. Besides handling the non-convexity, such an approach allows for the minimization of cost functions with large areas of zero gradients such as truncated linear costs. 
\item We demonstrate that the proposed lifting {\bf improves the test accuracy in comparison to similar ReLU-based architectures in several experiments} on image classification and produces state-of-the-art image denoising results, making it an attractive universal tool in the design of neural networks.
\end{enumerate}

\section{Related Work}   \label{sec:relatedWork}

\paragraph{Lifting in Convex Optimization.} One motivation for the proposed non-linearity comes from a technique called \textit{functional lifting} which allows particular types of non-convex optimization problems to be reformulated as convex problems in a higher dimensional space, see \cite{PCBC10} for details. The recent advances in functional lifting \cite{MLMLC16} have shown that \eqref{eq:ll} is a particularly well-suited discretization of the continuous model from \cite{PCBC10}. Although, the techniques differ significantly, we hope for the general idea of an easier optimization in higher dimensions to carry over. Indeed, for simple instances of neural network architecture, we prove several favorable properties for our lifting layer that are related to properties of functional lifting. Details are provided in Sections~\ref{sec:liftingLayer} and~\ref{sec:outputLifting}.

\paragraph{Non-linearities in Neural Networks.} While many non-linear transfer functions have been studied in the literature (see \cite[Section 6.3]{GBC16} for an overview), the ReLU in \eqref{eq:relu} remains the most popular choice. Unfortunately, it has the drawback that its gradient is zero for all $x<0$, thus preventing gradient based optimization techniques to advance if the activation is zero (dead ReLU problem). Several variants of the ReLU avoid this problem by either utilizing smoother activations such as softplus \cite{DBBNG01} or exponential linear units \cite{CUH15}, or by considering
\begin{align}
\label{eq:generalizedRelu}
\sigma(x;\alpha) = \max(x,0) + \alpha \min(x,0),
\end{align}
e.g. the absolute value rectification $\alpha = -1$ \cite{JKRL09}, leaky ReLUs with a small $\alpha>0$ \cite{MHN13}, randomized leaky ReLUs with randomly choosen $\alpha$ \cite{XWCL15}, parametric ReLUs in which $\alpha$ is a learnable parameter \cite{HZRS15}. 
Self-normalizing neural networks \cite{KUMH17} use scaled exponential LUs (SELUs) which have further normalizing properties and therefore replace the use of batch normalization techniques \cite{IS15}.
While the activation \eqref{eq:generalizedRelu} seems closely related to the simplest case \eqref{eq:reducedLifting} of our lifting, the latter allows to process $\max(x,0)$ and $\min(x,0)$ separately, avoiding the problem of predefining $\alpha$ in \eqref{eq:generalizedRelu} and leading to more freedom in the resulting function. 

Another related non-linear transfer function are maxout units \cite{GWMC+13}, which (in the 1-D case we are currently considering) are defined as 
\begin{align}
\label{eq:maxout}
 \sigma(x) = \max_{j} (\theta_j x + b_j).
\end{align}
They can represent any piecewise linear \textit{convex} function. However, as we show in Proposition~\ref{prop:lifting-leads-to-lin-spline}, a combination of the proposed lifting layer with a fully connected layer drops the restriction to \textit{convex} activation functions, and allows us to learn \textit{any} piecewise linear function. 
This special architecture shows also similarities to learning the non-linear activation function in terms of basis functions \cite{CP17}.

\paragraph{Universal Approximation Theorem.}
As an extension of the universal approximation theorem in \cite{Cybenko1989}, it has been shown in \cite{LLPS93} that the set of feedforward networks with one hidden layer, i.e., all functions $\mathcal{N}$ of the form
\begin{align}
\label{eq:oneHiddenFF}
 \mathcal{N}(x)= \sum_{j=1}^N \theta^1_j \sigma(\langle \theta^2_j, x \rangle + b_j)
 \end{align}
for some integer $N$, and weights $\theta^1_j \in \mathbb{R}$, $\theta^2_j \in \mathbb{R}^n$, $b_j\in \mathbb{R}$ are dense in the set of continuous functions $\map{f}{[0,1]^n}{\R}$ if and only if $\sigma$ is not a polynomial. While this result demonstrates the expressive power of all common activation functions, the approximation of some given function $f$ with a network $\mathcal{N}$ of the form \eqref{eq:oneHiddenFF} requires optimization for the parameters $\theta^1$ and $(\theta^2,b)$ which inevitably leads to a non-convex problem.
We prove the same expressive power of a lifting based architecture (see Corollary~\ref{cor:universalApproximation}), while, remarkably, our corresponding learning problem is a convex optimization problem. 
Moreover, beyond the qualitative density result for \eqref{eq:oneHiddenFF}, we may quantify the approximation quality depending on a simple measure for the ``complexity'' of the continuous function to be approximated (see Corollary~\ref{cor:universalApproximation} and the Appendix~\ref{appdx:vector-val}).


\section{Lifting Layers}   \label{sec:liftingLayer}

In this section, we introduce the proposed lifting layers (Section~\ref{sec:lift-layer-def}) and study their favorable properties in a simple 1-D setting (Section~\ref{sec:lift-layer-1D-ana}). The restriction to 1-D functions is mainly for illustrative purposes and simplicity. All results can be transferred to higher dimensions via a vector-valued lifting (Section~\ref{sec:vec-lift-layer}). The analysis provided in this section does not directly apply to deep networks, however it provides an intuition for this setting. Section~\ref{sec:practicalScaledLifting} discusses some practical aspects and reveals a connection to ReLUs.
All proofs and the details of the vector-valued lifting are provided in Appendix~\ref{appdx:vector-val} and~\ref{appdx:lift-output}.

\subsection{Definition} \label{sec:lift-layer-def}

The following definition formalizes the lifting layer from the introduction.
\begin{definition}[Lifting] \label{def:lifting}
We define the lifting of a variable $x\in [\lI,\rI]$, $\lI,\rI\in\R$,  with respect to the Euclidean basis $\EE:=\set{e^1,\ldots,e^L}$ of $\R^L$ and a knot sequence $\lI=t^1< t^2 < \ldots <t^{L}=\rI$, for some $L\in \N$, as a mapping $\map{\lift}{[\lI,\rI]}{\R^L}$ given by
\begin{equation} \label{eq:def-lift}
  \lift(x) = (1-\lambda_l(x)) e^l + \lambda_{l}(x) e^{l+1}
  \quad  \text{with}\ l\ \text{such that}\ x\in [t^l, t^{l+1}]\,,
\end{equation}
where $\lambda_l(x) := \frac{x-t^l}{t^{l+1}-t^l}\in\R$. The inverse mapping $\map{\ilift}{\R^L}{\R}$ of $\lift$, which satisfies $\ilift(\lift(x))=x$, is defined by 
\begin{equation} \label{eq:def-inverse-lift}
  \ilift(z) = \sum_{l=1}^L z_l t^l \,.
\\\end{equation}
\end{definition}
Note that while liftings could be defined with respect to an arbitrary basis $\EE$ of $\R^L$ (with a slight modification of the inverse mapping), we decided to limit ourselves to the Euclidean basis for the sake of simplicity. Furthermore, we limit ourselves to inputs $x$ that lie in the predefined interval $[\lI,\rI]$. 
Although, the idea extends to the entire real line by linear extrapolation, it requires more technical details. For the sake of a clean presentation, we omit these details. 

\subsection{Analysis in 1D} \label{sec:lift-layer-1D-ana}

Although, here we are concerned with 1-D functions, these properties and examples provide some intuition for the implementation of the lifting layer into a deep architecture. Moreover, analogue results can be stated for the lifting of higher dimensional spaces.

\begin{proposition}[Prediction of a Linear Spline] \label{prop:lifting-leads-to-lin-spline}
  The composition of a fully connected layer $z\mapsto \scal{\theta}{z}$ with $\theta\in \R^L$, and a lifting layer, i.e., 
\begin{align}
\label{eq:linearSpline}
 \net_{\theta}(x) := \scal{\theta}{\lift(x)}, 
 \end{align}
yields a linear spline (continuous piecewise linear function). Conversely, any linear spline can be expressed in the form of \eqref{eq:linearSpline}. 
\end{proposition}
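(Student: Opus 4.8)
The plan is to prove both directions separately. For the forward direction, I would start from the explicit formula \eqref{eq:def-lift}: on each interval $[t^l, t^{l+1}]$ we have $\lift(x) = (1-\lambda_l(x))e^l + \lambda_l(x)e^{l+1}$ with $\lambda_l(x) = \frac{x-t^l}{t^{l+1}-t^l}$ affine in $x$. Hence $\net_\theta(x) = \scal{\theta}{\lift(x)} = \theta_l(1-\lambda_l(x)) + \theta_{l+1}\lambda_l(x)$, which is an affine function of $x$ on each interval $[t^l,t^{l+1}]$, so $\net_\theta$ is piecewise linear. Continuity is the only thing that needs a small argument: at an interior knot $x = t^{l+1}$, approaching from the left gives $\lambda_l(t^{l+1}) = 1$, so the value is $\theta_{l+1}$; approaching from the right gives $\lambda_{l+1}(t^{l+1}) = 0$, so the value is again $\theta_{l+1}$. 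Thus the one-sided limits agree at every knot and $\net_\theta$ is continuous, i.e. a linear spline with breakpoints among the $t^l$.

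For the converse, given an arbitrary linear spline $s\colon [\lI,\rI]\to\R$ with breakpoints contained in the knot sequence $t^1 < \dots < t^L$ (if its breakpoints are a strict subset, we simply regard it as a spline with respect to the finer knot set; if finer knots are needed one enlarges $L$ accordingly — this is worth a sentence), I would define $\theta\in\R^L$ by $\theta_l := s(t^l)$. Then the function $x\mapsto \scal{\theta}{\lift(x)}$ agrees with $s$ at every knot, and on each interval $[t^l,t^{l+1}]$ both functions are affine; an affine function on an interval is determined by its values at the two endpoints, so the two functions coincide on all of $[\lI,\rI]$. Hence $s = \net_\theta$.

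The construction is essentially the observation that $x\mapsto\scal{\theta}{\lift(x)}$ is exactly the piecewise-linear interpolation of the data $(t^l,\theta_l)$, so $\lift$ plays the role of a hat-function (nodal) basis for linear splines. I would phrase the proof around this: $\scal{\theta}{\lift(x)} = \sum_l \theta_l \lift(x)_l$ and the coordinate functions $x\mapsto\lift(x)_l$ are precisely the standard hat functions $\phi_l$ associated with the knots, which form a partition of unity and satisfy $\phi_l(t^k) = \delta_{lk}$. The forward direction is then immediate from the known fact that finite linear combinations of hat functions are linear splines, and the converse from the fact that every linear spline on this knot set is such a combination with coefficients given by its nodal values.

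I do not anticipate a genuine obstacle here; the only mild subtlety is bookkeeping about which knot set the spline's breakpoints live in — the cleanest statement fixes the knots $t^1,\dots,t^L$ in advance and interprets ``any linear spline'' as ``any linear spline with breakpoints among $t^1,\dots,t^L$'', or else notes that by refining $L$ one captures any prescribed finite breakpoint set. A second trivial point to handle carefully is well-definedness of \eqref{eq:def-lift} at the shared endpoint of two intervals, but as computed above both cases give the same value, so $\lift$ (and hence $\net_\theta$) is well defined and continuous.
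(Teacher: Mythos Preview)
Your proposal is correct and follows essentially the same approach as the paper. The paper proves the statement as a special case of the vector-valued version (Proposition~\ref{prop:lifting-leads-to-lin-spline-dD}): linearity on each simplex follows from linearity of the barycentric coordinates $\lambda_i^l$, continuity is asserted from matching at simplex boundaries, and the converse is obtained by choosing $\theta$ so that $\net_\theta$ agrees with the given piecewise linear function at the vertices (referring to the overfitting corollary for details). Your 1-D argument is exactly this specialized to intervals, with the continuity check at knots written out more explicitly; the hat-function remark is a useful alternative phrasing but not a different method.
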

Although the architecture  in \eqref{eq:linearSpline} does not fall into the class of functions covered by the universal approximation theorem, well-known results of linear spline interpolation still guarantee the same results. 
\begin{corollary}[Prediction of Continuous Functions]
\label{cor:universalApproximation}
  Any continuous function $\map{f}{[\lI,\rI]}{\R}$ can be represented arbitrarily accurate with a network architecture $\net_\theta(x) := \scal{\theta}{\lift(x)}$ for sufficiently large $L$, $\theta\in \R^L$.
\end{corollary}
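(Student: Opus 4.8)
The plan is to reduce the corollary directly to Proposition~\ref{prop:lifting-leads-to-lin-spline} together with the classical fact that continuous functions on a compact interval are uniformly approximable by linear splines. Concretely, fix $f\in C([\lI,\rI])$ and $\varepsilon>0$. Since $f$ is uniformly continuous on the compact interval $[\lI,\rI]$, choose $\delta>0$ so that $|f(x)-f(y)|<\varepsilon$ whenever $|x-y|<\delta$. Then pick $L$ large enough that the equispaced knot sequence $\lI=t^1<\dots<t^L=\rI$ has mesh size $t^{l+1}-t^l<\delta$. Let $g$ be the linear spline that interpolates $f$ at the knots $t^1,\dots,t^L$, i.e.\ $g(t^l)=f(t^l)$ for all $l$ and $g$ is affine on each $[t^l,t^{l+1}]$.

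The next step is the standard estimate $\|f-g\|_\infty<\varepsilon$ (or a suitable constant times $\varepsilon$). On an interval $[t^l,t^{l+1}]$ the value $g(x)$ is a convex combination of $f(t^l)$ and $f(t^{l+1})$, hence lies between them; since $f(x)$ differs from each of $f(t^l),f(t^{l+1})$ by less than $\varepsilon$ (the knots are within $\delta$ of $x$), we get $|f(x)-g(x)|<\varepsilon$ on all of $[t^l,t^{l+1}]$, and therefore on $[\lI,\rI]$. This is the routine part and I would not belabor it.

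Finally, by the converse direction of Proposition~\ref{prop:lifting-leads-to-lin-spline}, the linear spline $g$ with these knots can be written as $g(x)=\scal{\theta}{\lift(x)}$ for some $\theta\in\R^L$ — in fact, reading off the construction, one simply takes $\theta_l = f(t^l) = g(t^l)$, since $\lift(t^l)=e^l$ and $\lift$ produces exactly the interpolation weights on each subinterval. Thus $\net_\theta = g$ and $\|f-\net_\theta\|_\infty<\varepsilon$, which is the claim.

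I do not expect a genuine obstacle here: the corollary is essentially a corollary, and all the analytic content is the elementary uniform-approximation-by-splines argument plus the already-proved representation result. The only things to be careful about are (i) making explicit that equispaced knots with small enough mesh suffice, so that "sufficiently large $L$" is justified, and (ii) matching the indexing and the normalization in \eqref{eq:def-lift} when identifying $\theta$ with the sampled values of $f$, so that the converse part of Proposition~\ref{prop:lifting-leads-to-lin-spline} is invoked cleanly. If one wants a quantitative statement (as hinted for the Appendix), one replaces uniform continuity by a modulus-of-continuity bound, e.g.\ $\|f-\net_\theta\|_\infty \le \omega_f\!\big((\rI-\lI)/(L-1)\big)$, but for the qualitative corollary as stated this refinement is unnecessary.
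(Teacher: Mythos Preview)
Your proposal is correct and follows essentially the same route as the paper: the paper's proof (given in the appendix for the $d$-dimensional version) simply says to combine Proposition~\ref{prop:lifting-leads-to-lin-spline} with an approximation lemma bounding $\|f-s_f\|_\infty$ by the modulus of continuity $\omega(f,h^M)$, which is exactly your uniform-continuity argument phrased quantitatively. Your closing remark about the modulus-of-continuity refinement in fact matches the paper's Lemma~\ref{lem:approx-cont-with-lin-spline-dD} almost verbatim.
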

Furthermore, as linear splines can of course fit any (spatially distinct) data points exactly, our simple network architecture has the same property for a particular choice of labels $t^i$. On the other hand, this result suggests that using a small number of labels acts as regularization of the type of linear interpolation.
\begin{corollary}[Overfitting]  \label{prop:overfitting-1D}
  Let $(x_i,y_i)$ be training data, $i=1,\hdots,N$ with $x_i\neq x_j$ for $i\neq j$. If $L=N$ and $t^i = x_i$, there exists $\theta$ such that $\net_{\theta}(x):=\scal{\theta}{\lift(x)}$ is exact at all data points $x=x_i$, i.e. $\net_{\theta}(x_i) = y_i$ for all $i=1,\ldots,N$.
\end{corollary}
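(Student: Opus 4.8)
The plan is to choose the labels so that each data point sits exactly on a knot, and then to observe that the lifting of a knot is a standard basis vector; the interpolation condition then becomes a trivial identification of the coordinates of $\theta$ with the target values. Concretely, I would first reorder the data so that $x_1 < x_2 < \ldots < x_N$ — this is harmless since the index set is arbitrary — and then, as in the hypothesis, set $L = N$ and $t^i = x_i$ for $i = 1,\ldots,N$. In particular $[\lI,\rI] = [x_1,x_N]$, so every data point lies in the domain of $\lift$, and $t^1 < \ldots < t^L$ is a valid knot sequence in the sense of Definition~\ref{def:lifting}.

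The one small computation is to evaluate $\lift$ at a knot. From \eqref{eq:def-lift}, taking $x = t^l$ in the interval $[t^l,t^{l+1}]$ gives $\lambda_l(t^l) = \frac{t^l - t^l}{t^{l+1}-t^l} = 0$, hence $\lift(t^l) = e^l$; evaluating instead from the neighbouring interval $[t^{l-1},t^l]$ gives $\lambda_{l-1}(t^l) = 1$ and again $\lift(t^l) = e^l$, so the value is consistent (the endpoints $t^1$ and $t^L$ are covered likewise by their single adjacent interval). Thus $\lift(t^i) = e^i$ for every $i$. With $t^i = x_i$ this reads $\lift(x_i) = e^i$, so $\net_\theta(x_i) = \scal{\theta}{\lift(x_i)} = \scal{\theta}{e^i} = \theta_i$, and choosing $\theta := (y_1,\ldots,y_N)^T \in \R^N$ yields $\net_\theta(x_i) = y_i$ for all $i = 1,\ldots,N$, which is the claim.

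There is essentially no obstacle here: the only points to check are the harmless reordering of the data, the fact that all $x_i$ lie in $[\lI,\rI]$, and the immediate identity $\lift(t^i) = e^i$, after which the conclusion is a one-line computation. The statement is really a restatement of the exactness of linear spline interpolation at its own nodes, transported through the lifting; indeed one could equally well deduce it from the converse direction of Proposition~\ref{prop:lifting-leads-to-lin-spline}, since the continuous piecewise linear function interpolating the points $(x_i,y_i)$ with breakpoints exactly at the $x_i$ exists and is therefore representable as some $\net_\theta$.
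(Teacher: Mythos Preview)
The proposal is correct and takes essentially the same approach as the paper: the paper's proof (stated for the general $d$-dimensional case in the appendix) likewise observes that $\lift(x_i) = e^i$ when $x_i$ is a vertex/knot and then solves the resulting linear system $\theta E = y$, which in the 1-D Euclidean-basis setting reduces precisely to your choice $\theta_i = y_i$. Your explicit handling of the reordering and the consistency check at interval boundaries are nice touches that the paper leaves implicit.
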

Note that Proposition~\ref{prop:lifting-leads-to-lin-spline} highlights two crucial differences of the proposed non-linearity to the maxout function in \eqref{eq:maxout}: \ii1 maxout functions can only represent convex piecewise linear functions, while liftings can represent arbitrary piecewise linear functions; \ii2 The maxout function is non-linear w.r.t. its parameters $(\theta_j, b_j)$, while the simple architecture in \eqref{eq:linearSpline} (with lifting) is linear w.r.t. its parameters $(\theta,b)$. The advantage of a lifting layer compared to a ReLU, which is less expressive and also non-linear w.r.t. its parameters, is even more significant.

Remarkably, the optimal approximation of a continuous function by a linear spline (for any choice of $t^i$), yields a convex minimization problem. 
\begin{proposition}[Convexity of a simple Regression Problem] \label{prop:convex-regression-prob}
  Let $(x_i,y_i) \in [\lI,\rI] \times \R$ be training data, $i=1,\hdots,N$. Then, the solution of the problem 
  \begin{align}
  \label{eq:linearSplineFitting}
    \min_{\theta} \sum_{i=1}^N\loss(\scal{\theta}{\lift(x_i)}; y_i)
  \end{align} 
  yields the best linear spline fit of the training data with respect to the loss function $\loss$. In particular, if $\loss$ is convex, then \eqref{eq:linearSplineFitting} is a convex optimization problem. 
\end{proposition}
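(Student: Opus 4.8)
The plan is to reduce the statement to Proposition~\ref{prop:lifting-leads-to-lin-spline} for the ``best fit'' claim, and to the elementary fact that an affine precomposition of a convex function is convex for the convexity claim.

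First I would recall from Proposition~\ref{prop:lifting-leads-to-lin-spline} that $\theta \mapsto \net_\theta = \scal{\theta}{\lift(\cdot)}$ parametrizes the space $S$ of continuous piecewise linear functions with breakpoints among $t^1,\ldots,t^L$: every $\theta\in\R^L$ yields an element of $S$, and conversely every spline in $S$ arises this way (concretely $\theta_l = s(t^l)$). Consequently the set of achievable prediction vectors $\set{(\scal{\theta}{\lift(x_1)},\ldots,\scal{\theta}{\lift(x_N)}) : \theta\in\R^L}$ coincides with $\set{(s(x_1),\ldots,s(x_N)) : s\in S}$. Hence $\min_\theta \sum_{i=1}^N \loss(\scal{\theta}{\lift(x_i)};y_i) = \min_{s\in S} \sum_{i=1}^N \loss(s(x_i);y_i)$, and any minimizer $\theta^\ast$ corresponds to a spline $s^\ast = \net_{\theta^\ast}$ realizing the best linear spline fit of the data with respect to $\loss$; this proves the first assertion.

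Second, for the convexity claim, the key observation --- and the one that distinguishes this architecture from ReLU-based ones --- is that for each fixed data point $x_i$ the vector $\lift(x_i)\in\R^L$ does \emph{not} depend on $\theta$. Therefore $\theta\mapsto \scal{\theta}{\lift(x_i)}$ is a linear, in particular affine, map. Since precomposition of the convex function $z\mapsto\loss(z;y_i)$ with this affine map preserves convexity, each summand $\theta\mapsto\loss(\scal{\theta}{\lift(x_i)};y_i)$ is convex, and a finite sum of convex functions is convex. As the minimization in \eqref{eq:linearSplineFitting} is unconstrained over $\R^L$, it is a convex optimization problem.

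The only point requiring care --- a bookkeeping matter rather than a genuine obstacle --- is the first step: one must explicitly invoke the converse direction of Proposition~\ref{prop:lifting-leads-to-lin-spline}, so that restricting predictions to the form $\net_\theta$ loses nothing relative to the full class of linear splines with the prescribed knots, and ``the solution of \eqref{eq:linearSplineFitting}'' genuinely deserves to be called the best linear spline fit. I expect no further difficulty.
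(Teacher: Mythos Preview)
Your proposal is correct and follows essentially the same approach as the paper: invoke Proposition~\ref{prop:lifting-leads-to-lin-spline} to identify $\net_\theta$ with the class of linear splines, and then use linearity of $\theta\mapsto\scal{\theta}{\lift(x_i)}$ together with convexity of $\loss$ to conclude. If anything, your write-up is more explicit than the paper's (which dispatches the result in two sentences), in particular your insistence on the converse direction of Proposition~\ref{prop:lifting-leads-to-lin-spline} to justify the phrase ``best linear spline fit'' is a worthwhile clarification that the paper leaves implicit.
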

As the following example shows, this is not true for ReLUs and maxout functions.
\begin{example} \label{ex:relu-non-convex}
 The convex loss $\mathcal{L}(z;1) = (z-1)^2$ composed with a ReLU applied to a linear transfer function, i.e., $\theta \mapsto \max(\theta x_i,0)$ with $\theta\in \R$, leads to a non-convex objective function, e.g. for $x_i=1$, $\theta\mapsto (\max(\theta,0)-1)^2$ is non-convex.
\end{example}

Therefore, in the light of Proposition~\ref{prop:convex-regression-prob}, the proposed lifting closes the gap between low dimensional approximation and regression problems (where linear splines are extremely common), and high dimensional approximation/learning problems, where ReLUs have been used instead of linear spline type of functions. 

\subsection{Vector-Valued Lifting Layers} \label{sec:vec-lift-layer}
A vector-valued construction of the lifting similar to \cite{LMMLC16} allows us to naturally extend all our previous results for functions $\map{f}{[\lI,\rI]}{\R}$ to functions $\map{f}{\Omega \subset \R^d}{\R}$. Definition~\ref{def:lifting} is generalized to $d$ dimensions by triangulating the compact domain $\Omega$, and identifying each vertex of the resulting mesh with a unit vector in a space $\R^N$, where $N$ is the total number of vertices. The lifted vector contains the barycentric coordinates of a point $x\in \R^d$ with respect its surrounding vertices. The resulting lifting remains a continuous piecewise linear function when combined with a fully connected layer (cf. Proposition~\ref{prop:lifting-leads-to-lin-spline}), and yields a convex problem when looking for the best piecewise linear fit on a given triangular mesh (cf. Proposition~\ref{prop:convex-regression-prob}). Intuition is provided in Figure~\ref{fig:vector-val-lift} and the details are provided in Appendix~\ref{appdx:vector-val}. 
\begin{figure}[t]
\begin{center}
  \begin{tikzpicture}
    \draw[->] (-0.5,0) -- (5,0);
    \draw[->] (0,-0.5) -- (0,4.5);

    \coordinate (V1) at (0.5,0.5);
    \coordinate (V2) at (1.5,0.5);
    \coordinate (V3) at (3,0.5);
    \coordinate (V4) at (1,1.5);
    \coordinate (V5) at (2.5,2);
    \coordinate (V6) at (4,1.5);
    \coordinate (V7) at (0.5,3);
    \coordinate (V8) at (1.5,3);
    \coordinate (V9) at (3,3.5);
    \coordinate (V10) at (4.5,3);
    \coordinate (V11) at (1.5,4);
    \coordinate (V12) at (4,4.5);
    
    \fill[color=orange!5!white] (V1) -- (V2) -- (V3) -- (V6) -- (V10) -- (V12) -- (V11) -- (V7) -- (V4) -- (V1);
    \draw[orange] (1,-0.3) node[right] {$\Omega=\bigcup_{l=1}^{13} T^{l}$};

    \filldraw[opacity=0.6] (V1) circle [radius=1pt] node[below]      {$V^1$};
    \filldraw[opacity=0.6] (V2) circle [radius=1pt] node[below]      {$V^2$};
    \filldraw[opacity=0.6] (V3) circle [radius=1pt] node[below]      {$V^3$};
    \filldraw[opacity=0.6] (V4) circle [radius=1pt] node[left]       {$V^4$};
    \filldraw[opacity=0.6] (V5) circle [radius=1pt] node[below]      {$V^5$};
    \filldraw[opacity=0.6] (V6) circle [radius=1pt] node[below right]{$V^6$};
    \filldraw[opacity=0.6] (V7) circle [radius=1pt] node[left]       {$V^7$};
    \filldraw[opacity=0.6] (V8) circle [radius=1pt] node[below left] {$V^8$};
    \filldraw[opacity=0.6] (V9) circle [radius=1pt] node[above]      {$V^9$};
    \filldraw[opacity=0.6] (V10) circle [radius=1pt] node[right]     {$V^{10}$};
    \filldraw[opacity=0.6] (V11) circle [radius=1pt] node[above]     {$V^{11}$};
    \filldraw[opacity=0.6] (V12) circle [radius=1pt] node[right]     {$V^{12}$};

    \draw (V1) -- (V2) -- (V4) -- cycle;
    \draw (V2) -- (V3) -- (V5) -- cycle;
    \draw (V3) -- (V5) -- (V6) -- cycle;
    \draw (V4) -- (V5) -- (V8) -- cycle;
    \draw (V4) -- (V7) -- (V8) -- cycle;
    \draw (V7) -- (V8) -- (V11) -- cycle;
    \draw (V8) -- (V9) -- (V11) -- cycle;
    \draw (V2) -- (V3) -- (V5) -- cycle;
    \draw (V5) -- (V8) -- (V9) -- cycle;
    \draw (V5) -- (V9) -- (V10) -- cycle;
    \draw (V5) -- (V6) -- (V10) -- cycle;
    \draw (V9) -- (V11) -- (V12) -- cycle;
    \draw (V9) -- (V10) -- (V12) -- cycle;

    \coordinate (T1) at (barycentric cs:V1=0.33,V2=0.33,V4=0.33); 
    \coordinate (T2) at (barycentric cs:V2=0.33,V4=0.33,V5=0.33); 
    \coordinate (T3) at (barycentric cs:V2=0.33,V3=0.33,V5=0.33); 
    \coordinate (T4) at (barycentric cs:V3=0.33,V5=0.33,V6=0.33); 
    \coordinate (T5) at (barycentric cs:V4=0.33,V7=0.33,V8=0.33); 
    \coordinate (T6) at (barycentric cs:V4=0.33,V5=0.33,V8=0.33); 
    \coordinate (T7) at (barycentric cs:V5=0.33,V8=0.33,V9=0.33); 
    \coordinate (T8) at (barycentric cs:V5=0.33,V9=0.33,V10=0.33); 
    \coordinate (T9) at (barycentric cs:V5=0.33,V6=0.33,V10=0.33); 
    \coordinate (T10) at (barycentric cs:V7=0.33,V8=0.33,V11=0.33); 
    \coordinate (T11) at (barycentric cs:V8=0.33,V9=0.33,V11=0.33); 
    \coordinate (T12) at (barycentric cs:V9=0.33,V11=0.33,V12=0.33); 
    \coordinate (T13) at (barycentric cs:V9=0.33,V10=0.33,V12=0.33); 

    \node[color=orange,opacity=0.6] at (T1) {$T^{1}$};
    \node[color=orange,opacity=0.6] at (T2) {$T^{2}$};
    \node[color=orange,opacity=0.6] at (T3) {$T^{3}$};
    \node[color=orange,opacity=0.6] at (T4) {$T^{4}$};
    \node[color=orange,opacity=0.6] at (T5) {$T^{5}$};
    \node[color=orange,opacity=0.6] at (T6) {$T^{6}$};
    \node[color=orange,opacity=0.6] at (T7) {$T^{7}$};
    \node[color=orange,opacity=0.6] at (T8) {$T^{8}$};
    \node[color=orange,opacity=0.6] at (T9) {$T^{9}$};
    \node[color=orange,opacity=0.6] at (T10) {$T^{10}$};
    \node[color=orange,opacity=0.6] at (T11) {$T^{11}$};
    \node[color=orange,opacity=0.6] at (T12) {$T^{12}$};
    \node[color=orange,opacity=0.6] at (T13) {$T^{13}$};

    \begin{scope}[shift={(7,0)}]
      \node (lV0) at (0,0) {};
      \foreach \x/\y in {0/1,1/2,2/3,3/4,4/5,5/6,6/7,7/8,8/9,9/10,10/11,11/12}
      {
        \node[above=0mm of lV\x.north,draw,minimum size=4mm] (lV\y) {};
      }
      \node[below] (lV0) {\color{red}$\ell(x)\in \R^{12}$};
    \end{scope}
    
    \coordinate (X) at (barycentric cs:V5=0.2,V8=0.5,V9=0.3);
    
    \draw[dashed,color=blue!70!cyan,thick] (V5) -- (X);
    \draw[dashed,color=green!70!black,thick] (V8) -- (X);
    \draw[dashed,color=red!70!cyan,thick] (V9) -- (X);

    \draw[->,blue!70!cyan,thick] (X) to[bend left=15] (lV5);
    \draw[->,green!70!black,thick] (X) to[bend left=15] (lV8);
    \draw[->,red!70!cyan,thick] (X) to[bend left=15] (lV9);

    \foreach \x in {1,2,3,4,6,7,10,11,12}
    {
      \draw (lV\x) node {\scriptsize 0};
    }
    \draw (lV5) node {\scriptsize 0.2};
    \draw (lV8) node {\scriptsize 0.5};
    \draw (lV9) node {\scriptsize 0.3};
    
    \filldraw[color=red,thick] (X) circle [radius=2pt] node[below left] {$x$};

  \end{tikzpicture}
\end{center}
\caption{\label{fig:vector-val-lift}Intuition and notation of the vector-valued lifting.}
\end{figure}
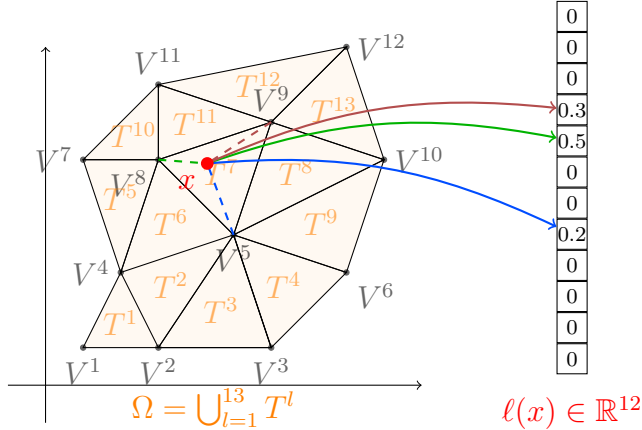
Unfortunately, discretizing a domain $\Omega \subset \R^d$ with $L$ labels per dimension leads to $N=L^d$ vertices, which makes a vector-valued lifting prohibitively expensive for large $d$. Therefore, in high dimensional applications, we turn to narrower and deeper network architectures, in which the scalar-valued lifting is applied to each component separately. The latter sacrifices the convexity of the overall problem for the sake of a high expressiveness with comparably few parameters. Intuitively, the increasing expressiveness is explained by an exponentially growing number of kinks for the composition of layers that represent linear splines. A similar reasoning can be found in \cite{MPCB14}.

\subsection{Scaled Lifting}
\label{sec:practicalScaledLifting}

We are free to scale the lifted representation defined in \eqref{eq:def-lift}, when the inversion formula in \eqref{eq:def-inverse-lift} compensates for this scaling. For practical purposes, we found it to be advantageous to also introduce a \textbf{scaled lifting} by replacing \eqref{eq:def-lift} in Definition~\ref{def:lifting} by
\begin{equation} \label{eq:def-lift-scaled}
  \scaledlift(x) = (1-\lambda_l(x)) t^l e^l + \lambda_{l}(x) t^{l+1} e^{l+1}
  \quad  \text{with}\ l\ \text{such that}\ x\in [t^l, t^{l+1}]\,,
\end{equation}
where $\lambda_l(x) := \frac{x-t^l}{t^{l+1}-t^l}\in\R$. The inversion formula reduces to the sum over all components of the vector in this case. We believe that such a scaled lifting is often advantageous: \ii1 The magnitude/meaning of the components of the lifted vector is preserved and does not have to be learned; \ii2 For an uneven number of equally distributed labels in $[-\rI,\rI]$, one of the labels $t^l$ will be zero, which allows us to omit it and represent a scaled lifting into $\R^L$ with $L-1$ many entries. For $L=3$ for example, we find that $t^1 =-\rI$, $t^2 = 0$, and $t^3=\rI$ such that 
\begin{equation} 
  \scaledlift(x) = \left\{\begin{alignedat}{5}
   \Big(1-&\frac{x+\rI}{0+\rI}\Big) (-&&\rI)&& e^1 &&= x e^1 && \text{ if } x\leq 0, \\
  &\frac{x-0}{\rI-0} &&\rI &&e^{3} &&= x e^3 && \text{ if } x >0.
  \end{alignedat}\right.
\end{equation}
As the second component remains zero, we can introduce an equivalent more memory efficient variant of the scaled lifting which we already stated in \eqref{eq:reducedLifting}.

\section{Lifting the Output} \label{sec:outputLifting}

So far, we considered liftings as a non-linear layer in a neural network. However, motivated by lifting-based optimization techniques, which seek a tight convex approximation to problems involving non-convex loss functions, this section presents a convexification of non-convex loss functions by lifting in the context of neural networks. This goal is achieved by approximating the loss by a linear spline and predicting the output of the network in a lifted representation. The advantages of this approach are demonstrated at the end of this section in Example~\ref{ex:lift-output-robust-fit} for a robust regression problem with a vast number of outliers.\\ 

Consider a loss function $\map{\loss_y}{\R}{\R}$ defined for a certain given output $y$ (the total loss for samples $(x_i,y_i)$, $i=1,\ldots,N$, may be given by $\sum_{i=1}^N \loss_{y_i}(x_i)$). We achieve the tight convex approximation by a lifting function $\map{\lift_y}{[\lI_y, \rI_y]}{\R^{L_y}}$ for the range of the loss function $\im(\loss_y)\subset\R$ with respect to the standard basis $\EE_y=\{e_y^1,\ldots,e_y^{L_y}\}$ and a knot sequence $\lI_y = t_y^1 < \ldots < t_y^{L_y} < \rI_y$ following Definition~\ref{def:lifting}. 

The goal of the convex approximation is to predict the lifted representation of the loss, i.e. a vector $z\in \R^{L_y}$. However, in order to assign the correct loss to the lifted variable, it needs to lie in $\im(\lift_y)$. In this case, we have a one-to-one representation of the loss between $[\lI_y, \rI_y]$ and $\im(\lift_y)$, which is shown by the following lemma.
\begin{lemma}[{Characterization of the Range of $\lift$}]
\label{lem:char-range-lift}
  The range of the lifting $\map{\lift}{[\lI,\rI]}{\R^L}$ is given by
  \begin{equation} \label{eq:def-range-lift}
    \im(\lift) = \set{z\in [0,1]^L \setsep 
                  \exists l\colon z_l+z_{l+1} = 1 \ \text{and}\
                  \forall k\not\in \{ l,l+1\}\colon z_k=0
                  } 
  \end{equation}
  and the mapping $\lift$ is a bijection between $[\lI,\rI]$ and $\im(\lift)$ with inverse $\ilift$.
\end{lemma}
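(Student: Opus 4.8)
The plan is to prove the two assertions of the lemma separately: first that $\im(\lift)$ equals the set $S$ on the right-hand side of \eqref{eq:def-range-lift}, and then that $\lift$ restricted to $[\lI,\rI]$ is a bijection onto this set with inverse $\ilift$. For the set equality, the inclusion $\im(\lift)\subseteq S$ is immediate from the definition \eqref{eq:def-lift}: given $x\in[\lI,\rI]$, pick $l$ with $x\in[t^l,t^{l+1}]$; then $\lift(x)$ has only the two coordinates $1-\lambda_l(x)$ and $\lambda_l(x)$ nonzero, both lie in $[0,1]$ since $\lambda_l(x)\in[0,1]$ for $x\in[t^l,t^{l+1}]$, and they sum to $1$. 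One should note the harmless boundary ambiguity when $x=t^{l+1}$ (it can be described via index $l$ or $l+1$), which does not affect membership in $S$.

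For the reverse inclusion $S\subseteq\im(\lift)$, I would take an arbitrary $z\in S$, let $l$ be the index guaranteed by the definition of $S$ with $z_l+z_{l+1}=1$ and all other entries zero, and set $x := \ilift(z) = \sum_{k=1}^L z_k t^k = z_l t^l + z_{l+1} t^{l+1} = z_l t^l + (1-z_l)t^{l+1}$. Since $z_l\in[0,1]$, this $x$ is a convex combination of $t^l$ and $t^{l+1}$, hence $x\in[t^l,t^{l+1}]\subseteq[\lI,\rI]$. Then one computes $\lambda_l(x) = \frac{x-t^l}{t^{l+1}-t^l} = 1-z_l$, so that $(1-\lambda_l(x))e^l+\lambda_l(x)e^{l+1} = z_l e^l + z_{l+1}e^{l+1} = z$, giving $z=\lift(x)\in\im(\lift)$. (If $z_l=0$ or $z_{l+1}=0$ the point sits at a knot and the index choice is not unique, but the computation still goes through.)

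For the bijection claim, surjectivity onto $\im(\lift)$ is automatic. For injectivity, and simultaneously the identity $\ilift(\lift(x))=x$, I would simply compute: for $x\in[t^l,t^{l+1}]$,
\begin{equation}
\ilift(\lift(x)) = (1-\lambda_l(x))t^l + \lambda_l(x)t^{l+1} = t^l + \lambda_l(x)(t^{l+1}-t^l) = t^l + (x-t^l) = x.
\end{equation}
Thus $\ilift$ is a left inverse of $\lift$, which forces $\lift$ to be injective; combined with surjectivity onto $\im(\lift)$ this makes $\lift$ a bijection whose (two-sided) inverse on $\im(\lift)$ is $\ilift$. The only mild subtlety worth spelling out is well-definedness at shared knots: if $x=t^{l+1}=t^{(l+1)}$ lies in both $[t^l,t^{l+1}]$ and $[t^{l+1},t^{l+2}]$, both branches of \eqref{eq:def-lift} yield $e^{l+1}$, so $\lift$ is genuinely a function, and the computation above is branch-independent. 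I do not expect a serious obstacle here; the ``hard'' part is merely being careful and explicit about the boundary index ambiguity so that none of the three set-theoretic steps (membership, reverse membership, inverse identity) silently assumes a unique representation.
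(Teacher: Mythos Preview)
Your proof is correct and follows essentially the same approach as the paper's (which proves the general $d$-dimensional version and specializes): both verify the two inclusions directly from the defining formula and construct the preimage of a given $z\in S$ as $x=\ilift(z)$. The only cosmetic difference is that you obtain injectivity from the left-inverse identity $\ilift(\lift(x))=x$, whereas the paper argues injectivity directly via uniqueness of the coefficients $\lambda_l$ and then separately checks $\lift(\ilift(z))=z$; your packaging is arguably a bit cleaner, and your explicit treatment of the knot ambiguity is a welcome addition.
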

Since the image of the range of $\lift_y$ is not convex, we relax it to a convex set, actually to the smallest convex set that contains $\im(\lift_y)$, the convex hull of $\im(\lift_y)$.
\begin{lemma}[{Convex Hull of the Range of $\lift$}] \label{lem:conv-relax}
  The convex hull $\conv(\im(\lift))$ of $\im(\lift)$ is the unit simplex in $\R^L$.
\end{lemma}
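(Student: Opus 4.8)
The plan is to show both inclusions: $\conv(\im(\lift)) \subseteq \Delta^L$ and $\Delta^L \subseteq \conv(\im(\lift))$, where $\Delta^L := \{z\in[0,1]^L : \sum_{l=1}^L z_l = 1\}$ denotes the standard unit simplex in $\R^L$. The forward inclusion is immediate from Lemma~\ref{lem:char-range-lift}: every $z\in\im(\lift)$ satisfies $z_k\geq 0$ for all $k$ and $\sum_k z_k = z_l + z_{l+1} = 1$, so $\im(\lift)\subseteq\Delta^L$; since $\Delta^L$ is convex and $\conv(\im(\lift))$ is the smallest convex set containing $\im(\lift)$, we get $\conv(\im(\lift))\subseteq\Delta^L$.

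For the reverse inclusion, I would recall that the unit simplex $\Delta^L$ is itself the convex hull of the standard basis vectors $\{e^1,\ldots,e^L\}$. Hence it suffices to observe that each $e^i\in\im(\lift)$, because then $\conv(\im(\lift))\supseteq\conv(\{e^1,\ldots,e^L\}) = \Delta^L$. To see $e^i\in\im(\lift)$, simply evaluate the lifting at the knots: $\lift(t^i) = e^i$ for $i=1,\ldots,L-1$ by taking $l=i$ and $\lambda_i(t^i)=0$ in \eqref{eq:def-lift}, and $\lift(t^L) = e^L$ by taking $l=L-1$ and $\lambda_{L-1}(t^L)=1$. Combining the two inclusions yields $\conv(\im(\lift)) = \Delta^L$.

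There is essentially no obstacle here; the only mild subtlety worth stating cleanly is the well-known fact that the unit simplex equals $\conv(\{e^1,\ldots,e^L\})$ — if one wants to be fully self-contained, this follows since any $z\in\Delta^L$ is literally written as $z = \sum_{l=1}^L z_l e^l$ with $z_l\geq 0$ and $\sum_l z_l = 1$, which is by definition a convex combination of the $e^l$. So the proof reduces to two one-line observations plus this standard identity, and I would present it in that order: first the easy containment via Lemma~\ref{lem:char-range-lift}, then the reverse containment via evaluation at the knots.
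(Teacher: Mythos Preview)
Your proposal is correct and follows essentially the same argument as the paper: show $\im(\lift)\subset\Delta^L$ via the characterization of $\im(\lift)$, then obtain the reverse inclusion by noting $\{e^1,\ldots,e^L\}\subset\im(\lift)$ and using $\Delta^L=\conv\{e^1,\ldots,e^L\}$. If anything, your version is slightly more explicit in verifying $\lift(t^i)=e^i$, which the paper simply asserts.
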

\begin{figure}[t]
  \begin{center}
    \begin{tabular}{cccc}
    \includegraphics[width = 0.23\textwidth]{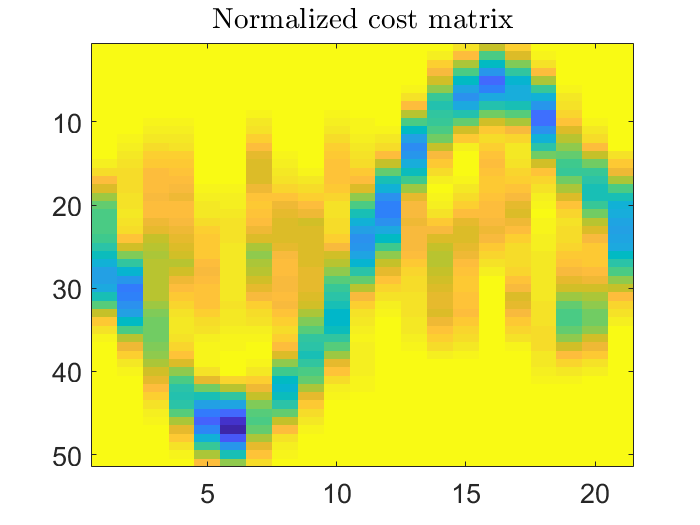}&
    \includegraphics[width = 0.23\textwidth]{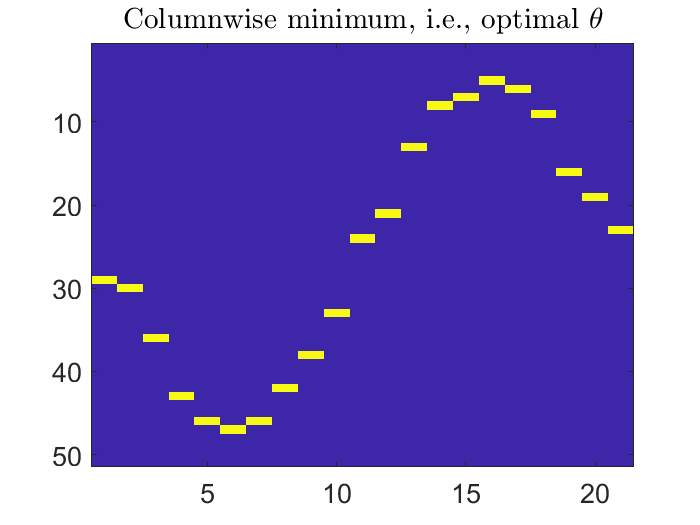}&
    \includegraphics[width = 0.23\textwidth]{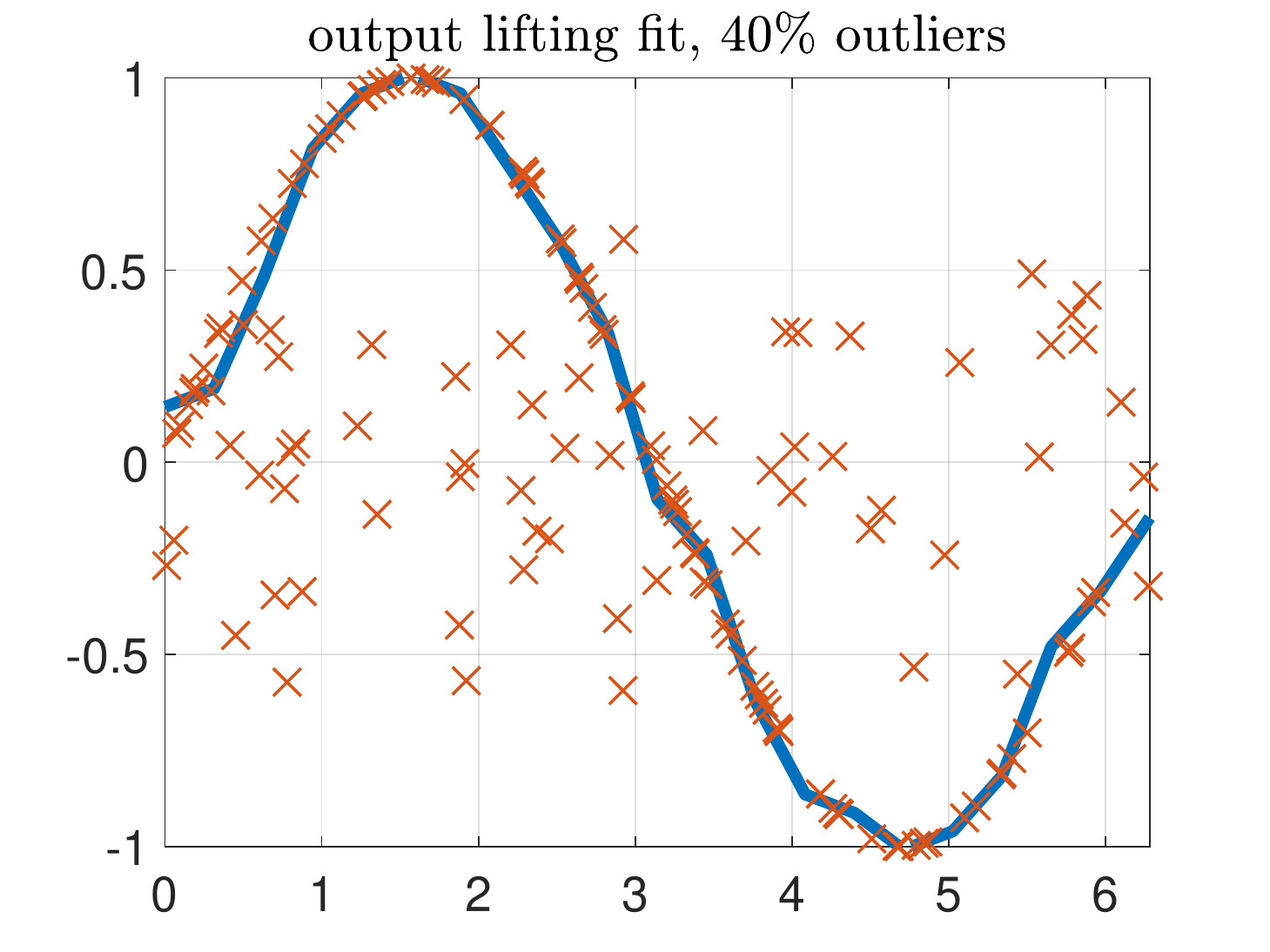}&
    \includegraphics[width = 0.23\textwidth]{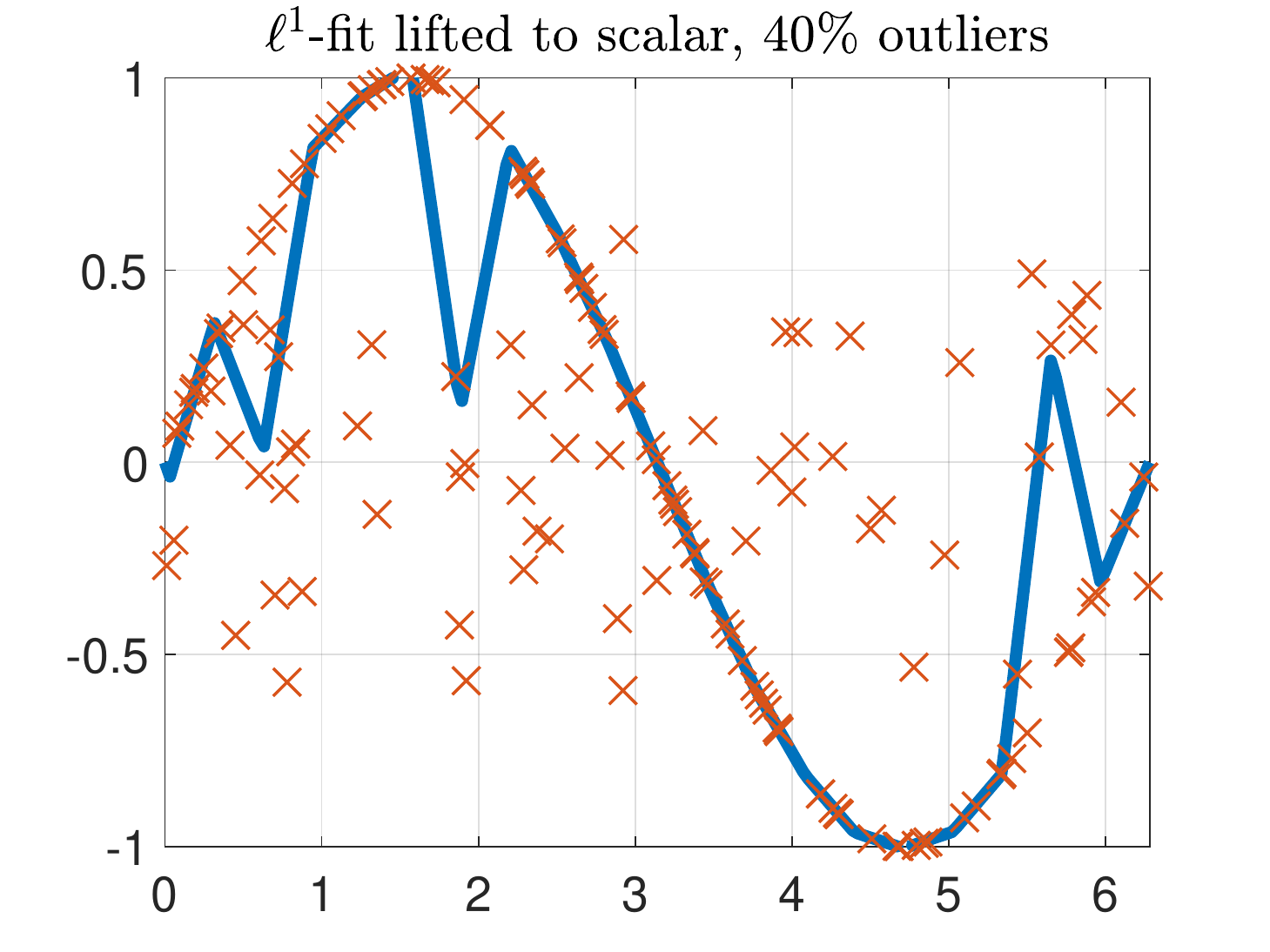}\\
    (a) Cost matrix $c$ & (b) Optimal $\theta$ & (c) Resulting fit & (d) Best $\ell^1$ fit \\
    \includegraphics[width = 0.23\textwidth]{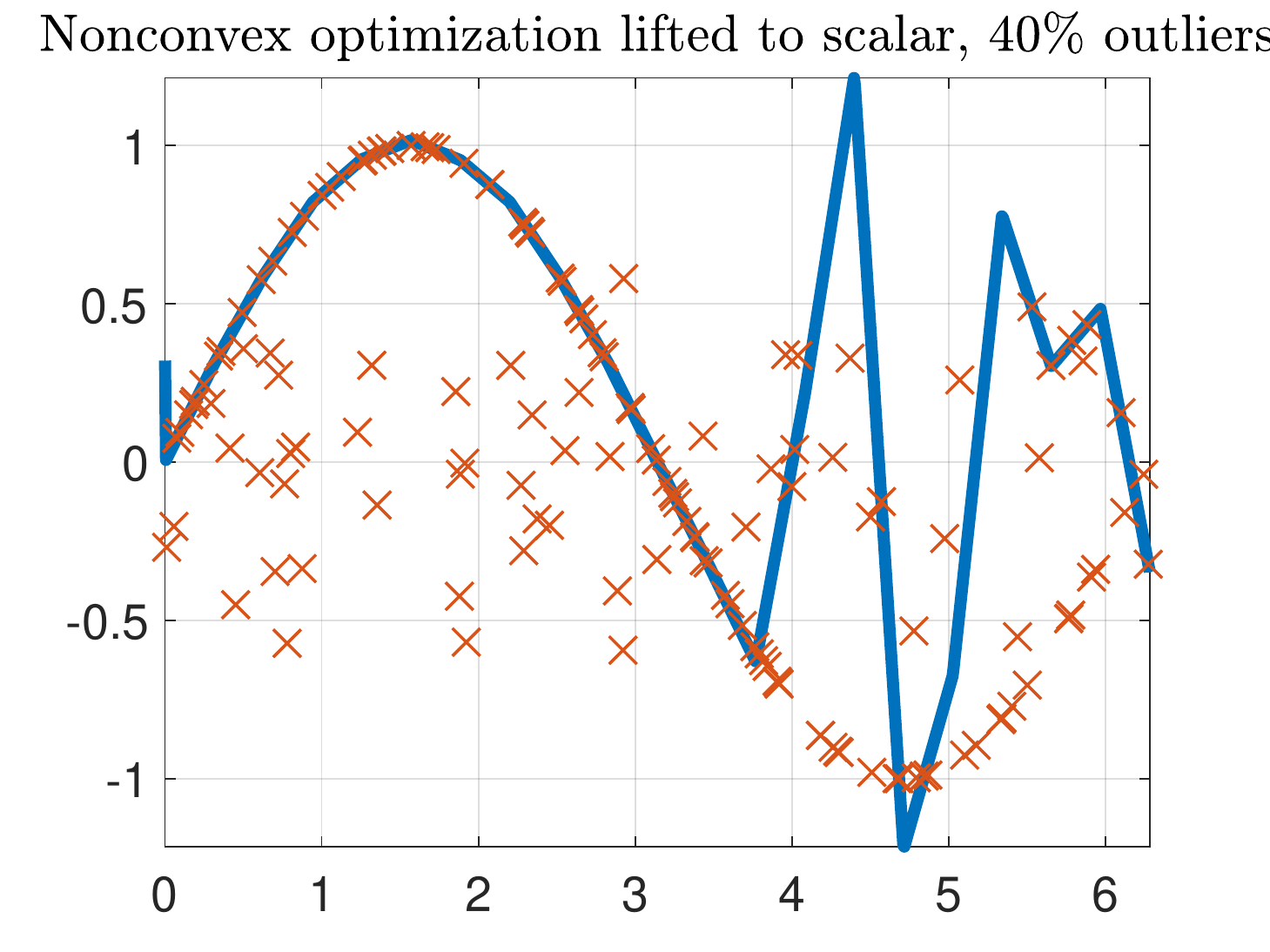}&
    \includegraphics[width = 0.23\textwidth]{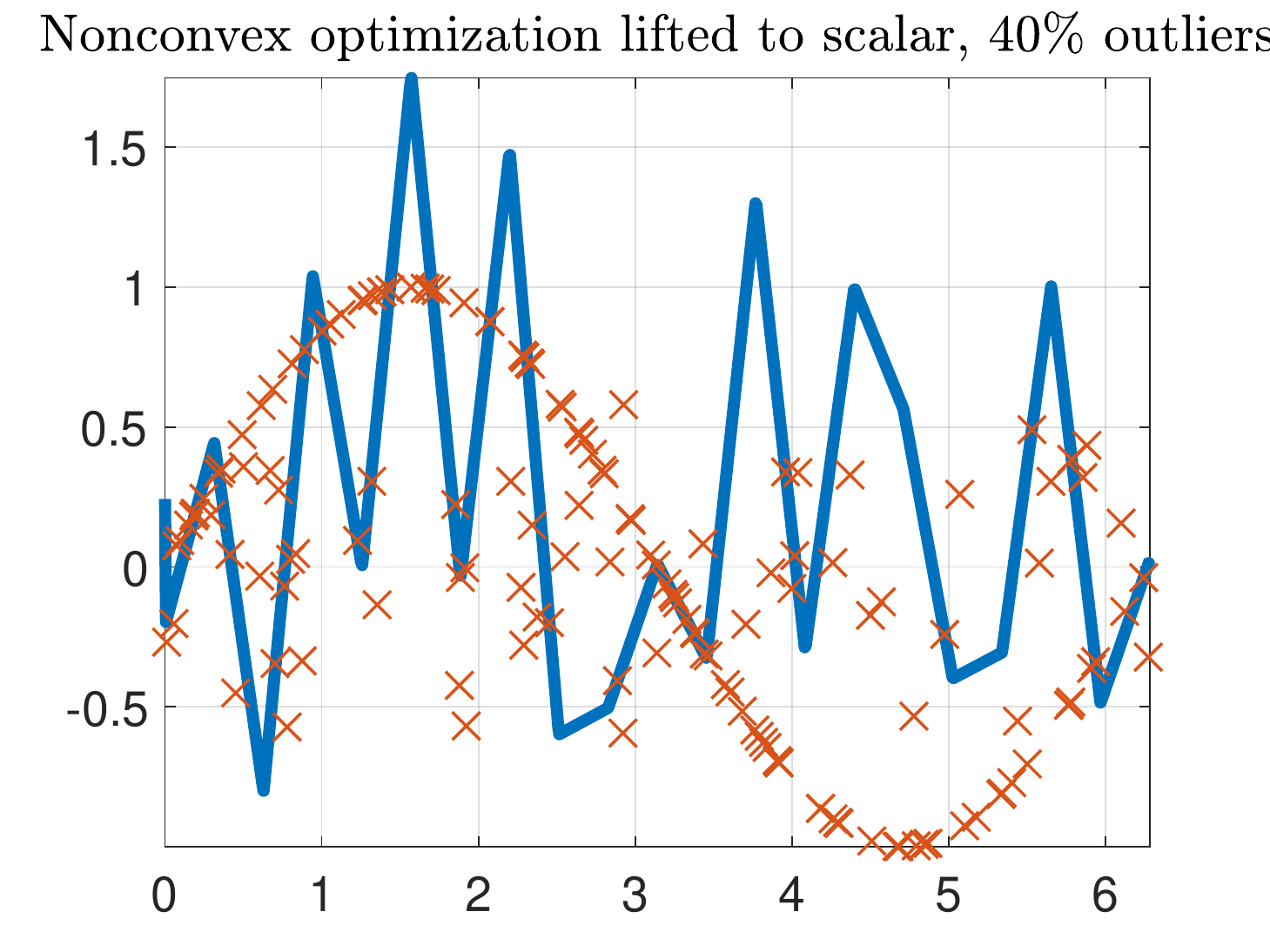}&
    \includegraphics[width = 0.23\textwidth]{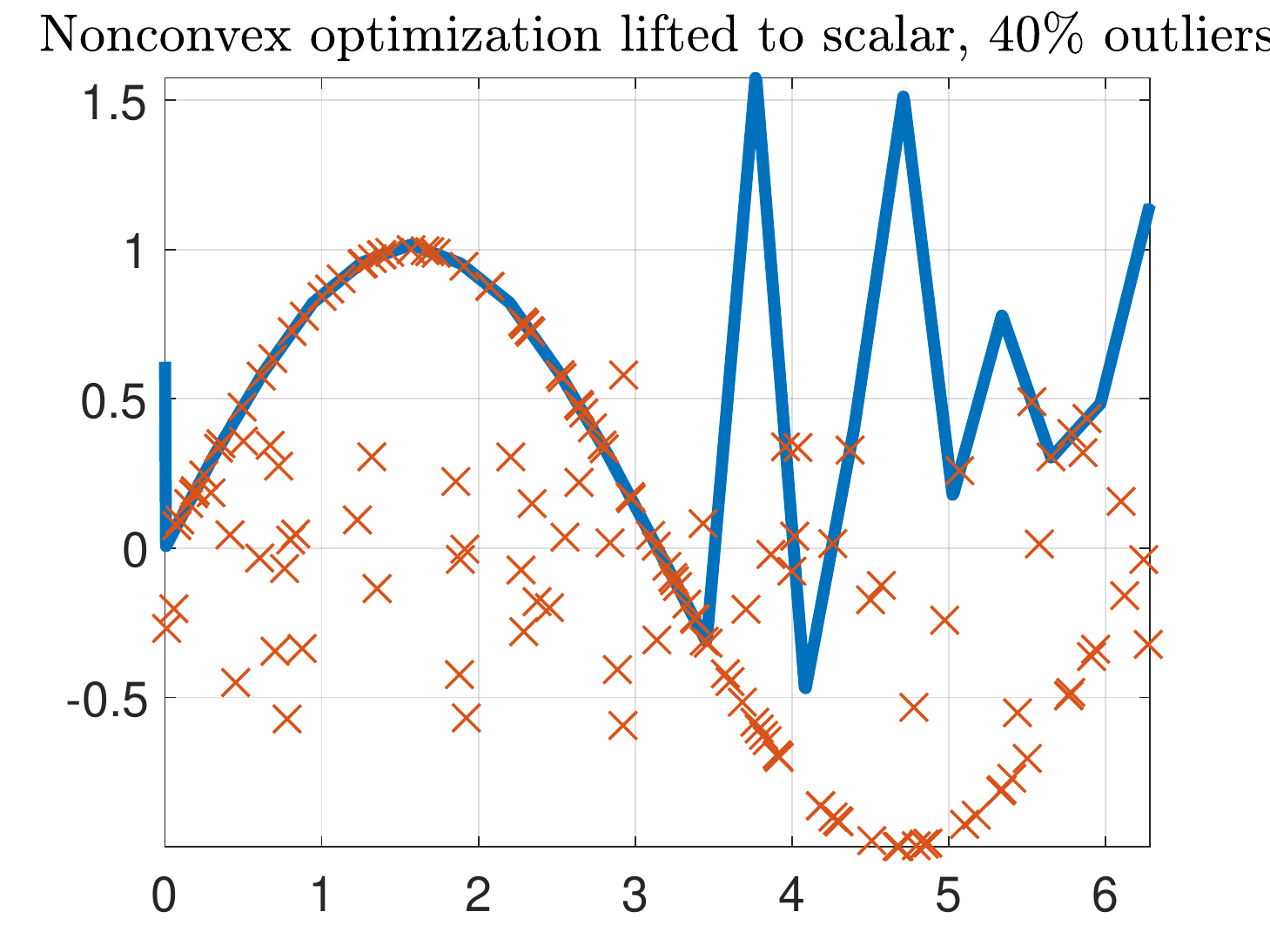}&
    \includegraphics[width = 0.23\textwidth]{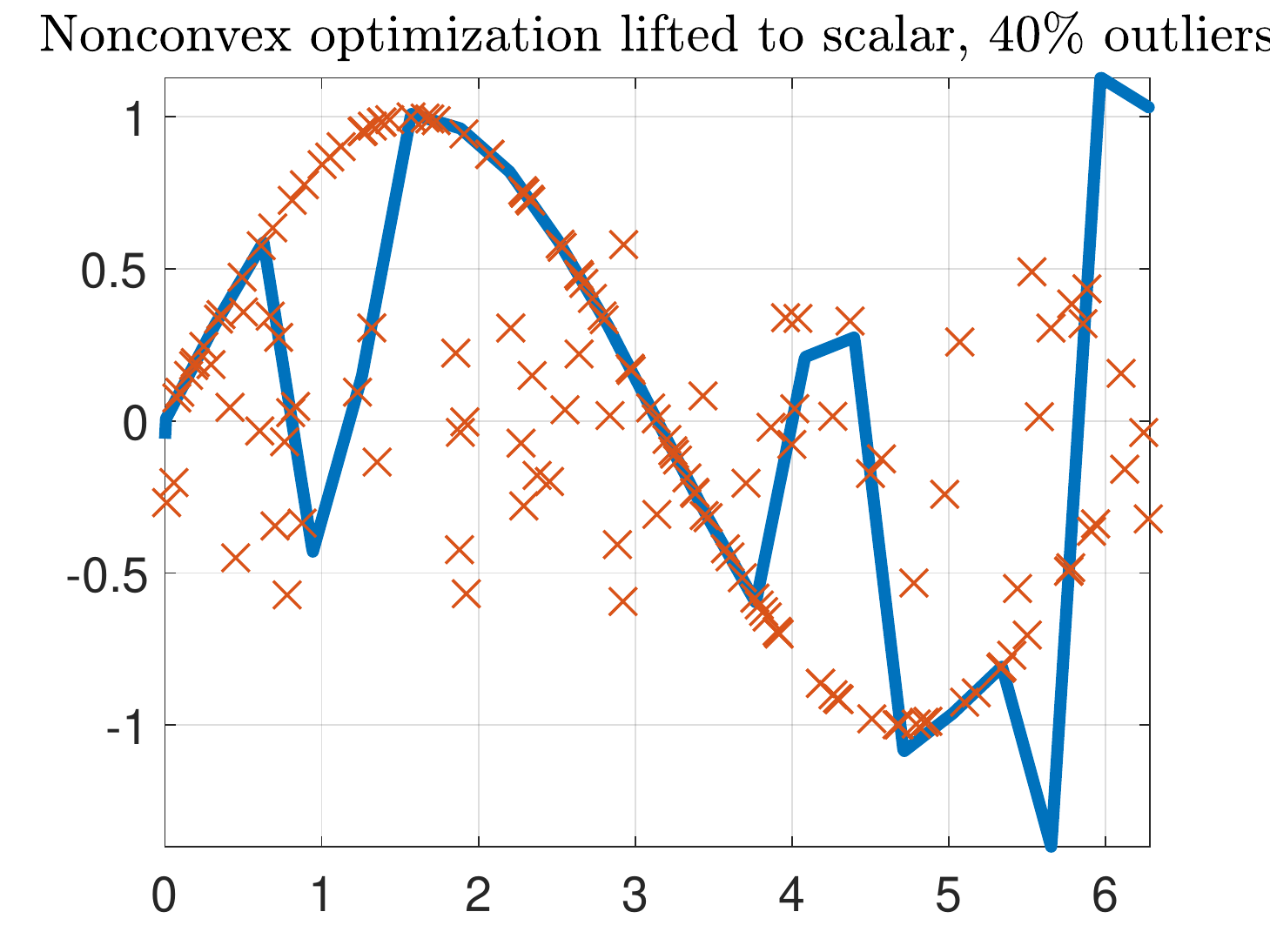}\\
    (e) Non-convex fit 1 & (f) Non-convex fit 2 & (g) Non-convex fit 3 & (h) Non-convex fit 4
    \end{tabular}
  \end{center}
  \caption{\label{fig:liftingTheOutput}Visualization of Example~\ref{ex:lift-output-robust-fit} for a regression problem with 40\% outliers. Our lifting of a (non-convex) truncated linear loss to a convex optimization problem robustly fits the function nearly optimally (see (c)), whereas the most robust convex formulation (without lifting) is severely perturbed by the outliers (see (d)). Trying to optimize the non-convex cost function directly yields different results based on the initialization of the weights and is prone to getting stuck in suboptimal local minima, see (e)-(h).}
\end{figure} 
Putting the results together, we obtain a tight convex approximation of the (possibly non-convex) loss function $\loss_y(x)$ by $\ilift_y(z)$ with $z\in \im(\lift_y)$, i.e. instead of considering a network $\net_\theta(x)$ and evaluate $\loss_y(\net_\theta(x))$, we consider a network $\widetilde\net_\theta(x)$ that predicts a point in $\conv(\im(\lift_y))\subset\R^{L_y}$ and evaluate the loss $\ilift_y(\widetilde\net_\theta(x))$. As it is hard to incorporate range-constraints into the network's prediction, we compose the network with a lifting layer $\lift_x$, i.e. we consider $\ilift_y(\tilde\theta\lift_x(\widetilde\net_\theta(x)))$ with $\tilde\theta\in \R^{L_y\times L_x}$, for which simpler constraints may be derived that can be handled easily. The following proposition states the convexity of the relaxed problem w.r.t. the parameters of the loss layer $\tilde \theta$ for a non-convex loss function $\loss_y$.

\begin{proposition}[Convex Relaxation of a simple non-convex Regression Problem] \label{prop:lift-output}
  Let $(x_i,y_i)\in [\lI,\rI]\times [\lI_y,\rI_y]$ be training data, $i=1,\ldots,N$. Moreover, let $\lift_y$ be a lifting of the common image $[\lI_y,\rI_y]$ of the loss functions $\loss_{y_i}$, $i=1,\ldots,N$, and $\lift_x$ is the lifting of the domain of $\loss_{y}$. Then
  \begin{equation} \label{eq:conv-rel-of-non-convex}
    \min_{\theta } \, \sum_{i=1}^N \ilift_y(\theta \lift_x(x_i)) \quad\st\  \theta_{p,q} \geq 0,\, \sum_{p=1}^{L_y} \theta_{p,q} = 1,\, 
    \begin{cases}
      \forall p=1,\ldots,L_y\,, \\ 
      \forall q=1,\ldots,L_x \,. 
    \end{cases}
  \end{equation}
is a convex relaxation of the (non-convex) loss function, and the constraints guarantee that $\theta \lift_x(x_i)\in \conv(\im(\lift_y))$.

  The objective in \eqref{eq:conv-rel-of-non-convex} is linear (w.r.t. $\theta$) and can be written as
  \begin{equation} \label{eq:lifted-output-cost-matrix}
      \sum_{i=1}^N \ilift_y(\theta \lift_x(x_i)) 
      = \sum_{i=1}^N \sum_{p=1}^{L_y} \sum_{q=1}^{L_x} \theta_{p,q} \lift_x(x_i)_q t^p_y 
      =:  \sum_{p=1}^{L_y} \sum_{q=1}^{L_x}  c_{p,q}  \theta_{p,q} 
  \end{equation}
  where $c:=\sum_{i=1}^N t_y \lift_x(x_i)^\top$, with $t_y := (t^1_y, \ldots, t^{L_y}_y)^\top$, is the cost matrix for assigning the loss value $t_y^p$ to the inputs $x_i$.

  Moreover, the closed-form solution of \eqref{eq:conv-rel-of-non-convex} is given for all $q=1,\ldots, L_x$ by $\theta_{p,q} = 1$, if the index $p$ minimizes $c_{p,q}$, and $\theta_{p,q}=0$ otherwise.
\end{proposition}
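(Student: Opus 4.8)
The statement bundles three claims — that \eqref{eq:conv-rel-of-non-convex} is a convex problem and a genuine relaxation, that its objective collapses to the bilinear-in-data form \eqref{eq:lifted-output-cost-matrix}, and that the stated $\theta$ is optimal — and I would establish them in that order, using Lemma~\ref{lem:char-range-lift} and Lemma~\ref{lem:conv-relax}. Convexity is immediate: the feasible set is the Cartesian product over $q=1,\ldots,L_x$ of unit simplices in $\R^{L_y}$, since the constraints $\theta_{p,q}\ge 0$ and $\sum_{p}\theta_{p,q}=1$ couple only the entries of a fixed column $q$, hence it is convex; and $\ilift_y$ is a linear functional while $\theta\mapsto\theta\lift_x(x_i)$ is linear, so the objective is linear in $\theta$. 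Minimizing a linear objective over a convex set is a convex program.

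For the ``relaxation'' claim I would start from the exact lifted formulation — the same minimization but with the extra (non-convex) requirement $\theta\lift_x(x_i)\in\im(\lift_y)$ for every $i$ — in which, by the bijection part of Lemma~\ref{lem:char-range-lift}, $\ilift_y(\theta\lift_x(x_i))$ equals the true loss value attached to $\theta\lift_x(x_i)$, so this constrained problem is an exact reformulation. Replacing the non-convex constraint set $\im(\lift_y)$ by its convex hull, which is the unit simplex by Lemma~\ref{lem:conv-relax}, yields the convex relaxation \eqref{eq:conv-rel-of-non-convex}. It then remains to verify that the linear constraints on $\theta$ in \eqref{eq:conv-rel-of-non-convex} force $\theta\lift_x(x_i)\in\conv(\im(\lift_y))$ for all $i$ simultaneously. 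For this I would use that, by Lemma~\ref{lem:char-range-lift}, every $\lift_x(x_i)$ has non-negative entries summing to one; then $(\theta\lift_x(x_i))_p=\sum_q\theta_{p,q}\lift_x(x_i)_q\ge 0$ because both factors are non-negative, and $\sum_p(\theta\lift_x(x_i))_p=\sum_q\lift_x(x_i)_q\big(\sum_p\theta_{p,q}\big)=\sum_q\lift_x(x_i)_q=1$, so the prediction lies in the unit simplex $=\conv(\im(\lift_y))$, as required (the constraint on $\theta$ is in fact sufficient, and generically necessary once every column index is ``active'' for some $x_i$).

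The rewriting \eqref{eq:lifted-output-cost-matrix} is a one-line expansion: by the definition of $\ilift_y$, $\ilift_y(\theta\lift_x(x_i))=\sum_p t^p_y(\theta\lift_x(x_i))_p=\sum_p\sum_q\theta_{p,q}\lift_x(x_i)_q t^p_y$, and summing over $i$ while pulling each $\theta_{p,q}$ out of the $i$-sum reads off the coefficient $c_{p,q}=t^p_y\sum_i\lift_x(x_i)_q$, i.e.\ $c=\sum_i t_y\,\lift_x(x_i)^\top$. For the closed form I would use that, written as $\sum_q\sum_p c_{p,q}\theta_{p,q}$, the program decouples into $L_x$ independent subproblems, one per column $q$: minimize the linear functional $\sum_p c_{p,q}\theta_{p,q}$ over the unit simplex $\{\theta_{\cdot,q}\ge 0,\ \sum_p\theta_{p,q}=1\}$. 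Each subproblem obeys $\sum_p c_{p,q}\theta_{p,q}\ge(\min_p c_{p,q})\sum_p\theta_{p,q}=\min_p c_{p,q}$, with equality exactly when all the mass sits on indices attaining the column minimum; hence putting $\theta_{p,q}=1$ at one minimizing $p$ and $0$ elsewhere is feasible and attains the bound, so it is optimal.

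I do not expect a real obstacle here: the proof is bookkeeping together with the textbook fact that a linear program over a simplex is solved at a vertex. The one place that needs care is the relaxation narrative — making precise that the \emph{only} non-convexity discarded is the membership constraint $\im(\lift_y)\rightarrow\conv(\im(\lift_y))$, and that the simple column-stochasticity constraints on $\theta$ in \eqref{eq:conv-rel-of-non-convex} are the right encoding of this. A minor point worth one sentence is non-uniqueness: when a column of $c$ has ties, the stated $\theta$ is one optimal solution among a whole face of optima.
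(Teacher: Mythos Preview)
Your proposal is correct and follows essentially the same route as the paper's proof: both establish convexity from linearity of the objective plus simplex constraints, verify $\theta\lift_x(x_i)\in\conv(\im(\lift_y))$ via the same nonnegativity and column-sum computation using Lemmas~\ref{lem:char-range-lift} and~\ref{lem:conv-relax}, obtain \eqref{eq:lifted-output-cost-matrix} by direct expansion of $\ilift_y$, and derive the closed form by decoupling across columns $q$ and minimizing each linear functional over its simplex. Your write-up is in fact more explicit than the paper's (which calls several steps ``obvious''), and your remark on ties is a welcome clarification the paper omits.
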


\begin{example}[Robust fitting] \label{ex:lift-output-robust-fit}
For illustrative purposes of the advantages of this section, we consider a regression problem with 40\% outliers as visualized in Figure~\ref{fig:liftingTheOutput}(c) and~(d). Statistics motivates us to use a robust non-convex loss function.
Our lifting allows us to use a robust (non-convex) truncated linear loss in a convex optimization problem (Proposition~\ref{prop:lift-output}), which can easily ignore the outliers and achieve a nearly optimal fit (see Figure~\ref{fig:liftingTheOutput}(c)), whereas the most robust convex loss (without lifting), the $\ell_1$-loss, yields a solution that is severely perturbed by the outliers (see Figure~\ref{fig:liftingTheOutput}(d)). The cost matrix $c$ from \eqref{eq:lifted-output-cost-matrix} that represents the non-convex loss (of this example) is shown in Figure~\ref{fig:liftingTheOutput}(a) and the computed optimal $\theta$ is visualized in Figure~\ref{fig:liftingTheOutput}(b). For comparison purposes we also show the results of a direct (gradient descent + momentum) optimization of the truncated linear costs with four different initial weights chosen from a zero mean Gaussian distribution. As we can see the results greatly differ for different initializations and always got stuck in suboptimal local minima. 
\end{example}

\section{Numerical Experiments}
\label{sec:Results}
In this section we provide synthetic numerical experiments to illustrate the behavior of lifting layers on simple examples, before moving to real-world imaging applications. We implemented lifting layers in MATLAB as well as in PyTorch and will make all code for reproducing the experiments available upon acceptance of this manuscript.

\subsection{Synthetic Examples}
The following results were obtained using a stochastic gradient descent (SGD) algorithm with a momentum of 0.9, using minibatches of size 128, and a learning rate of $0.1$. Furthermore, we use weight decay
 with a parameter of $10^{-4}$.

\subsubsection{1-D Fitting}

To illustrate our results of Proposition~\ref{prop:convex-regression-prob}, we first consider the example of fitting values $y_i = \sin(x_i)$ from input data $x_i$ sampled uniformly in $[0,2\pi]$. We compare the lifting-based architecture $\net_{\theta}(x) = \langle \theta, \lift(x)\rangle$ (Lift-Net) with the standard design architecture $\text{fc}_1(\sigma(\text{fc}_9(x)))$ (Std-Net), where $\sigma(x) = \max(x,0)$ applies coordinate-wise and $\text{fc}_n$ denotes a fully connected layer with $n$ output neurons. Figure~\ref{fig:1dExample} shows the resulting functions after 25, 75, 200, and 2000 epochs of training.

\begin{figure}[h]
  \begin{center}
    \includegraphics[width=0.24\textwidth]{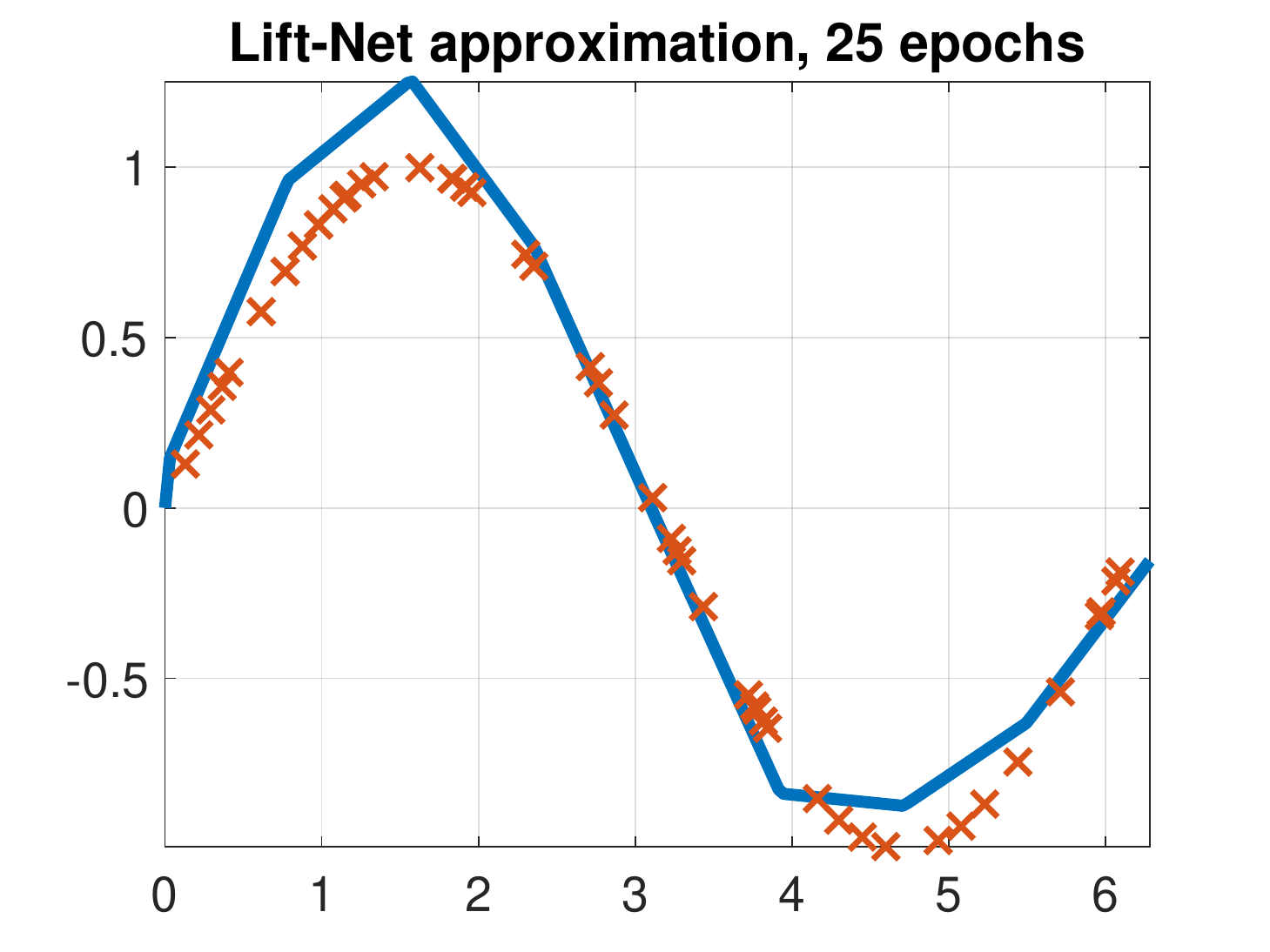}
    \includegraphics[width=0.24\textwidth]{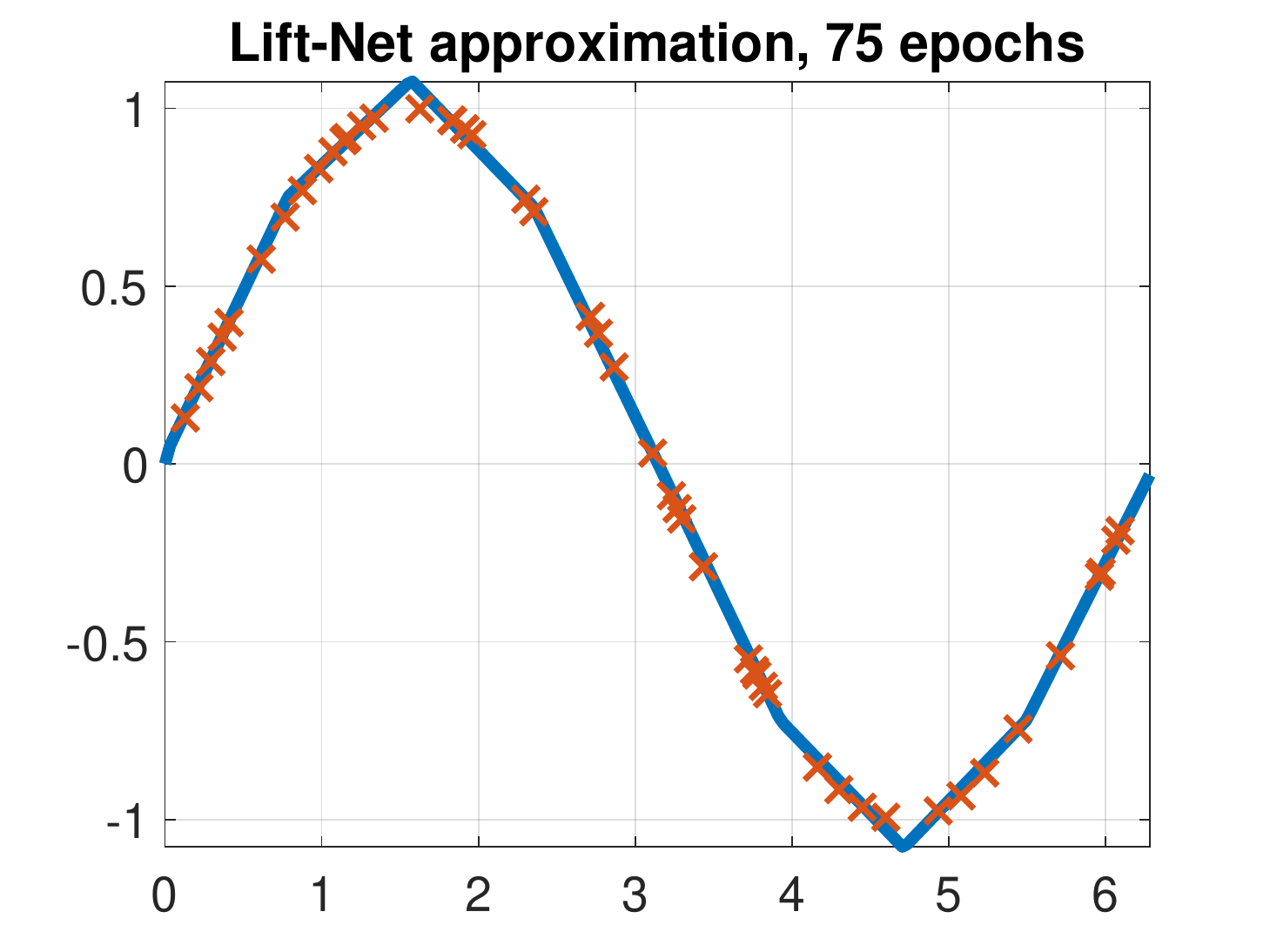}
    \includegraphics[width=0.24\textwidth]{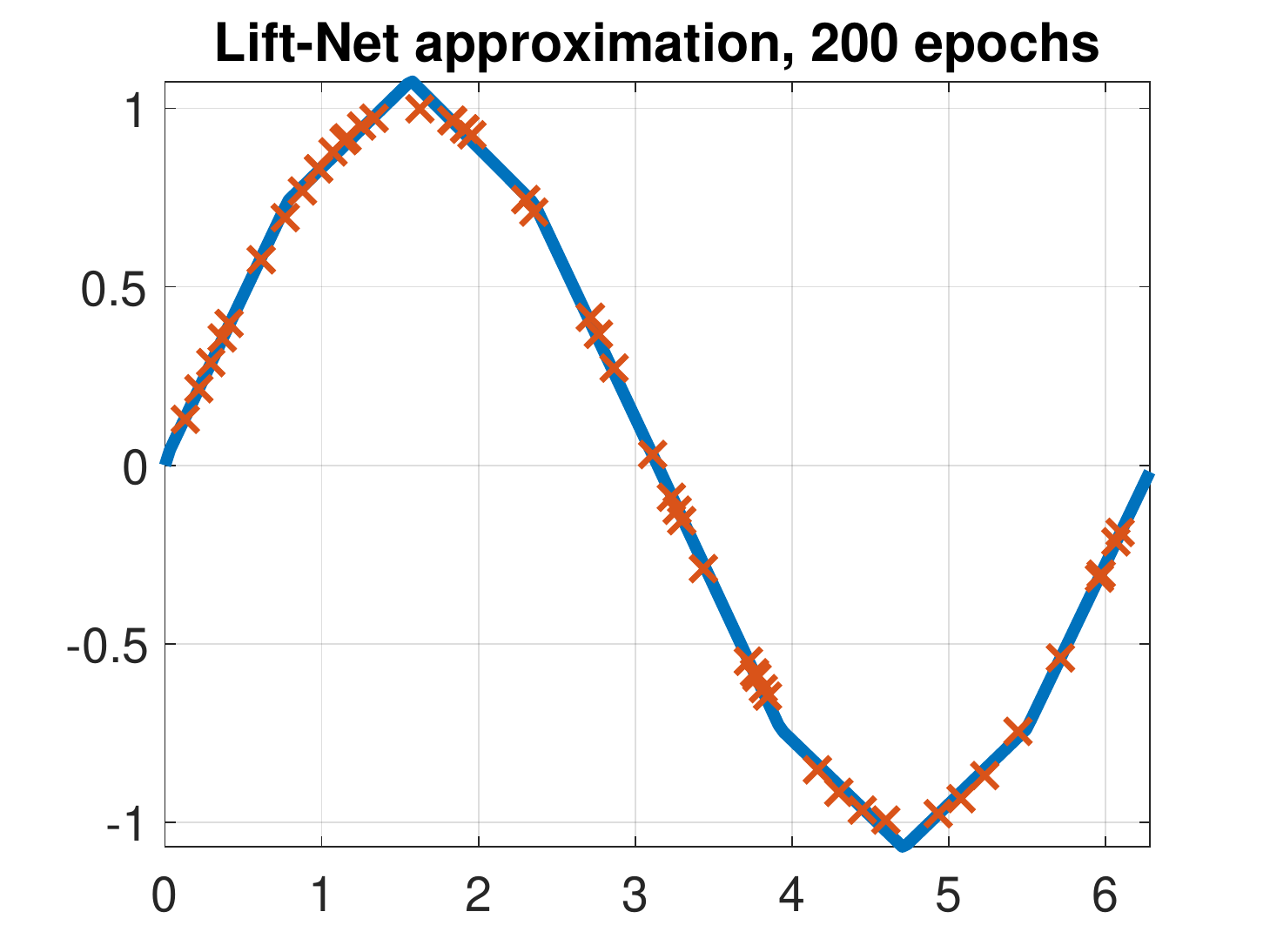}
    \includegraphics[width=0.24\textwidth]{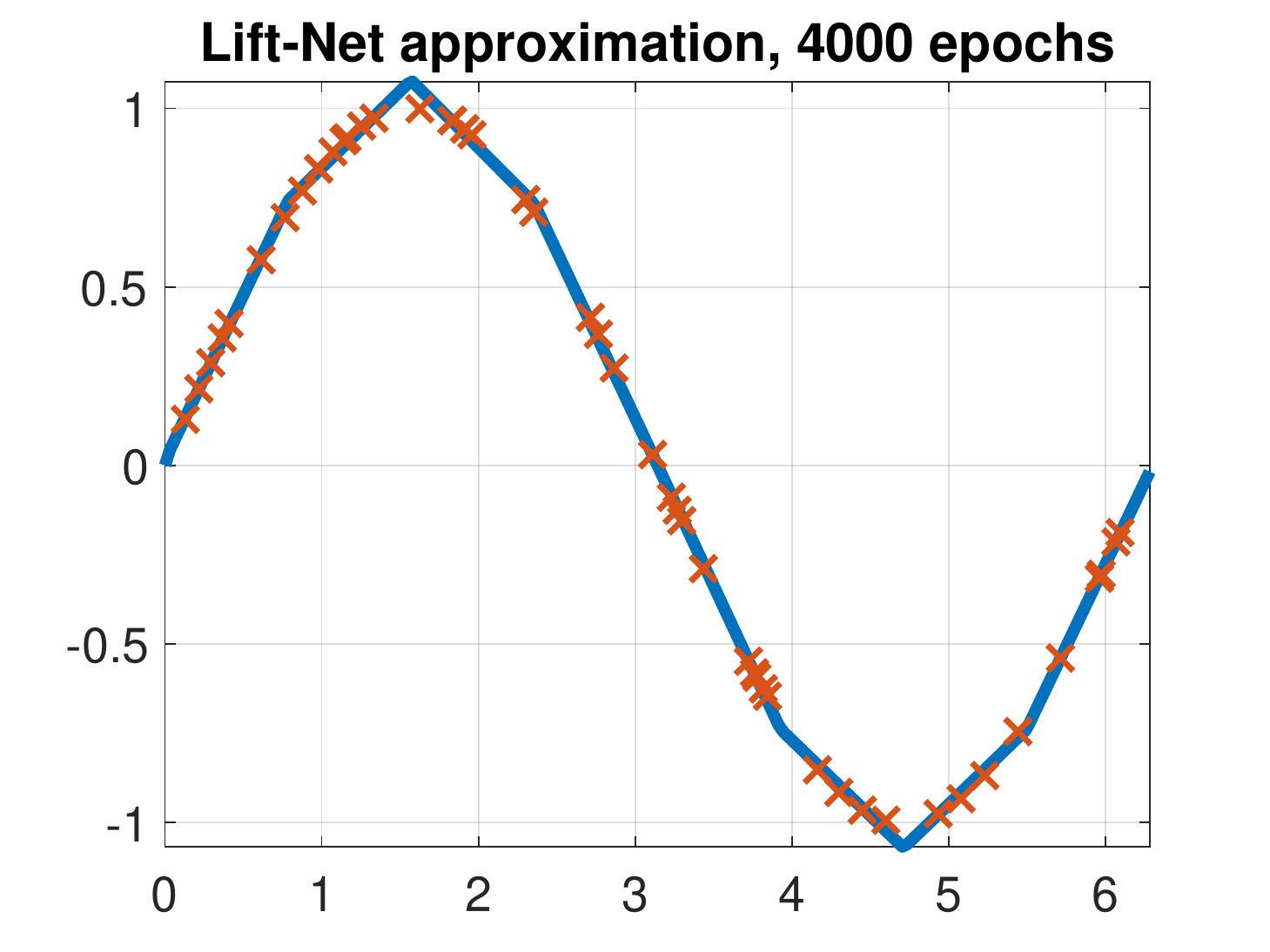}\\
    \includegraphics[width=0.24\textwidth]{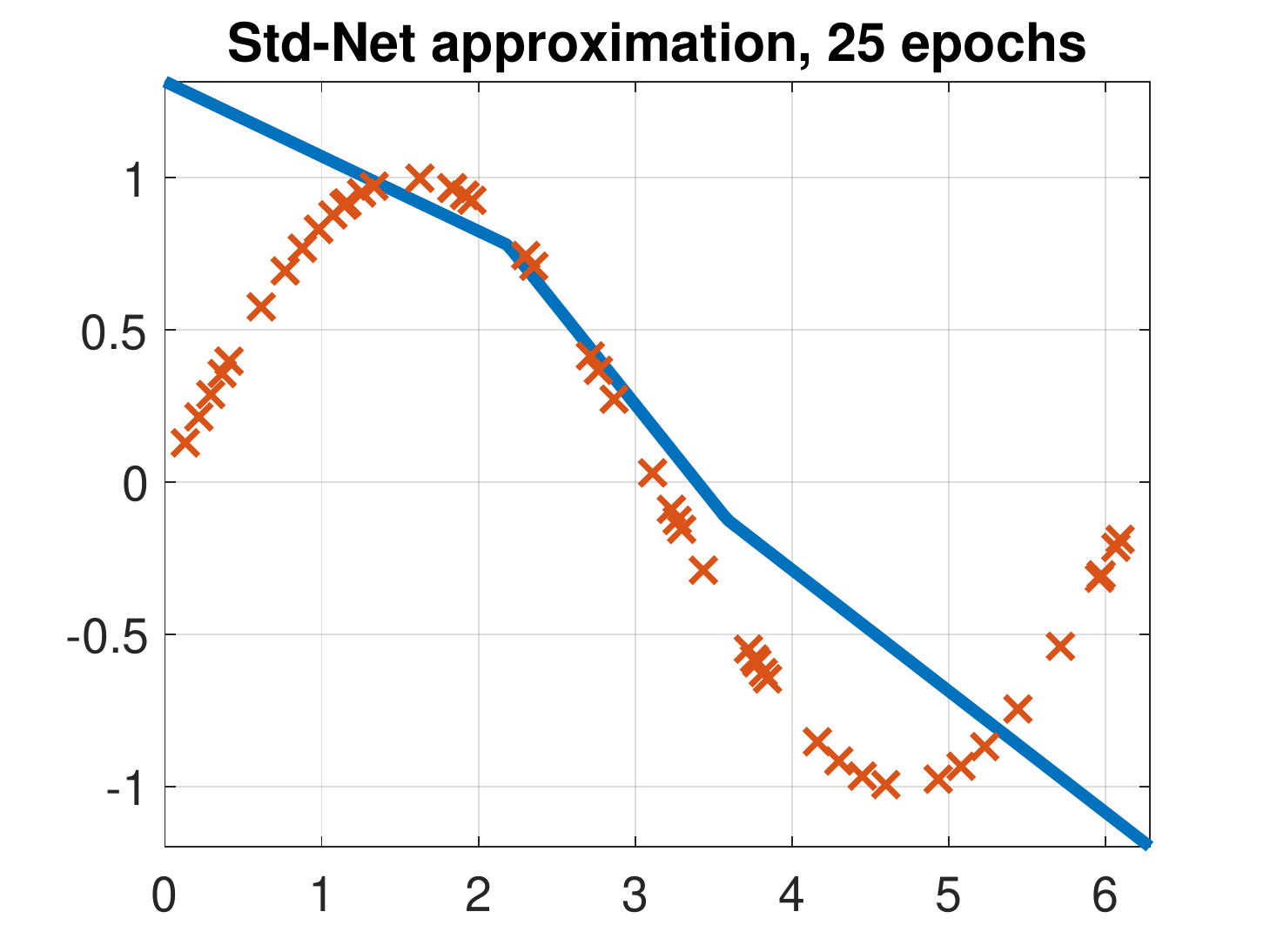}
    \includegraphics[width=0.24\textwidth]{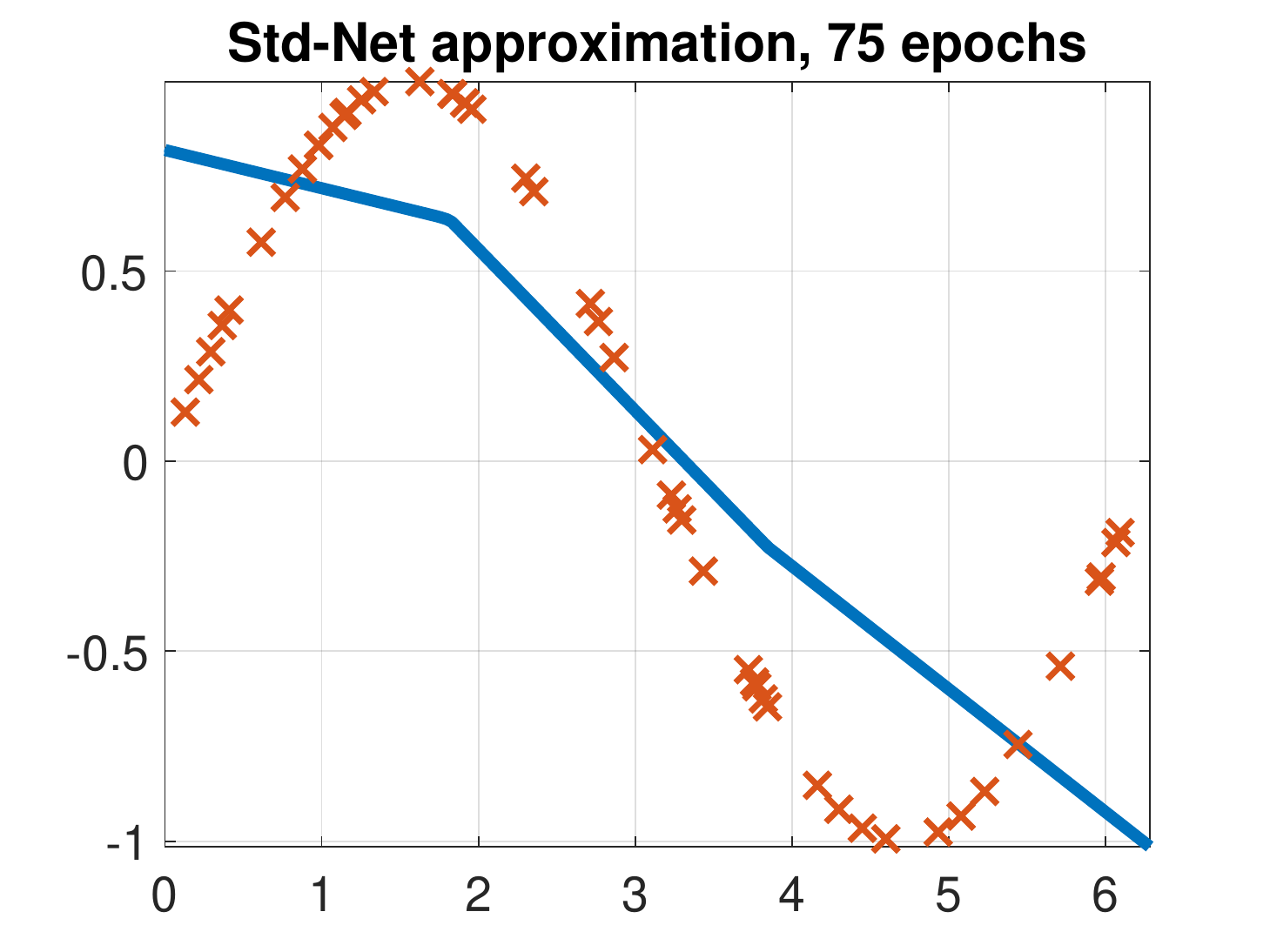}
    \includegraphics[width=0.24\textwidth]{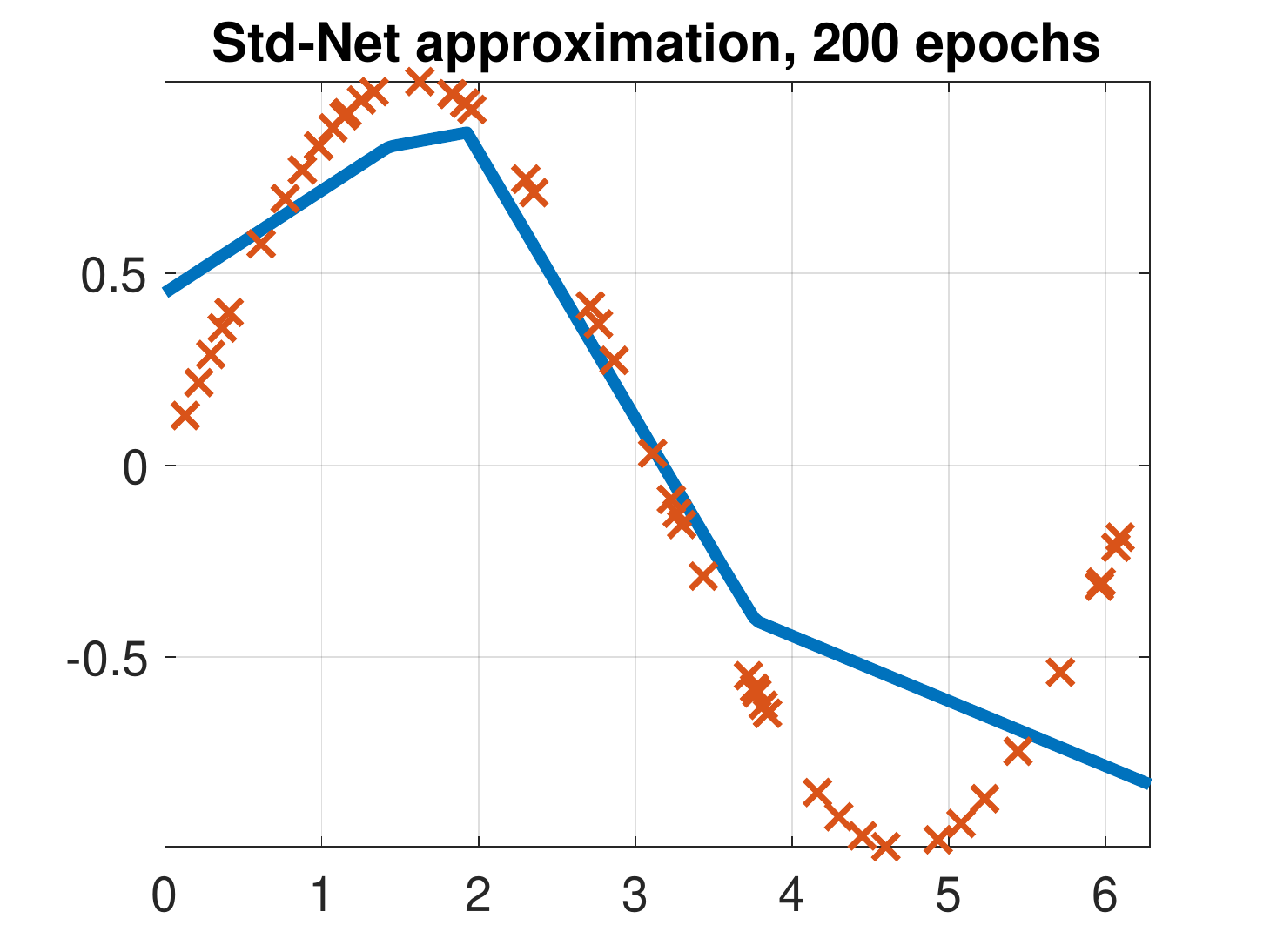}
    \includegraphics[width=0.24\textwidth]{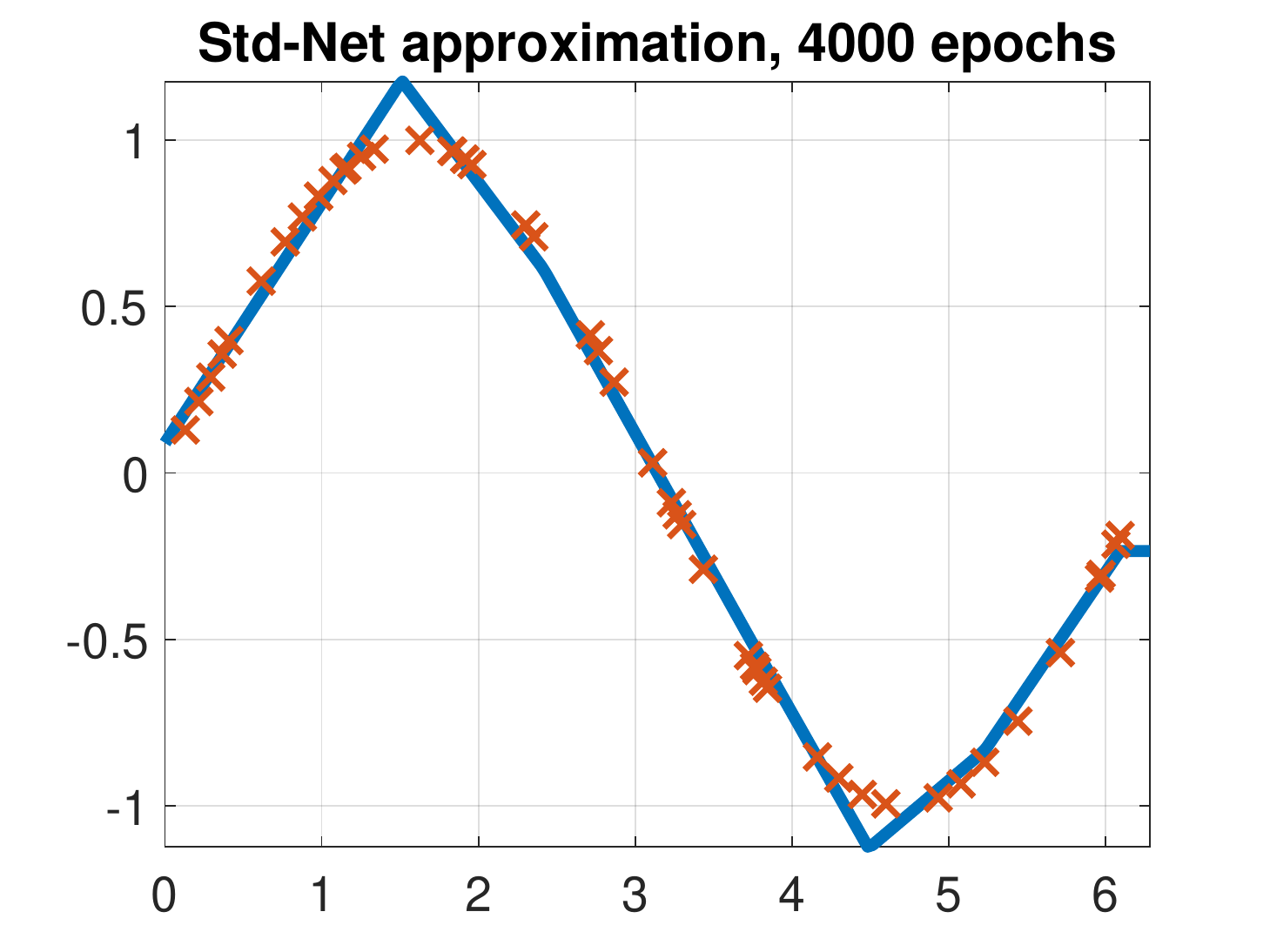}
  \end{center}
  \caption{\label{fig:1dExample}Illustrating the results of approximating a sine function on $[0,2\pi]$ with 50 training examples after different number of epochs. While the proposed architecture with lifting yields a convex problem for which SGD converges quickly (upper row), the standard architecture based on ReLUs yields an (ambiguous) non-convex problem which leads to slower convergence and a suboptimal local minimum after 4000 epochs (lower row). }
\end{figure}

\subsubsection{2-D Fitting}

\begin{figure}[h!]
  \begin{center}
    \includegraphics[width=0.3\textwidth]{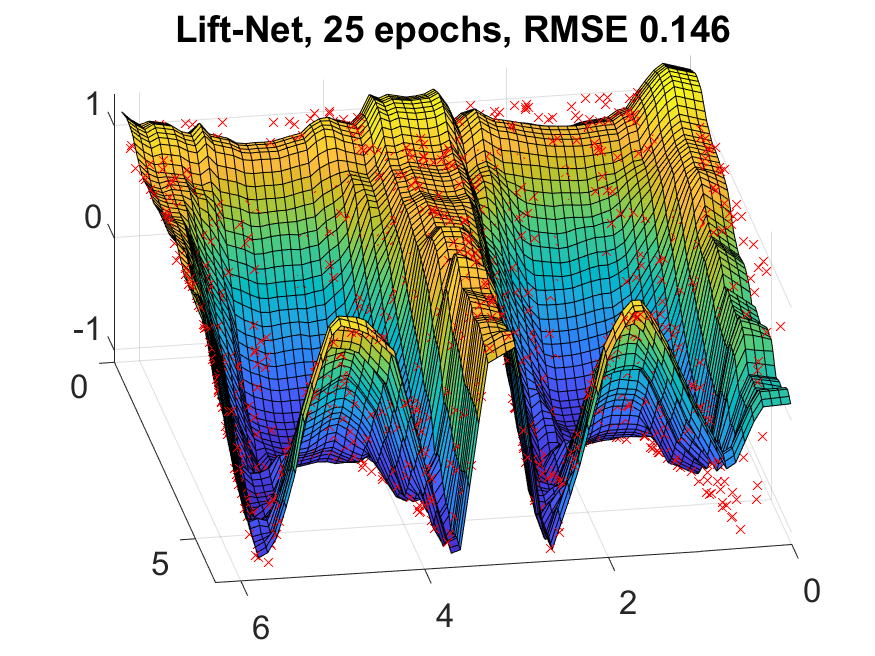}
    \includegraphics[width=0.3\textwidth]{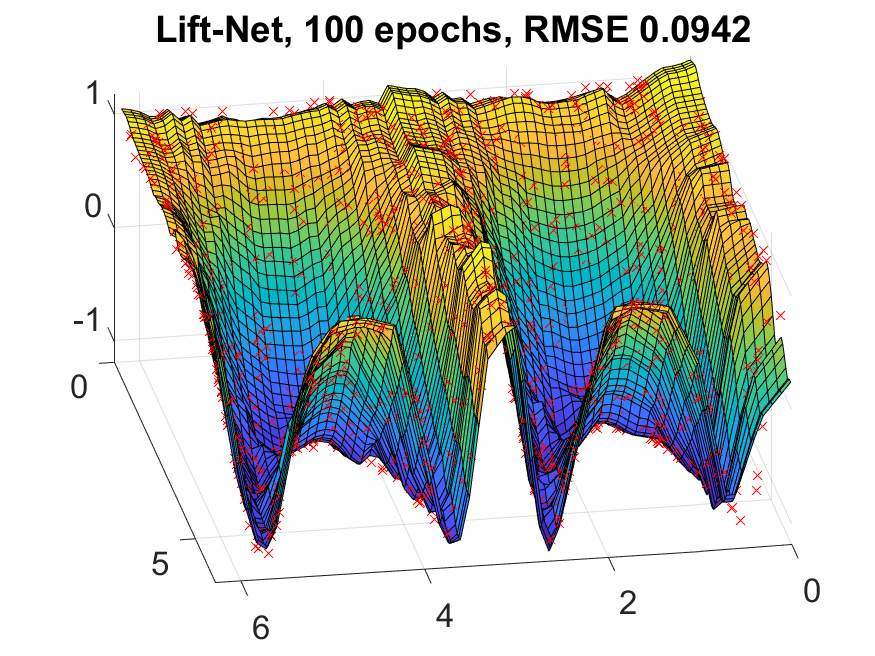}
    \includegraphics[width=0.3\textwidth]{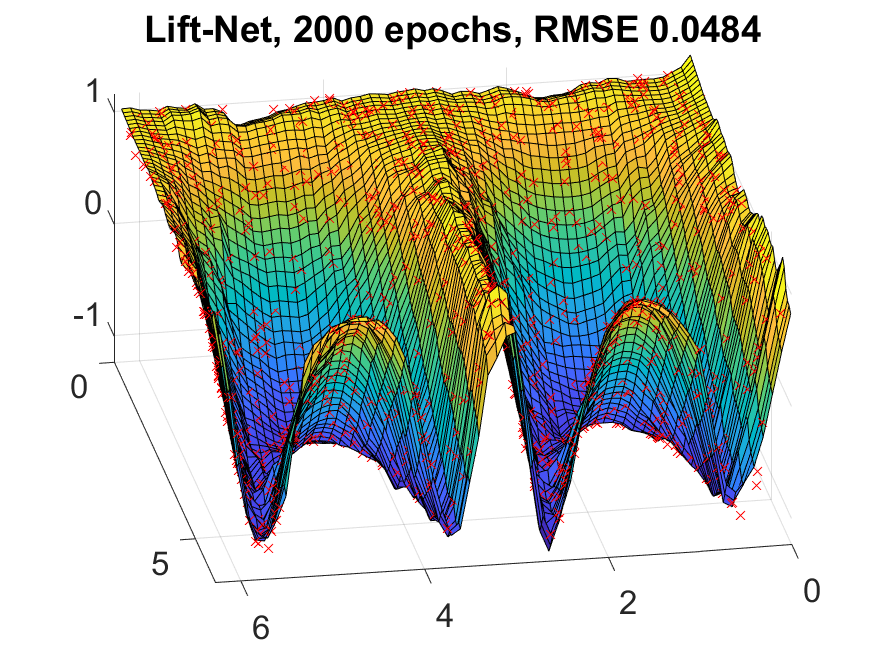}\\
    \includegraphics[width=0.3\textwidth]{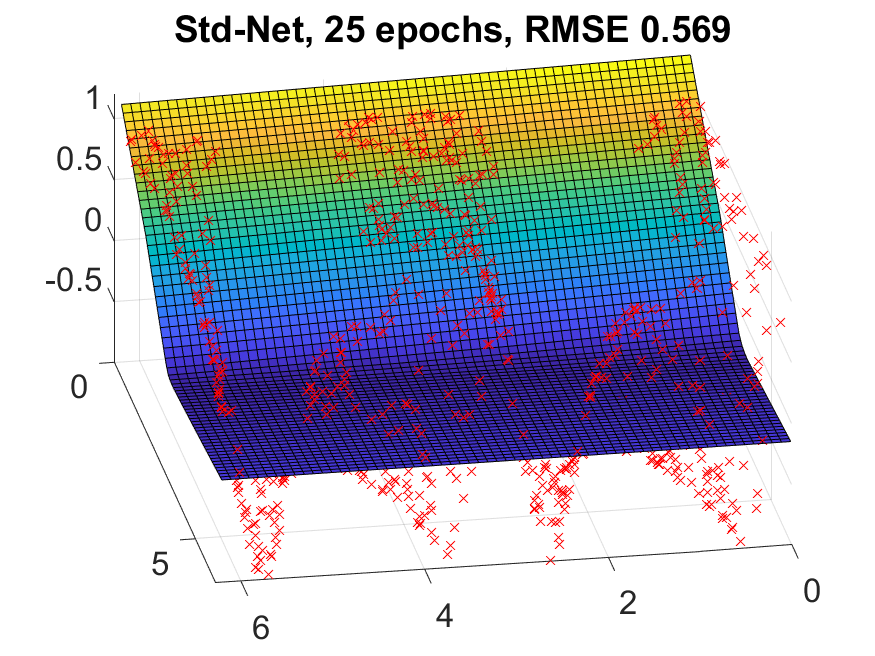}
    \includegraphics[width=0.3\textwidth]{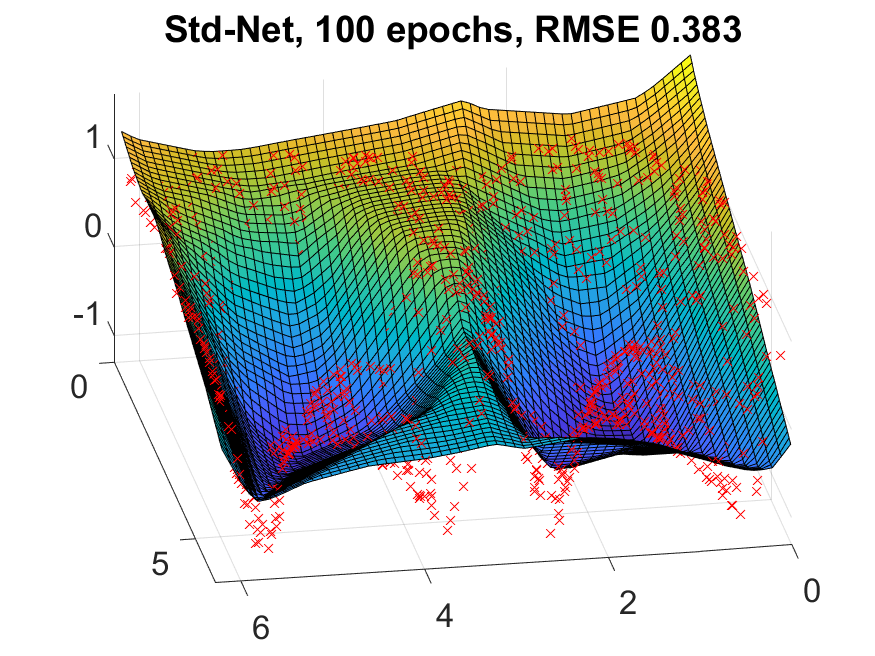}
    \includegraphics[width=0.3\textwidth]{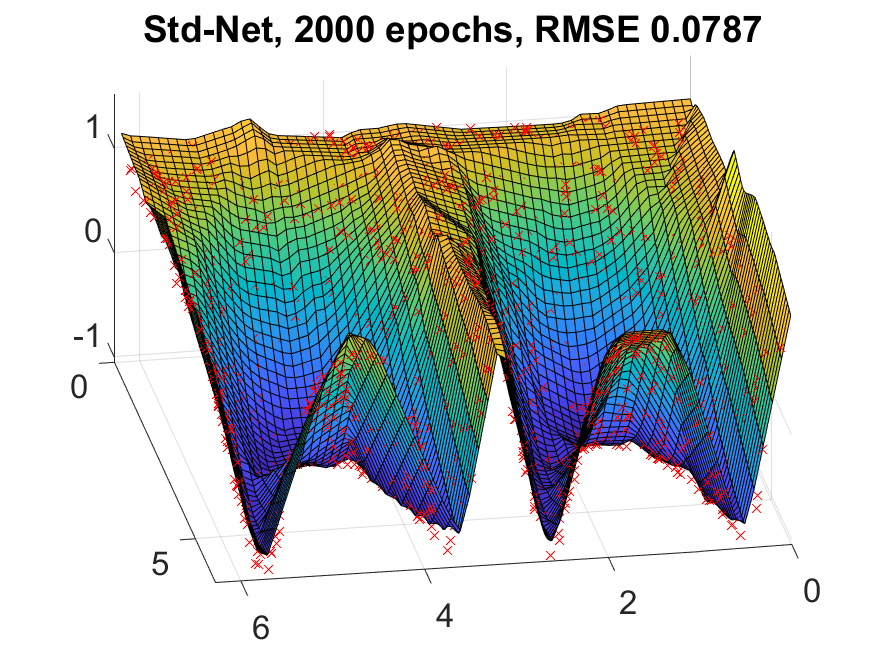}\\
    \includegraphics[width=0.3\textwidth]{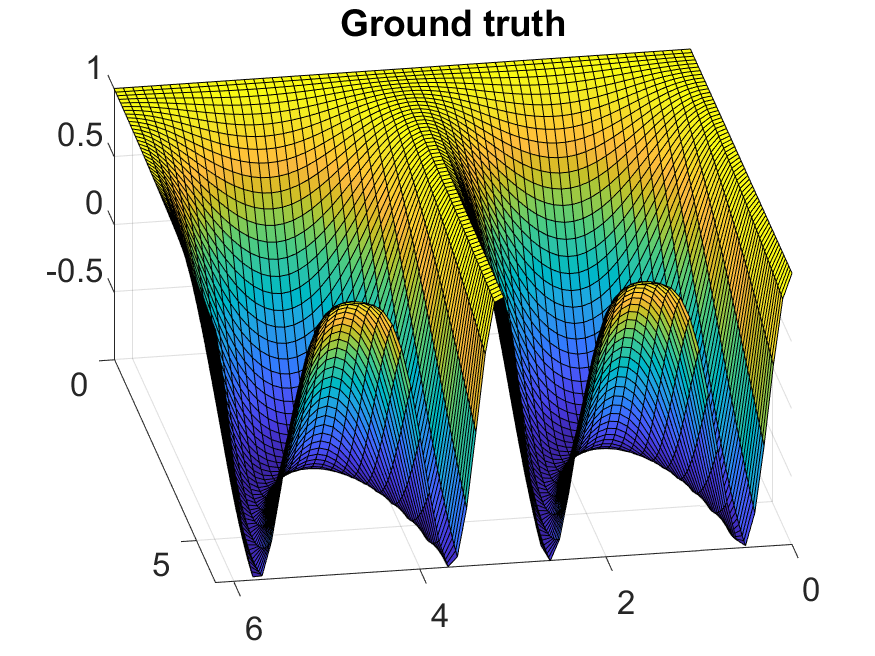}
    \includegraphics[width=0.3\textwidth]{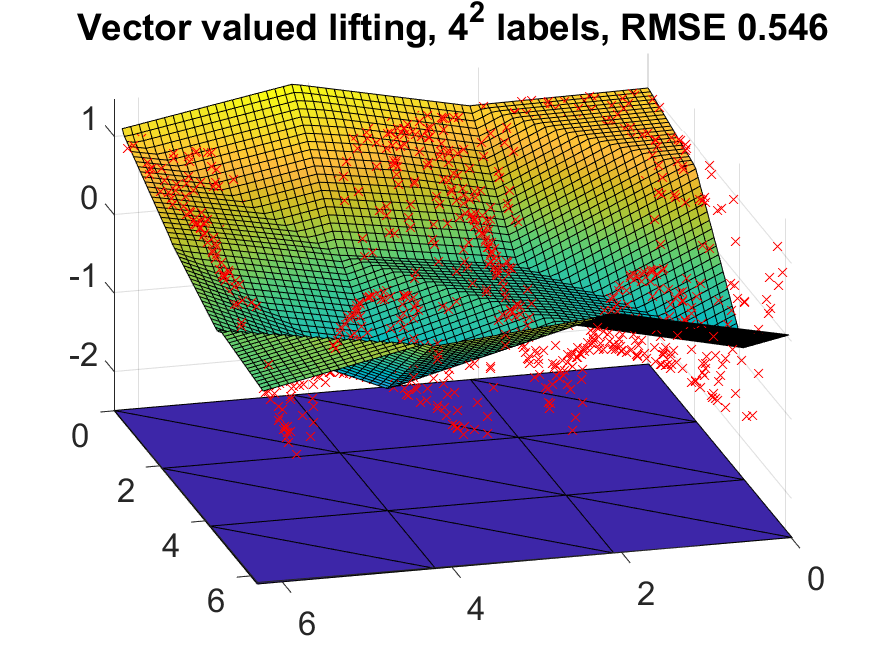}
    \includegraphics[width=0.3\textwidth]{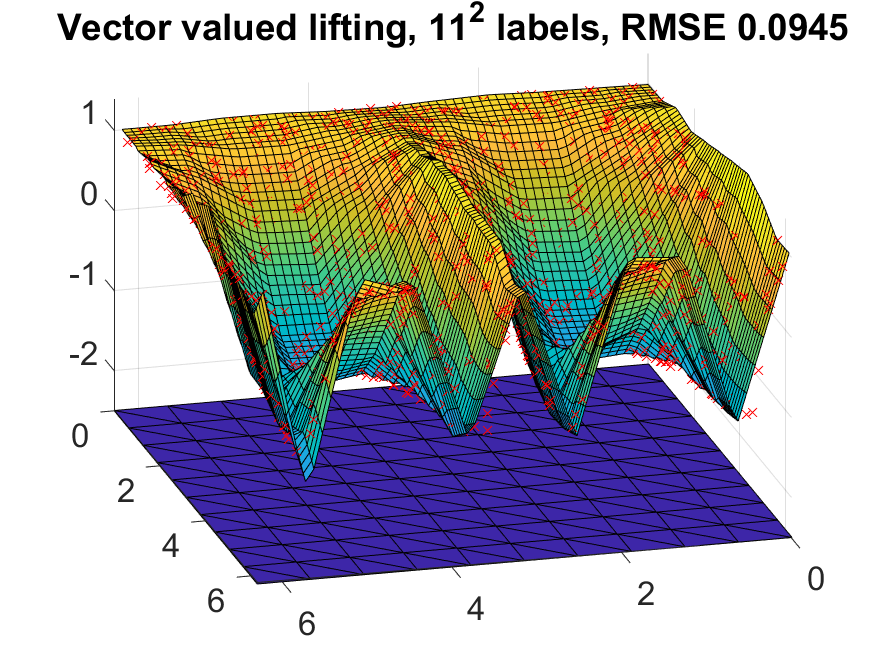}
  \end{center}
  \caption{\label{fig:2dExample}Illustrating the results of approximating the function in \eqref{eq:2dExampleFunction} with the standard network in \eqref{eq:stdnet2d} (middle row) and the architecture in \eqref{eq:lnet2d} based on lifting the input data (upper row). The red markers illustrate the training data, the surface represents the overall network function, and the RMSE measures its difference to the true underlying function \eqref{eq:2dExampleFunction}, which is shown in the bottom row on the left. Similar to the results of Figure \ref{fig:1dExample}, our lifting based architecture converges more quickly and yields a better approximation of the true underlying function (lower left) after 2000 epochs.
  The middle and right approximations in the bottom row illustrate a vector-valued lifting (see Section~\ref{sec:vec-lift-layer}) into $4^2$ (middle) and $11^2$ (right) dimensions. The latter can be trained by solving a linear system. We illustrate the triangular mesh used for the lifting below the graph of the function to illustrate that the approximation is indeed piecewise linear (as stated in Proposition~\ref{prop:lifting-leads-to-lin-spline}).
  }
\end{figure}
While the above results were expected based on the favorable theoretical properties, we now consider a more difficult test case of fitting the function 
\begin{align}
\label{eq:2dExampleFunction}
 f(x_1,x_2) = \cos(x_2\: \sin(x_1))
\end{align}
on $[0,2\pi]^2$. Note that although a 2-D input still allows for a vector-valued lifting, our goal is to illustrate that even a coordinate-wise lifting has favorable properties (beyond being able to approximate any separable function with a single layer, which is a simple extension of Corollary~\ref{cor:universalApproximation}). We therefore compare the two networks
\begin{align}
\tag{Lift-Net}
\label{eq:lnet2d}
f_{\text{Lift-Net}}(x_1,x_2) =&~ \text{fc}_1(\sigma(\text{fc}_{20}([\lift_{20}(x_1), \lift_{20}(x_2)]))), \\
\tag{Std-Net}
\label{eq:stdnet2d}
f_{\text{Std-Net}}(x_1,x_2) =&~ \text{fc}_1(\sigma(\text{fc}_{20}(\text{fc}_{40}([x_1,x_2])))) ,
\end{align} 
where the notation $[u,v]$ in the above formula denotes the concatenation of the two vectors $u$ and $v$.
The corresponding training now yields a non-convex optimization problem in both cases. As we can see in Figure~\ref{fig:2dExample} the general behavior is similar to the 1-D case: Increasing the dimensionality via lifting the input data yields faster convergence and a more precise approximation than increasing the dimensionality with a parameterized filtering. For the sake of completeness, we have included a vector-valued lifting with an illustration of the underlying 2-D triangulation in the bottom row of Figure~\ref{fig:2dExample}.

\subsection{Image Classification} \label{sec:image_classifcation}
As a real-world imaging example we consider the problem of image classification. To illustrate the behavior of our lifting layer, we use the ``Deep MNIST for expert model'' (\textit{ME-model}) by TensorFlow\footnote{\url{https://www.tensorflow.org/tutorials/layers}} as a simple standard architecture:
$$ \underset{(5\times5\times32)}{\text{Conv}} ~ \stackrel{\text{ReLU}}{\rightarrow} ~\underset{(2\times2)}{\text{Pool}}~ \rightarrow ~\underset{(5\times5\times64)}{\text{Conv}}~ \stackrel{\text{ReLU}}{\rightarrow} ~\underset{(2\times2)}{\text{Pool}}~ \rightarrow ~\underset{(1024)}{\text{FC}}~ \stackrel{\text{ReLU}}{\rightarrow} ~\underset{(n)}{\text{FC}}$$
which applies a standard ReLU activation, max pooling and outputs a final number of $n$ classes. 
In our experiments, we use an additional batch-normalization (BN) to improve the accuracy significantly, and denote the corresponding model by \textit{ME-model+BN}.

Our model is formed by replacing all ReLUs by a scaled lifting layer (as introduced in Section~\ref{sec:practicalScaledLifting}) with $L=3$, where we scaled with the absolute value $|t^i|$ of the labels to allow for a meaningful combination with the max pooling layers. We found the comparably small lifting of $L=3$ to yield the best results in (deeply) nested architectures. As our lifting layer increases the number of channels by a factor of $2$, our model has almost twice as many free parameters as the ME model.
Since this could yield an unfair comparison, we additionally include a larger model \textit{Large ME-model+BN} with twice as many convolution filters and fully-connected neurons resulting in even more free parameters than our model.

Figure~\ref{fig:classification} shows the results each of these models obtains on the image classification problems CIFAR-10 and CIFAR-100. As we can see, the favorable behavior of the synthetic experiments carried over to the exemplary architectures in image classification: Our proposed architecture based on lifting layers has the smallest test error and loss in both experiments. Both common strategies, i.e. including batch normalization and increasing the size of the model, improved the results, but even the larger of the two ReLU-bases architectures remains inferior to the lifting-based architecture. 

\begin{figure}[t]
  \begin{center}
    \begin{tabular}{cccc}
    \includegraphics[width = 0.24\textwidth]{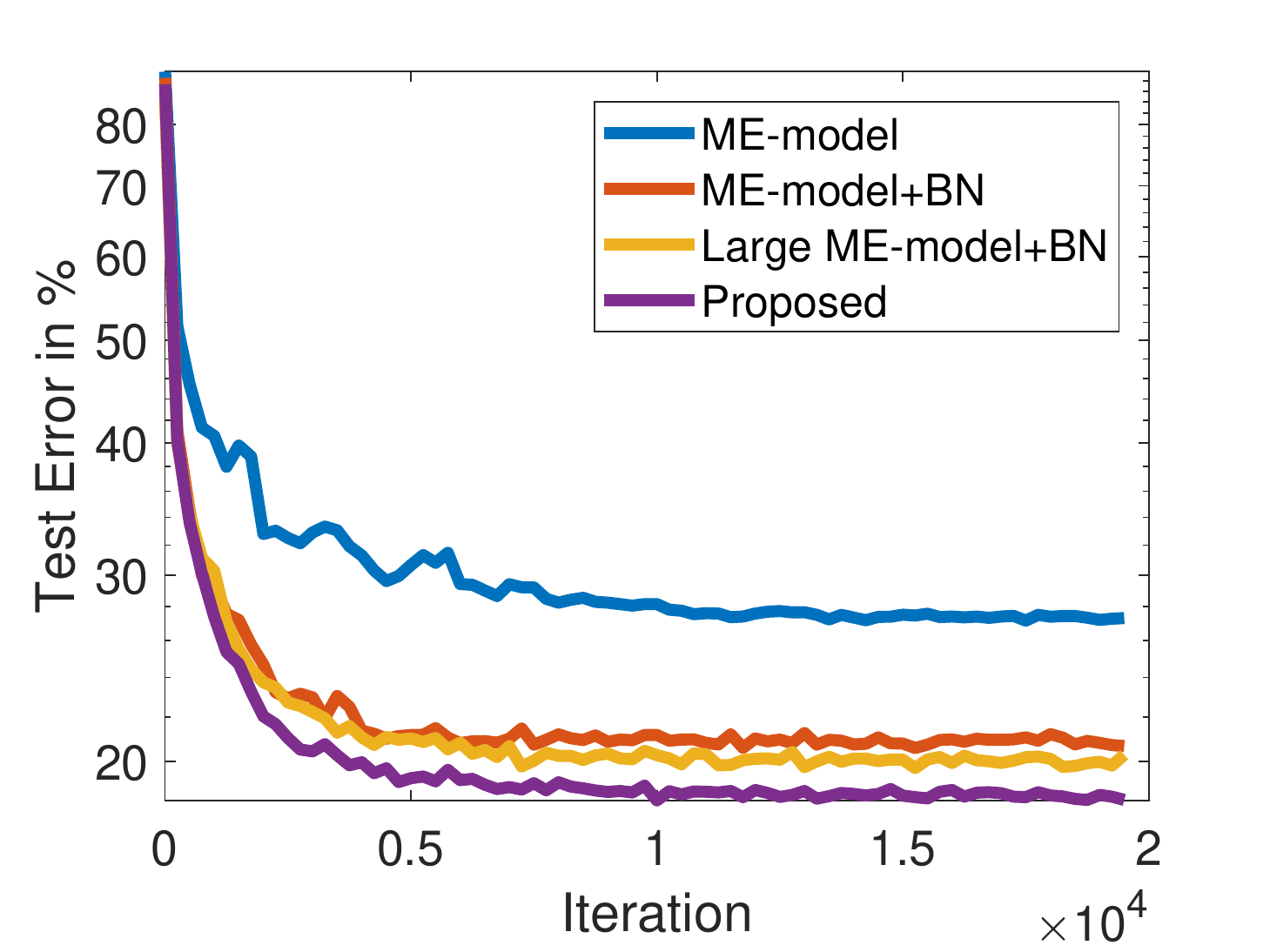} & \includegraphics[width = 0.24\textwidth]{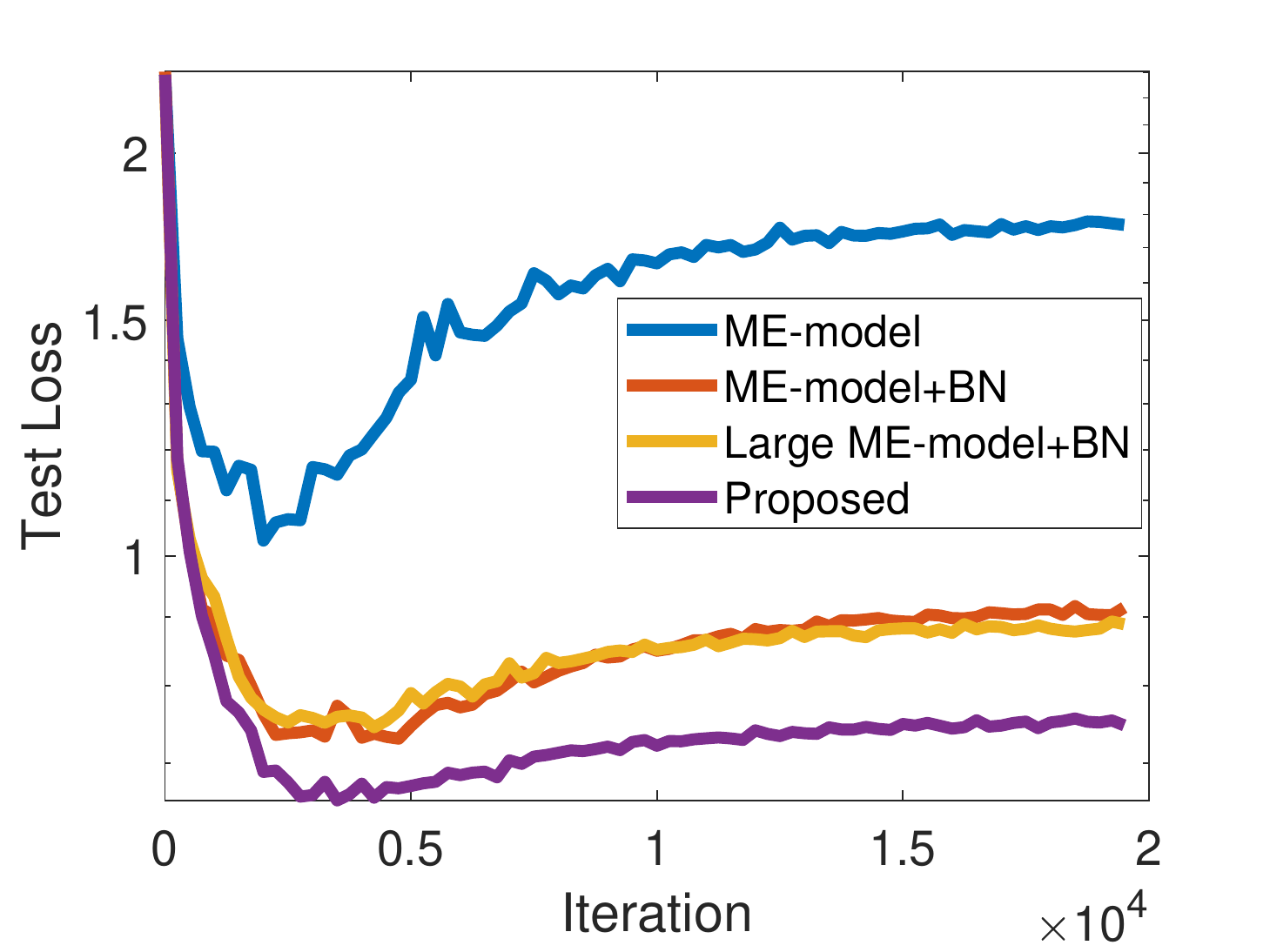} &
    \includegraphics[width = 0.24\textwidth]{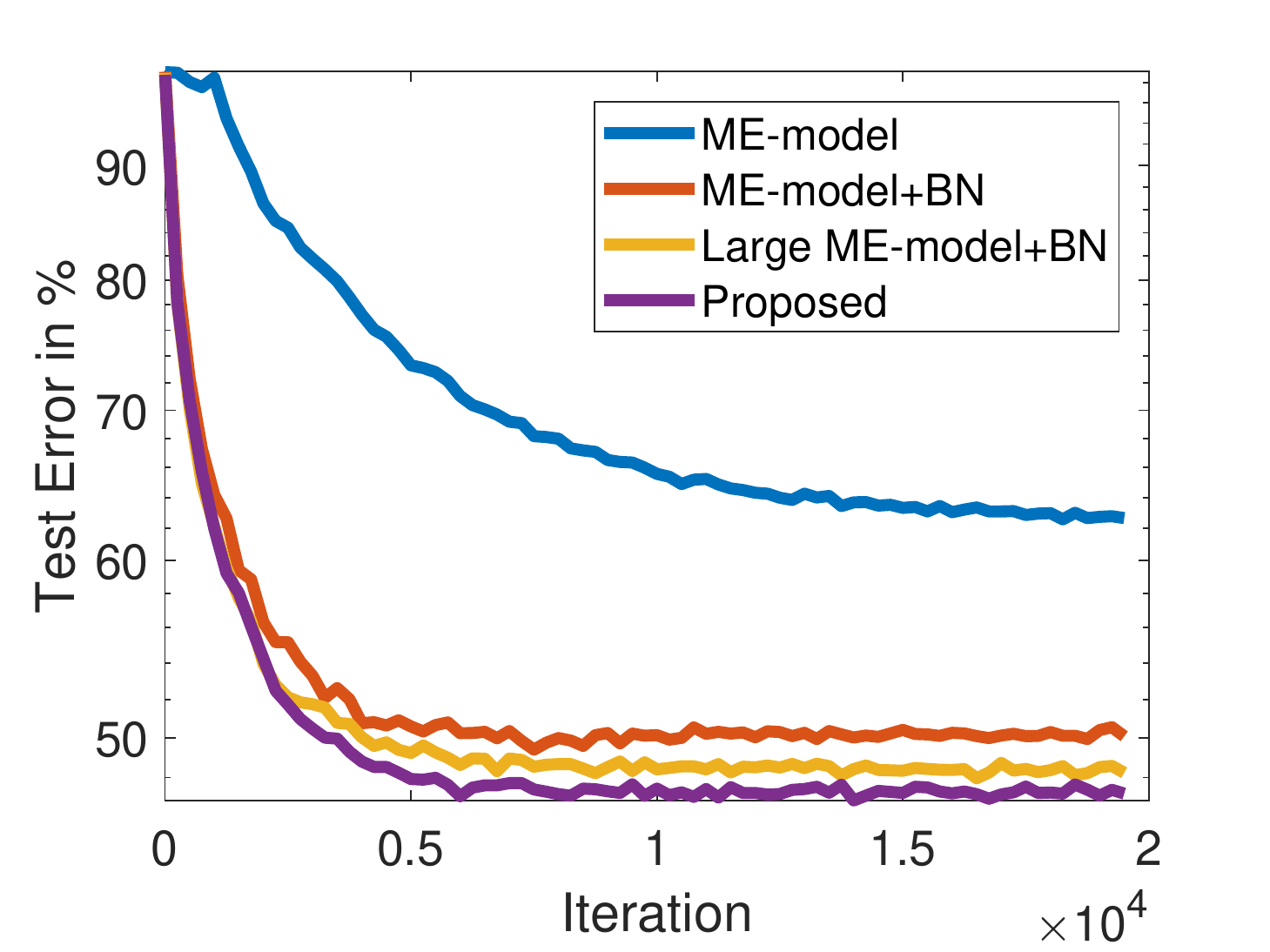} & \includegraphics[width = 0.24\textwidth]{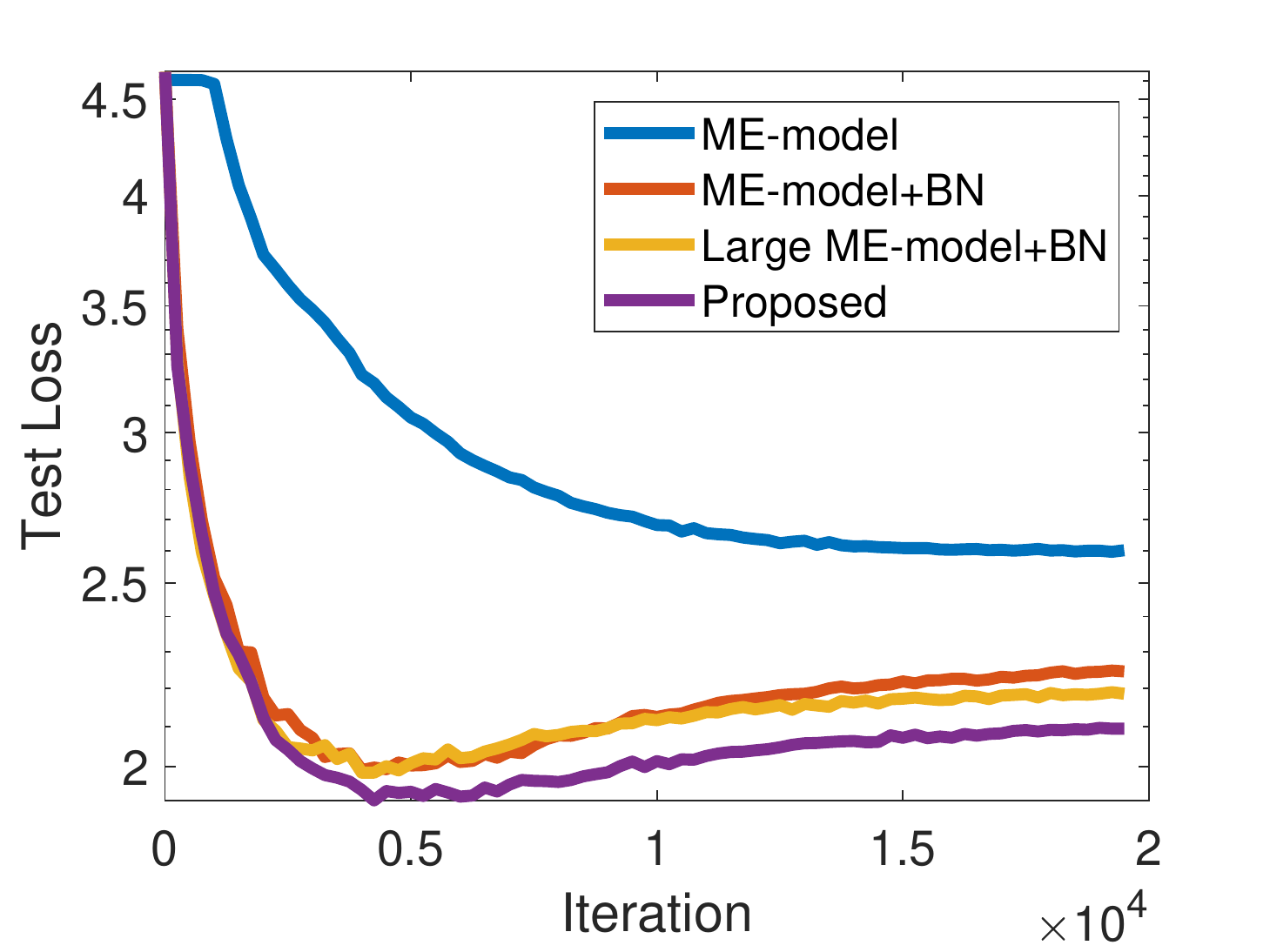}\\
    \tiny (a) CIFAR-10 Test Error & \tiny(b) CIFAR-10 Test Loss&
    \tiny(c) CIFAR-100 Test Error & \tiny(d) CIFAR-100 Test Loss
    \end{tabular}
  \end{center}
  \caption{\label{fig:classification}Comparing different approaches for image classification on CIFAR-10 and CIFAR-100. The proposed architecture with lifting layers shows a superior performance in comparison to its ReLU-based relatives in both cases. }
\end{figure}

\subsection{Maxout Activation Units}

To also compare the proposed lifting activation layer with the maxout activation, we conduct a simple MNIST image classification experiment with a fully connected one-hidden-layer architecture, using a ReLu, maxout or lifting as activations. For the maxout layer we apply a feature reduction by a factor of $2$ which has the capabilities of representing a regular ReLU and a lifting layer as in \eqref{eq:reducedLifting}. Due to the nature of the different activations - maxout applies a max pooling and lifting increases the number of input neurons in the subsequent layer - we adjusted the number of neurons in the hidden layer to make for an approximately equal and fair amount of trainable parameters.

The results in Figure \ref{fig:maxout_comparison} are achieved after optimizing a cross-entropy loss for $100$ training epochs by applying SGD with learning rate $0.01$. Particularly, each architecture was trained with the identical experimental setup. 
While both the maxout and our lifting activation yield a similar convergence behavior better than the standard ReLU, the proposed method exceeds in terms of the final lowest test error.

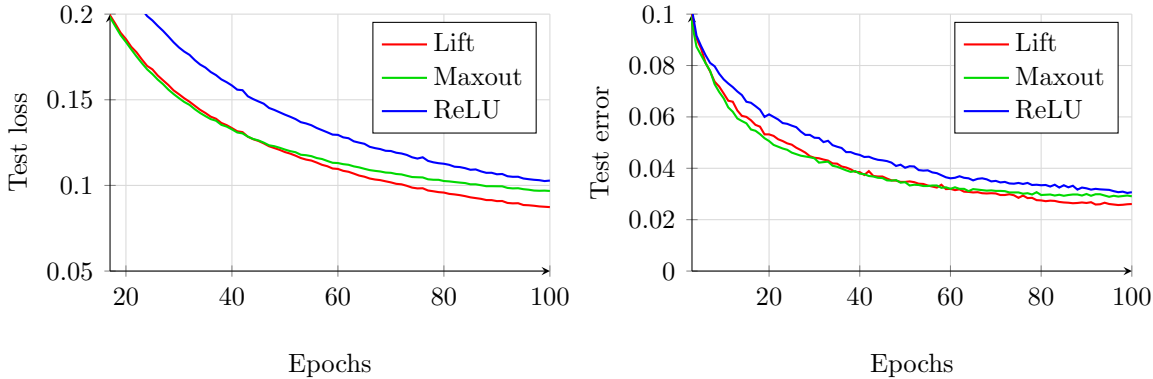
\begin{figure}[t]
  \begin{center}
    \begin{tikzpicture}
        \begin{axis}[width=0.45 \textwidth,
                     height=5cm,
                     grid=both,
                     grid style={solid,gray!30!white},
                     axis lines=left,
                     every axis plot/.append style={semithick},
                     ymin=0.05,
                     ymax=0.2,
                     xlabel={Epochs},
                     ylabel={Test loss},
                     scaled ticks=false,
                     tick label style={/pgf/number format/fixed},
                     x label style={at={(axis description cs:0.5,-0.125)},anchor=north},
                     legend cell align=left,
                     legend pos=north east,]
            \addplot [draw=red,thick] table [x=Step, y=Value, col sep=comma] {data/run_lift-tag-test_loss.csv};
            \addplot [draw=green!85!black,thick]  table [x=Step, y=Value, col sep=comma] {data/run_maxout-tag-test_loss.csv};
            \addplot [draw=blue,thick] table [x=Step, y=Value, col sep=comma] {data/run_relu-tag-test_loss.csv};
            \legend{Lift, Maxout, ReLU}
        \end{axis}
    \end{tikzpicture}
    \begin{tikzpicture}
        \begin{axis}[width=0.45 \textwidth,
                     height=5cm,
                     grid=both,
                     grid style={solid,gray!30!white},
                     axis lines=left,
                     every axis plot/.append style={semithick},
                     ymin=0.0,
                     ymax=0.1,
                     xlabel={Epochs},
                     ylabel={Test error},
                     scaled ticks=false,
                     tick label style={/pgf/number format/fixed},
                     x label style={at={(axis description cs:0.5,-0.125)},anchor=north},
                     legend cell align=left,
                     legend pos=north east,]
            \addplot [draw=red,thick] table [x=Step, y expr=1.0 - \thisrow{Value}, col sep=comma] {data/run_lift-tag-test_accuracy.csv};
            \addplot [draw=green!85!black,thick]  table [x=Step, y expr=1.0 - \thisrow{Value}, col sep=comma] {data/run_maxout-tag-test_accuracy.csv};
            \addplot [draw=blue,thick] table [x=Step, y expr=1.0 - \thisrow{Value}, col sep=comma] {data/run_relu-tag-test_accuracy.csv};
            \legend{Lift, Maxout, ReLU}
        \end{axis}
    \end{tikzpicture}
  \end{center}
  \caption{\label{fig:maxout_comparison}MNIST image classification comparison of our lifting activation with the standard ReLU and its maxout generalization. 
  The ReLU, maxout and lifting architectures ($79510$, $79010$ and $76485$ trainable parameters) achieved a best test error of $3.07\%$, $2.91\%$ and $2.61\%$, respectively. The proposed approach behaves favorably in terms of the test loss from epoch 50 on, leading to a lower overall test error after 100 epochs.}
\end{figure}

\subsection{Image Denoising}

\begin{figure}[t]
    \begin{center}
    \raisebox{0.45\height}{\includegraphics[width=0.5\textwidth]{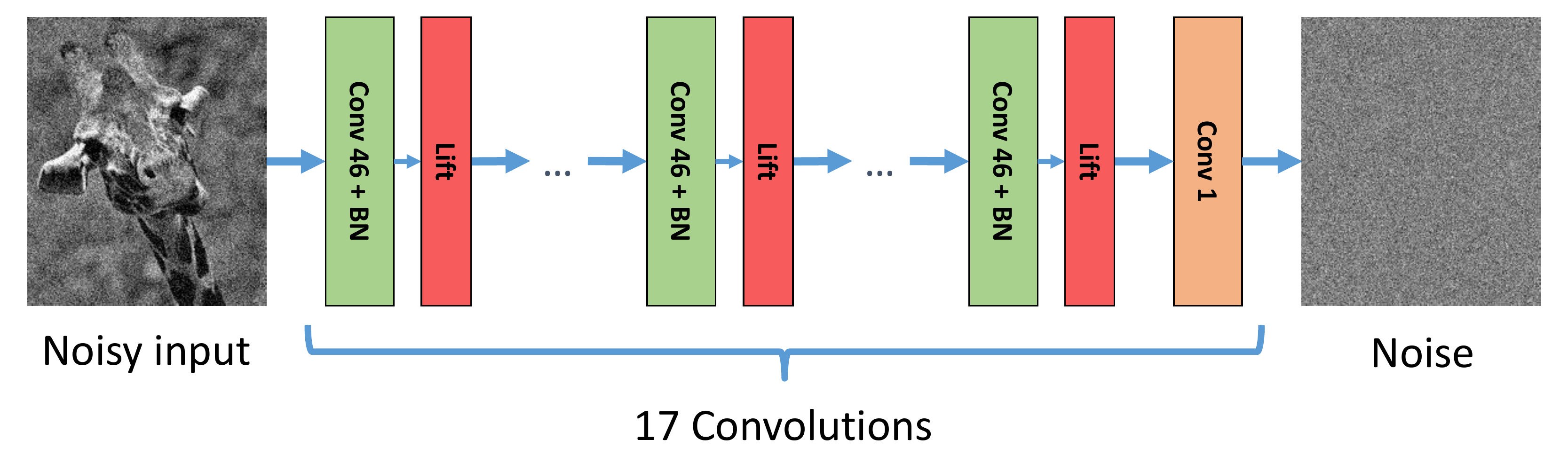}}
    \centering
    \begin{tikzpicture}
        \begin{axis}[width=0.45\textwidth,
                     height=4cm,
                     grid=both,
                     grid style={solid,gray!30!white},
                     axis lines=left,
                     every axis plot/.append style={semithick},
                     ymin=28.6,
                     ymax=29.3,
                     xtick={1,10,20,30,40,50},
                     xlabel={Epochs},
                     ylabel={Test PSNR in [dB]},
                     scaled ticks=false,
                     tick label style={/pgf/number format/fixed},
                     x label style={at={(axis description cs:0.5,-0.125)},anchor=north},
                     legend cell align=left,
                     legend pos=south east,]
            \addplot [draw=red,thick] table [x expr=\thisrow{Step} + 1, y=Value, col sep=comma] {data/run_dncnn-tag-PSNR_on_validation_data.csv};
            \addplot [draw=blue,thick] table [x=Step, y=Value, col sep=comma] {data/run_lift_net_46_17-tag-PSNR_on_validation_data.csv};
            \legend{DnCNN-S, Lift-46}
        \end{axis}
    \end{tikzpicture}
    \end{center}
    \caption{In (a) we illustrate our Lift-46 image denoising architecture which implements 16 convolution layers with 46 filters. Although its test PSNR in (b) for Gaussian noise with $\sigma = 25$ plateaus - after a learning rate decay at 30 epochs - to the same final value it generally shows a favorable and more stable behavior.}
    \label{fig:denoising_arch_psnr}
\end{figure}

\begin{table}
  \caption{\label{table:denoising_comparison}Average PSNRs in [dB] for the BSD68 dataset for different standard deviations $\sigma$ of the Gaussian noise on all of which our lifting layer based architecture is among the leading methods. Please note that (most likely due to variations in the random seeds) our reproduced DnCNN-S results are different - in the second decimal place - from the results reported in \cite{ZZCMZ17}.
  }
  \begin{center}
    \newcolumntype{Y}{>{\centering\arraybackslash}X}
    \resizebox{\textwidth}{!}{
    \begin{tabularx}{1.85\textwidth}{|Y|*{9}{Y|}}
        \hline
        \multicolumn{10}{|c|}{Reconstruction PSNR in [\textit{dB}]} \\
        \hline
        {$\sigma$} &  {Noisy} & {BM3D \cite{DFKE07}} & {WNNM \cite{GZZF14}} & {EPLL \cite{ZW11}} & {BSH12 \cite{BSH12}} & {CSF \cite{SR14}} & {TNRD \cite{CP17}} & {DnCNN-S~\cite{ZZCMZ17}}  & {Our} \\
        \hline
            15 & 24.80 & 31.07 & 31.37 & 31.21 & -     & 31.24 & 31.42 & \textbf{31.72} & \textbf{31.72} \\
            25 & 20.48 & 28.57 & 28.83 & 28.68 & 28.96 & 28.74 & 28.92 & \textbf{29.21} & \textbf{29.21} \\
            50 & 14.91 & 25.62 & 25.87 & 25.67 & 26.03 & -     & 25.97 & 26.21          & \textbf{26.23} \\
        \hline
    \end{tabularx}
    }
  \end{center}
\end{table}

To also illustrate the effectiveness of lifting layers for networks mapping images to images, we consider the problem of Gaussian image denoising. We designed the \textit{Lift-46} architecture with $16$ blocks each of which consists of $46$ convolution filters of size $3\times 3$, batch normalization, and a lifting layer with $L=3$ following the same experimental reasoning for deep architectures as in Section \ref{sec:image_classifcation}. As illustrated in Figure \ref{fig:denoising_arch_psnr}(a), a final convolutional layer outputs an image we train to approximate the residual, i.e., noise-only, image. Due to its state-of-the-art performance in image denoising we adopted the same training pipeline as for the \textit{DnCNN-S} architecture from \cite{ZZCMZ17} which resembles our Lift-46 network but implements a regular ReLU and $64$ convolution filters. The two architectures contain an approximately equal amount of trainable parameters.


Table \ref{table:denoising_comparison} compares our architecture with a variety of denoising methods most notably the DnCNN-S \cite{ZZCMZ17} and shows that we produce state-of-the-art performance for removing Gaussian noise of different standard deviations $\sigma$. In addition, the development of the test PSNR in Figure \ref{fig:denoising_arch_psnr}(b) suggests a more stable and favorable behavior of our method compared to DnCNN-S.

\section{Conclusions} \label{sec:Conclusions}

We introduced lifting layers to be used as an alternative to ReLU-type activation functions in machine learning. Opposed to the classical ReLU, liftings have a nonzero derivative almost everywhere, and can - when combined with a fully connected layer - represent any continuous piecewise linear function. We demonstrated several advantageous properties of lifting and used this technique to handle non-convex and partly flat loss functions. Based on our numerical experiments in image classification and image reconstruction, lifting layers are an attractive building block in various neural network architectures and allowed us to improve on the performance of corresponding ReLU-based architectures.

\newpage
\appendix

\section{Vector-Valued Lifting} \label{appdx:vector-val}

\paragraph{Notation for a Triangulation.}
For a non-empty, connected, and compact set $\Omega\subset\R^d$, we consider a (non-degenerate) triangulation $(T^l)_{l=1}^M$ of $\Omega$, where $T^l$ is the convex hull of $d+1$ vertices $(V^{\kappa_l(1)}, \ldots, V^{\kappa_l(d+1)})$ from the set $\mathcal V:=\set{V^{1},\ldots,V^{L}}$ of all vertices, and $\map{\kappa_l}{\set{1,\ldots,d+1}}{\set{1,\ldots,L}}$ maps indices of the vertices of $T^l$ to the corresponding indices in $\mathcal V$. The notation is illustrated in Figure~\ref{fig:vector-val-lift}.

\subsection{Definition}

\begin{definition}[Vector-Valued Lifting] \label{def:lifting-dD}
We define the lifting of a variable $x\in\Omega\subset \R^d$ from the $d$-dimensional vector space $\R^d$ with respect to an orthogonal basis $\EE:= \set{e^1,\ldots,e^L}$ of $\R^L$ and a triangulation $(T^l)_{l=1}^M\subset \R^{d}$ as a mapping $\map{\lift}{\Omega}{\R^L}$ defined by
\begin{equation} \label{eq:def-lift-dD}
  \lift(x) = \sum_{i=1}^{d+1} \lambda^l_i(x) e^{\kappa_l(i)}
  \quad  \text{with}\ l\ \text{such that}\ x\in T^l \,,
\end{equation}
where $\lambda^l_i(x)$, $i=1,\ldots,d+1$, are the barycentric coordinates of $x$ with respect to $V^{\kappa_l(1)},\ldots, V^{\kappa_l(d+1)}$.

The inverse mapping $\map{\ilift}{\R^L}{\R^d}$ is given by
\[
  \ilift(z) = \sum_{l=1}^L \frac{\scal{e^{l}}{z}}{\vnorm{e^{l}}^2} V^{l} \,.
\]
\end{definition}
\begin{example}[Scalar-Valued Lifting] \label{ex:scalar-lifting}
  For $d=1$, we obtain the scalar-valued lifting with $\Omega=[\lI,\rI]$, $\mathcal V=\set{t^1,\ldots,t^L}$, and the vertices of $T^l$ are exactly the interval borders $V^{\kappa_l(1)}=t^l$ and $V^{\kappa_l(2)}=t^{l+1}$ for $l=1,\ldots,M$ with $M=L-1$.
\end{example}
\begin{example}
  For $\Omega=[\lI,\rI]^d$, a regular grid on the rectangular domain in $\R^d$, a natural triangulation is induced by the vertices $\mathcal V:=[t^1,\ldots,t^L]^d$, $\lI=t^1<\ldots<t^L=\rI$, which implies a lifted dimension of $dL$.
\end{example}

\begin{lemma}[Sanity Check of Inversion Formula] \label{lem:inv-formula-dD}
  The mapping $\ilift$ inverts the mapping $\lift$, i.e. $\ilift(\lift(x))=x$ for $x\in \Omega\subset\R^d$.
\end{lemma}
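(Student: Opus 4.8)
The plan is to verify the claimed identity $\ilift(\lift(x)) = x$ by direct computation, exploiting the defining property of barycentric coordinates. First I would fix $x \in \Omega$ and pick a simplex index $l$ such that $x \in T^l$; by Definition~\ref{def:lifting-dD} we then have $\lift(x) = \sum_{i=1}^{d+1} \lambda_i^l(x) e^{\kappa_l(i)}$, where the $\lambda_i^l(x)$ are the barycentric coordinates of $x$ with respect to the vertices $V^{\kappa_l(1)}, \ldots, V^{\kappa_l(d+1)}$. Recall that barycentric coordinates satisfy two properties: they form a partition of unity, $\sum_{i=1}^{d+1} \lambda_i^l(x) = 1$, and they reconstruct the point, $\sum_{i=1}^{d+1} \lambda_i^l(x) V^{\kappa_l(i)} = x$.

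Next I would substitute $z = \lift(x)$ into the inversion formula $\ilift(z) = \sum_{m=1}^L \frac{\scal{e^m}{z}}{\vnorm{e^m}^2} V^m$. The key step is to compute $\scal{e^m}{\lift(x)}$: since $\EE = \{e^1,\ldots,e^L\}$ is an orthogonal basis, $\scal{e^m}{e^{\kappa_l(i)}} = 0$ unless $m = \kappa_l(i)$, in which case it equals $\vnorm{e^m}^2$. Assuming the $\kappa_l(i)$ are distinct for $i = 1,\ldots,d+1$ (which holds for a non-degenerate simplex), we get $\scal{e^m}{\lift(x)} = \lambda_i^l(x) \vnorm{e^m}^2$ when $m = \kappa_l(i)$ for some $i$, and $\scal{e^m}{\lift(x)} = 0$ otherwise. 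Dividing by $\vnorm{e^m}^2$ therefore leaves exactly the barycentric coordinate $\lambda_i^l(x)$ attached to vertex $V^{\kappa_l(i)}$, and zero at every other vertex. Hence $\ilift(\lift(x)) = \sum_{i=1}^{d+1} \lambda_i^l(x) V^{\kappa_l(i)} = x$ by the reconstruction property.

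The computation is genuinely routine; the only subtlety worth flagging is the well-definedness question. A point $x$ lying on a shared face of several simplices admits several choices of $l$, so one should remark that the barycentric-coordinate expansion of $x$ in terms of the face's vertices is the same regardless of which containing simplex is used (the coordinates on vertices not belonging to the common face vanish), so $\lift(x)$ — and therefore the whole argument — is independent of the choice of $l$. I would also note explicitly that the partition-of-unity property is not actually needed for the inversion identity itself (only the reconstruction property is), though it is what makes $\lift(x)$ land in the convex hull of the simplex vertices. I do not anticipate any real obstacle here; the statement is a sanity check and the proof is a two-line orthogonality argument once the barycentric reconstruction identity is invoked.
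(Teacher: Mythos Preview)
Your proposal is correct and follows essentially the same route as the paper: substitute the definition of $\lift(x)$ into $\ilift$, use orthogonality of the basis to collapse the double sum, and invoke the barycentric reconstruction identity $\sum_i \lambda_i^l(x)V^{\kappa_l(i)} = x$. Your extra remarks on well-definedness at shared faces and on which barycentric property is actually used are helpful elaborations but not part of the paper's (terser) proof.
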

\begin{proof}
For $x\in T^l$, using $\scal{e^l}{e^k} = 0$ for $l\neq k$ (since $\EE$ is orthogonal) , the following holds:
\[
  \ilift(\lift(x)) 
  =  \sum_{k=1}^L\sum_{i=1}^{d+1} \lambda^l_i(x) \frac{\scal{e^k}{e^{\kappa_l(i)}}}{\vnorm{e^k}^2} V^k 
  =  \sum_{i=1}^{d+1} \lambda^l_i(x) V^{\kappa_l(i)} = x \,.
\]
where the last equality uses the definition of barycentric coordinates.
\end{proof}

\subsection{Analysis}

\begin{proposition}[Prediction of a Continuous Piecewise Linear Functions] \label{prop:lifting-leads-to-lin-spline-dD}
 The composition of a fully connected layer $z\mapsto \theta z$ with $\theta\in \R^{r\times L}$, $r\in\N$, and a lifting layer, i.e.
 \begin{equation} \label{eq:predict-spline-dD-net}
    \net_{\theta}(x) := \theta \lift(x) \,,
 \end{equation}
 yields a continuous piecewise linear (PLC) function. Conversely, any PLC function with kinks on a triangulation of $\Omega$ can be expressed in the form of \eqref{eq:predict-spline-dD-net}.
\end{proposition}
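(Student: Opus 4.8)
The plan is to reduce the statement to two elementary properties of barycentric coordinates on a non-degenerate triangulation: (i)~on a single simplex $T^l$, each barycentric coordinate $x\mapsto\lambda^l_i(x)$ is an \emph{affine} function of $x$; and (ii)~barycentric coordinates \emph{reproduce affine functions}, that is, for any affine map $g$ one has $g(x)=\sum_{i=1}^{d+1}\lambda^l_i(x)\,g(V^{\kappa_l(i)})$ for all $x\in T^l$, which follows at once from $x=\sum_i\lambda^l_i(x)V^{\kappa_l(i)}$ together with $\sum_i\lambda^l_i(x)=1$. Both facts are standard in finite-element theory, and for $d=1$ they underlie Proposition~\ref{prop:lifting-leads-to-lin-spline}.

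For the forward direction I would first fix a simplex $T^l$ and note that there $\lift(x)=\sum_{i=1}^{d+1}\lambda^l_i(x)e^{\kappa_l(i)}$ is, by (i), an affine map $T^l\to\R^L$; composing with the linear map $z\mapsto\theta z$ shows that $\net_\theta=\theta\lift$ is affine on each $T^l$, hence piecewise linear on the triangulation. The remaining issue is global continuity, i.e.\ well-definedness of $\lift$ at points lying in several simplices: if $x\in T^l\cap T^{l'}$, then $x$ belongs to the sub-simplex spanned by the vertices common to $T^l$ and $T^{l'}$, so its barycentric coordinates with respect to $T^l$ and with respect to $T^{l'}$ vanish on all non-shared vertices and agree on the shared ones; consequently $\lift(x)$, and therefore $\net_\theta(x)$, does not depend on the chosen simplex, and $\net_\theta$ is a genuine continuous function on the compact, connected set $\Omega=\bigcup_{l}T^l$.

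For the converse, let $g\colon\Omega\to\R^r$ be PLC with kinks only on the given triangulation, so that $g$ is affine (and of course continuous) on each $T^l$. I would define $\theta\in\R^{r\times L}$ by taking its $l$-th column to be the vertex value $g(V^l)$. Then on each $T^l$,
\[
  \theta\lift(x)=\sum_{i=1}^{d+1}\lambda^l_i(x)\,g(V^{\kappa_l(i)})=g(x),
\]
using property (ii) for the affine restriction $g|_{T^l}$. Since the simplices $T^l$ cover $\Omega$, this yields $\net_\theta=g$ everywhere, which completes the argument.

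I do not expect a serious obstacle: the choice of $\theta$ from the vertex values of $g$ is essentially forced, and, granted (i) and (ii), the two directions are one-line computations. The only step that genuinely needs care is the well-definedness of $\lift$ on the lower-dimensional faces where several simplices meet — equivalently, the continuity of $\net_\theta$ — and this is precisely where non-degeneracy of the triangulation and the partition-of-unity structure of the barycentric coordinates enter; that is the part I would write out in detail.
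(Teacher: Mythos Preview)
Your proposal is correct and follows essentially the same route as the paper: affineness of the barycentric coordinates on each simplex gives piecewise linearity of $\theta\lift$, continuity is checked on the shared faces, and for the converse one prescribes $\theta e^{l}=g(V^l)$ and invokes affine reproduction. You actually spell out more detail than the paper, which argues continuity in one line and defers the converse to the proof of Corollary~\ref{prop:overfitting-dD}; only note that in the appendix the basis $\EE$ is merely assumed orthogonal, so ``the $l$-th column of $\theta$ equals $g(V^l)$'' should strictly read ``$\theta$ is determined by $\theta e^{l}=g(V^l)$''.
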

\begin{proof}
  For $x\in T^l$, we have: 
  \[
      A(\lift(x)) 
      = \theta \lift(x)
      = \theta \sum_{i=1}^{d+1} \lambda^l_i(x) e^{\kappa_l(i)}
      = \sum_{i=1}^{d+1} \lambda^l_i(x) \theta e^{\kappa_l(i)} \,.
  \]
  Since $\lambda^l_i(x)$ is linear, the expression on the right coincides with the linear interpolation between the points $\theta e^{\kappa_l(i)}$, $i=1,\ldots,d+1$. Continuity follows by continuity of the expression above at the boundary of $T^l$, for each $l=1,\ldots,M$. 
  
  The converse statement follows by defining the lifting with respect to the same triangulation as the given PLC function and choosing $\theta$ such that $\net_{\theta}$ coincides with that function on the vertices. The details are analogue to the proof of Corollary~\ref{prop:overfitting-dD}.
\end{proof}

\begin{lemma}[Approximation by Continuous Piecewise Linear Functions] \label{lem:approx-cont-with-lin-spline-dD}
  Let $\map{f}{\Omega}{\R^r}$, $r\in\N$, be a continuous function with the following modulus:
  \begin{equation} \label{eq:modulus-continuity-dD}
    \omega(f,\delta):=\sup\set{\vnorm{f(x)-f(y)}\setsep \vnorm{x-y}\leq \delta,\ \forall x,y\in\Omega} \,.
  \end{equation}
  Define the continuous piecewise linear function $\map{s_f}{\Omega}{\R^r}$ on the triangulation $(T^l)_{l=1}^M$  by setting $s_f(x) = f(x)$ at all vertices $x\in \set{V^1,\ldots,V^L}$. We denote by $h^M_l$ the diameter of $T^l$, given by 
  \[
    h^M_l:=\sup\set{\vnorm{V^{\kappa_l(i)} - V^{\kappa_l(i^\prime)}}\setsep i,i^\prime=1,\ldots,d+1} \,,
  \]
  and set $h^M:=\max_{l=1,\ldots,M} h^M_l$, which is finite. Then
  \[
    \sup_{x\in\Omega} \vnorm{f(x) - s_f(x)} \leq \omega(f,h^M) 
  \]
  and the right hand side vanishes for $h^M\dto 0$. 
\end{lemma}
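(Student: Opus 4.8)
The plan is to bound $\vnorm{f(x)-s_f(x)}$ pointwise by localizing to the simplex $T^l$ containing $x$ and exploiting that $s_f$ is the barycentric interpolant of the nodal values of $f$. First I fix $x\in\Omega$ and choose $l$ such that $x\in T^l$, with vertices $V^{\kappa_l(1)},\ldots,V^{\kappa_l(d+1)}$ and barycentric coordinates $\lambda^l_i(x)\geq0$ summing to $1$. By construction $s_f(x)=\sum_{i=1}^{d+1}\lambda^l_i(x)\,f(V^{\kappa_l(i)})$, since $s_f$ agrees with $f$ at the vertices and is affine on $T^l$ (this is exactly the structure exposed in the proof of Proposition~\ref{prop:lifting-leads-to-lin-spline-dD}). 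Using $\sum_i\lambda^l_i(x)=1$, I write
\begin{equation*}
  f(x)-s_f(x) = \sum_{i=1}^{d+1}\lambda^l_i(x)\bigl(f(x)-f(V^{\kappa_l(i)})\bigr),
\end{equation*}
so that by the triangle inequality and $\lambda^l_i(x)\geq 0$,
\begin{equation*}
  \vnorm{f(x)-s_f(x)} \leq \sum_{i=1}^{d+1}\lambda^l_i(x)\,\vnorm{f(x)-f(V^{\kappa_l(i)})} \leq \max_{i}\vnorm{f(x)-f(V^{\kappa_l(i)})}.
\end{equation*}

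The next step is to control each distance $\vnorm{x-V^{\kappa_l(i)}}$. Since both $x$ and $V^{\kappa_l(i)}$ lie in $T^l$, which has diameter $h^M_l\leq h^M$, we get $\vnorm{x-V^{\kappa_l(i)}}\leq h^M$ for every $i$, hence $\vnorm{f(x)-f(V^{\kappa_l(i)})}\leq\omega(f,h^M)$ by the definition \eqref{eq:modulus-continuity-dD} of the modulus of continuity. Combining, $\vnorm{f(x)-s_f(x)}\leq\omega(f,h^M)$, and taking the supremum over $x\in\Omega$ gives the claimed estimate. Here I should note that $h^M$ is finite because $\Omega$ is compact (each $T^l\subset\Omega$, so $h^M_l\leq\operatorname{diam}(\Omega)<\infty$) and there are finitely many simplices.

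Finally, to see the right-hand side vanishes as $h^M\downarrow 0$, I invoke that $f$ is continuous on the compact set $\Omega$, hence uniformly continuous, which is precisely the statement $\omega(f,\delta)\to 0$ as $\delta\to 0$. The only genuinely delicate point — and the one I would state carefully rather than gloss over — is the implicit well-posedness of the setup: I am tacitly assuming $x$ can be assigned to some simplex $T^l$ (which holds since $(T^l)$ covers $\Omega$) and that the value $s_f(x)$ does not depend on the choice of $l$ when $x$ lies on a shared face; this consistency follows because on a common face the barycentric interpolants from the two adjacent simplices agree (they are both the affine interpolant of the same face-vertex data), which is the continuity assertion already used in Proposition~\ref{prop:lifting-leads-to-lin-spline-dD}. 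Everything else is a routine application of the triangle inequality, so there is no real obstacle; the proof is short.
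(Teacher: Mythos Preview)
Your proof is correct and follows essentially the same route as the paper: write $s_f$ on $T^l$ as the barycentric interpolant of the nodal values, use $\sum_i\lambda^l_i(x)=1$ to express $f(x)-s_f(x)$ as a convex combination of differences $f(x)-f(V^{\kappa_l(i)})$, bound each by $\omega(f,h^M_l)\leq\omega(f,h^M)$, and invoke uniform continuity on the compact $\Omega$ for the limit. Your additional remarks on the finiteness of $h^M$ and the well-definedness of $s_f$ on shared faces are more explicit than the paper's treatment but do not change the argument.
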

\begin{proof}
  For $x\in T^l$, let $s_f$ be given by $s_f(x) = \sum_{i=1}^{d+1} \lambda^l_i(x) f(V^{\kappa_l(i)})$ with $\lambda^l_i(x)\in [0,1]$ and $\sum_{i=1}^{d+1} \lambda^l_i(x) = 1$. Note that $s_f$ is uniquely defined.   
  We conclude:
  \[
    \vnorm{f(x) - s_f(x)} 
    \leq \vnorm{\sum_{i=1}^{d+1} \lambda^l_i(x) \Big( f(x) - f(V^{\kappa_l(i)}) \Big) } 
    \leq \sup_{y\in T^l} \vnorm{f(x) - f(y)} 
    \leq \omega(f, h^M_l)  \,.
  \]
  As $\Omega$ is compact, $f$ is uniformly continuous, which, together with $\omega(f, h^M_l)\leq \omega(f, h^M)$, implies that the right hand side vanishes for $h^M\dto 0$.
\end{proof}
\begin{example}
  Consider a (locally) Lipschitz continuous function $\map{f}{\Omega}{\R^r}$. By compactness of $\Omega$, the function $f$ is actually globally Lipschitz continuous on $\Omega$ with a constant $m$, which implies $\omega(f,\delta)\leq \delta m$, since $\vnorm{f(x) - f(y)} \leq m\vnorm{x-y}$. 
\end{example}
\begin{corollary}[Prediction of Continuous Functions]  \label{cor:universalApproximation-dD}
  Any continuous function $\map{f}{\Omega}{\R^r}$, $r\in\N$, can be represented arbitrarily accurate with a network architecture $\net_{\theta}(x) = \theta \lift(x)$ for sufficiently large $L$ and $\theta\in \R^{r\times L}$.
\end{corollary}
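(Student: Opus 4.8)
The plan is to obtain this corollary almost immediately by chaining the converse direction of Proposition~\ref{prop:lifting-leads-to-lin-spline-dD} with the quantitative estimate of Lemma~\ref{lem:approx-cont-with-lin-spline-dD}; no new inequalities are required. Concretely, I fix a target accuracy $\epsilon>0$. Since $\Omega$ is compact, $f$ is uniformly continuous, so the modulus $\omega(f,\delta)$ defined in \eqref{eq:modulus-continuity-dD} tends to $0$ as $\delta\dto 0$, and I choose $\delta>0$ with $\omega(f,\delta)<\epsilon$.

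Next I select a non-degenerate triangulation $(T^l)_{l=1}^M$ of $\Omega$ whose maximal cell diameter satisfies $h^M\le\delta$; such triangulations exist for arbitrarily small $h^M$, and refining the mesh merely increases the number of vertices $L$, which the statement explicitly permits. Let $s_f\colon\Omega\to\R^r$ be the continuous piecewise linear interpolant of $f$ on this triangulation, i.e. $s_f(V^k)=f(V^k)$ at every vertex $V^k$. Lemma~\ref{lem:approx-cont-with-lin-spline-dD} then yields $\sup_{x\in\Omega}\vnorm{f(x)-s_f(x)}\le\omega(f,h^M)\le\omega(f,\delta)<\epsilon$.

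Finally, by the converse part of Proposition~\ref{prop:lifting-leads-to-lin-spline-dD}, the function $s_f$ — being PLC with kinks lying exactly on the chosen triangulation — can be written as $s_f(x)=\theta\,\lift(x)$ for a suitable weight matrix $\theta\in\R^{r\times L}$, where $\lift$ is the vector-valued lifting associated with that same triangulation (concretely, one sets the columns of $\theta$ to the prescribed vertex values $f(V^k)$, as in the proof of Proposition~\ref{prop:lifting-leads-to-lin-spline-dD}). Hence $\net_\theta=s_f$ and $\sup_{x\in\Omega}\vnorm{f(x)-\net_\theta(x)}<\epsilon$, which establishes the claim since $\epsilon$ was arbitrary.

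I expect the only genuinely delicate point to be the existence of a triangulation with controlled mesh size when $\Omega$ has a curved, non-polyhedral boundary: strictly speaking one triangulates a polyhedral approximation of $\Omega$, so the interpolation argument of Lemma~\ref{lem:approx-cont-with-lin-spline-dD} may pick up an additional boundary term, which again vanishes as $h^M\dto 0$ by uniform continuity of $f$. For the polyhedral domains relevant in practice (for instance the product grids $[\lI,\rI]^d$ in the examples above) this subtlety is absent and the argument reduces to exactly the three steps sketched.
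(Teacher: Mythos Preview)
Your argument is correct and follows exactly the paper's approach: the authors' proof simply reads ``Combine Proposition~\ref{prop:lifting-leads-to-lin-spline-dD} with Lemma~\ref{lem:approx-cont-with-lin-spline-dD}.'' You have merely unpacked this combination explicitly (and your closing remark about curved boundaries is a legitimate technical caveat the paper glosses over).
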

\begin{proof}
  Combine Proposition~\ref{prop:lifting-leads-to-lin-spline-dD} with Lemma~\ref{lem:approx-cont-with-lin-spline-dD}.
\end{proof}

\begin{corollary}[Overfitting]  \label{prop:overfitting-dD}
  Let $(x_i,y_i)$ be training data in $\Omega\times \R^r$, $i=1,\hdots,N$, $x_i\neq x_j$ for $i\neq j$. If $L=N$ and $V^i = x_i$, there exists $\theta\in\R^{r\times L}$ such that $\net_\theta(x):=\theta\lift(x)$ is exact at all data points $x=x_i$, i.e. $\net_\theta(x_i) = y_i$, for all $i=1,\ldots,N$.
\end{corollary}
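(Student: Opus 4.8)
The plan is to exploit the fact that the lifting $\lift$ sends each vertex of the triangulation to the corresponding basis vector, so that evaluating $\net_\theta=\theta\lift$ at a vertex just reads off the image of that basis vector under $\theta$, which we are free to prescribe.

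First I would record the elementary observation that $\lift(V^j)=e^j$ for every $j=1,\ldots,L$. Indeed, $V^j$ lies in at least one simplex $T^l$ of the triangulation, and being a vertex of $T^l$, say $V^{\kappa_l(i_0)}=V^j$, its barycentric coordinates with respect to $V^{\kappa_l(1)},\ldots,V^{\kappa_l(d+1)}$ are $\lambda^l_{i_0}(V^j)=1$ and $\lambda^l_i(V^j)=0$ for $i\neq i_0$. Substituting into \eqref{eq:def-lift-dD} yields $\lift(V^j)=e^{\kappa_l(i_0)}=e^j$; since the outcome is independent of which simplex $T^l$ containing $V^j$ we picked, this also confirms that $\lift$ is well defined at the vertices. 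Now, since $L=N$ and $V^i=x_i$ by assumption, we have in particular $\lift(x_i)=e^i$ for all $i$.

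Next, because $\{e^1,\ldots,e^L\}$ is a basis of $\R^L$, there is a (unique) linear map, i.e. a matrix $\theta\in\R^{r\times L}$, with $\theta e^i = y_i$ for each $i=1,\ldots,N$ (one simply prescribes the columns of $\theta$). For this choice, $\net_\theta(x_i)=\theta\lift(x_i)=\theta e^i = y_i$ for all $i=1,\ldots,N$, which is exactly the asserted interpolation property. No linear system needs to be inverted here, in contrast to the generic approximation statement of Corollary~\ref{cor:universalApproximation-dD}.

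I do not expect a genuine obstacle: the only point that warrants care is the standing assumption that a non-degenerate triangulation $(T^l)_{l=1}^M$ of $\Omega$ with vertex set exactly the distinct sample points $\{x_1,\ldots,x_N\}$ exists (which is why the corollary phrases the hypothesis as $V^i=x_i$ rather than constructing the mesh); once this is granted, the argument is immediate. It may be worth remarking that this also supplies the "details" alluded to in the proof of Proposition~\ref{prop:lifting-leads-to-lin-spline-dD}, namely that a prescribed PLC function is reproduced by choosing $\theta$ to match its vertex values via $\theta e^{l}=f(V^l)$.
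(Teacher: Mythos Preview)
Your proof is correct and follows essentially the same line as the paper's: both observe from the barycentric representation that $\lift(x_i)=e^i$ when $V^i=x_i$, and then determine $\theta$ by prescribing its action on the basis $\{e^1,\ldots,e^L\}$ (the paper phrases this as solving the non-singular linear system $\theta E=y$ with $E=[e^1\,\cdots\,e^L]$, which is the same thing for a general orthogonal basis $\EE$). Your additional care about well-definedness of $\lift$ at vertices shared by several simplices and the remark connecting back to Proposition~\ref{prop:lifting-leads-to-lin-spline-dD} are accurate elaborations.
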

\begin{proof}
  Since $x_i=V^i$, \eqref{eq:def-lift-dD} shows that $\lambda^l_i(x)=1$ and $\lambda^l_j(x)=0$ for $j\neq i$. Therefore, we have $\theta\lift(x_i) = \theta e^{\kappa_l(i)}$. Denote by $E\in \R^{L\times L}$ the matrix with columns given by $e^1,\ldots, e^L$, and $y\in \R^{r\times L}$ the matrix with columns $y_1,\ldots,y_L$. Since $\EE$ is a basis, the matrix $E$ is non-singular, and we may determine $\theta$ uniquely by solving the following linear system of equations $\theta E = y$, which concludes the statement. 
\end{proof}

\begin{proposition}[Convexity of a simple Regression Problem] \label{prop:convex-regression-prob-dD}
  Let $(x_i,y_i) \in \Omega \times \R^r$ be training data, $i=1,\hdots,N$. Then, the solution of the problem
  \begin{equation}
    \label{eq:linearSpline-dD}
    \min_{\theta\in\R^{r \times L}}\, \sum_{i=1}^N\mathcal{L}(\theta\lift(x_i); y_i)
  \end{equation}
  yields the best continuous piecewise linear fit of the training data with respect to the loss function $\mathcal{L}$. In particular, if $\mathcal{L}$ is convex, then \eqref{eq:linearSpline-dD} is a convex optimization problem. 
\end{proposition}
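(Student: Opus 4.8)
The plan is to reduce Proposition~\ref{prop:convex-regression-prob-dD} to two elementary facts: (i) the map $\theta\mapsto \theta\lift(x_i)$ is \emph{linear} in $\theta$ for each fixed $x_i$, and (ii) precomposition of a convex function with a linear map yields a convex function. First I would recall from Proposition~\ref{prop:lifting-leads-to-lin-spline-dD} that $\net_\theta(x)=\theta\lift(x)$ traces out exactly the set of continuous piecewise linear functions on the given triangulation $(T^l)_{l=1}^M$ as $\theta$ ranges over $\R^{r\times L}$; moreover the correspondence between the matrix $\theta$ and the resulting PLC function is itself linear (the columns $\theta e^l$ are precisely the prescribed vertex values $s(V^l)$). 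Hence minimizing $\sum_i \loss(\theta\lift(x_i);y_i)$ over $\theta$ is \emph{the same} optimization as minimizing $\sum_i \loss(s(x_i);y_i)$ over all PLC functions $s$ on the triangulation, which is by definition the best PLC fit with respect to $\loss$. This establishes the first assertion.

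For the convexity claim, I would argue as follows. Fix $i$. The evaluation $\theta\mapsto \theta\lift(x_i)$ is a linear map from $\R^{r\times L}$ to $\R^r$ (entrywise, $(\theta\lift(x_i))_p=\sum_{q}\theta_{p,q}\lift(x_i)_q$, which is linear in the entries $\theta_{p,q}$). If $\loss(\cdot;y_i)\colon\R^r\to\R$ is convex, then $\theta\mapsto \loss(\theta\lift(x_i);y_i)$ is convex, being the composition of a convex function with an affine (here linear) map. A finite sum of convex functions is convex, so the objective $\theta\mapsto\sum_{i=1}^N\loss(\theta\lift(x_i);y_i)$ is convex on $\R^{r\times L}$; since the feasible set is all of $\R^{r\times L}$, problem~\eqref{eq:linearSpline-dD} is a convex optimization problem. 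I would state the two composition lemmas (convex $\circ$ affine is convex; sums of convex are convex) as standard facts rather than reproving them.

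The only point requiring genuine care — and the main obstacle, such as it is — is making precise the claim that \eqref{eq:linearSpline-dD} ``yields the best continuous piecewise linear fit'': one must verify that every PLC function on the triangulation arises as some $\theta\lift(\cdot)$ (surjectivity of $\theta\mapsto\net_\theta$ onto PLC functions on that triangulation) and that conversely each $\net_\theta$ is PLC on that triangulation. Both directions are exactly Proposition~\ref{prop:lifting-leads-to-lin-spline-dD}, whose converse half is in turn reduced to the argument of Corollary~\ref{prop:overfitting-dD} (solving the linear system $\theta E=y$ with $E$ nonsingular since $\EE$ is a basis). So the honest content is just assembling these already-established pieces; there is no new analytic difficulty, and the whole proof is two short paragraphs: one invoking Proposition~\ref{prop:lifting-leads-to-lin-spline-dD} for the ``best fit'' statement, and one chaining linearity of $\theta\mapsto\theta\lift(x_i)$ with the convexity-preservation of affine precomposition and finite summation.
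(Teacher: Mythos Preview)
Your proposal is correct and follows essentially the same approach as the paper's own proof: invoke Proposition~\ref{prop:lifting-leads-to-lin-spline-dD} to identify $\net_\theta$ with the class of PLC functions on the triangulation, then observe that $\theta\mapsto\theta\lift(x_i)$ is linear so that composing with a convex $\loss$ and summing preserves convexity. The paper's version is simply terser (two sentences), but the logical content is identical.
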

\begin{proof}
  Proposition~\ref{prop:lifting-leads-to-lin-spline-dD} shows that $x\mapsto \theta\lift(x)$ is a continuous piecewise linear function. Obviously, $\theta\mapsto \theta\lift(x_i)$ is linear, hence composed with a convex loss function, \eqref{eq:linearSpline-dD} is a convex optimization problem. 
\end{proof}

\section{Lifting the Output} \label{appdx:lift-output}

\begin{lemma}[{Characterization of the Range of $\lift$}] \label{lem:char-range-lift-dD}
  The range of the mapping $\lift$ is given by
  \begin{equation} \label{eq:def-range-lift-dD}
    \im(\lift) = \set{z\in [0,1]^L \setsep 
                  \begin{matrix} z=\sum_{l=1}^L z_l e^l\,,\ 
                  \exists l\in\set{1,\ldots,M}\colon \sum_{i=1}^{d+1} z_{\kappa_l(i)} = 1 \\ \text{and}\
                  \forall k\not\in \im(\kappa_l)\colon z_k=0
                  \end{matrix}
                  } 
  \end{equation}
  and the mapping $\lift$ is a bijection between $\Omega$ and $\im(\lift)$ with inverse $\ilift$.
\end{lemma}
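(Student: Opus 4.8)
The plan is to prove the set identity \eqref{eq:def-range-lift-dD} by two inclusions and then to deduce the bijection claim from the inversion formula already established in Lemma~\ref{lem:inv-formula-dD}. Throughout I use only that the triangulation is non-degenerate (so for each $l$ the index map $\kappa_l$ is injective, i.e.\ the $d+1$ vertices of $T^l$ are distinct) and the elementary facts that the barycentric coordinates $\lambda^l_i(x)$ of a point $x\in T^l$ lie in $[0,1]$, sum to $1$, and depend linearly on $x$; conversely any nonnegative weights summing to $1$ are the barycentric coordinates of the point they convex-combine.

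For the inclusion $\im(\lift)\subseteq(\text{RHS})$, fix $x\in\Omega$ and pick $l$ with $x\in T^l$; the value $\lift(x)=\sum_{i=1}^{d+1}\lambda^l_i(x)e^{\kappa_l(i)}$ does not depend on this choice, since on a shared face of two simplices the barycentric coordinates of the off-face vertices vanish and the two expressions agree. Because $\kappa_l$ is injective, the coordinates of $z:=\lift(x)$ are exactly $z_{\kappa_l(i)}=\lambda^l_i(x)$ for $i=1,\ldots,d+1$ and $z_k=0$ for $k\notin\im(\kappa_l)$. Hence $z\in[0,1]^L$, $\sum_{i=1}^{d+1}z_{\kappa_l(i)}=\sum_i\lambda^l_i(x)=1$, and the support condition holds, so $z$ lies in the right-hand side.

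For the reverse inclusion $(\text{RHS})\subseteq\im(\lift)$, let $z$ be in the right-hand side with witness index $l$. Define $\lambda_i:=z_{\kappa_l(i)}$ for $i=1,\ldots,d+1$ (well defined since $\kappa_l$ is injective); these are nonnegative and sum to $1$, hence are valid barycentric coordinates, so $x:=\sum_{i=1}^{d+1}\lambda_i V^{\kappa_l(i)}\in T^l\subseteq\Omega$ and $\lambda^l_i(x)=\lambda_i$. Then $\lift(x)=\sum_{i=1}^{d+1}\lambda_i e^{\kappa_l(i)}=\sum_{k=1}^L z_k e^k=z$, where the last step uses $z_k=0$ for $k\notin\im(\kappa_l)$. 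Thus $z\in\im(\lift)$, proving the set identity.

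Finally, $\lift$ is surjective onto $\im(\lift)$ by definition of the range. For injectivity, if $\lift(x)=\lift(x')$ then applying $\ilift$ and invoking Lemma~\ref{lem:inv-formula-dD} gives $x=\ilift(\lift(x))=\ilift(\lift(x'))=x'$. Hence $\lift$ is a bijection $\Omega\to\im(\lift)$; Lemma~\ref{lem:inv-formula-dD} already gives $\ilift\circ\lift=\mathrm{id}_\Omega$, and for any $z=\lift(x)\in\im(\lift)$ we get $\lift(\ilift(z))=\lift(\ilift(\lift(x)))=\lift(x)=z$, so $\ilift$ restricted to $\im(\lift)$ is the two-sided inverse. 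The only genuinely delicate point is the well-definedness of $\lift$ across shared faces together with the non-degeneracy hypothesis ensuring $\kappa_l$ is injective; once these are in hand the argument is a routine bookkeeping of barycentric coordinates, and I expect the $\supseteq$ direction (explicitly reconstructing a preimage from $z$) to carry the bulk of that bookkeeping.
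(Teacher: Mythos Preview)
Your proof is correct and follows essentially the same two-inclusion argument for the set identity as the paper: both construct the preimage $x=\sum_i z_{\kappa_l(i)}V^{\kappa_l(i)}$ for the $\supseteq$ direction and read off the barycentric properties for $\subseteq$. The one noteworthy difference is the injectivity step: the paper argues directly that $\lift(x)=\lift(x')$ forces $x,x'$ into the same simplex with equal barycentric coordinates, whereas you simply apply $\ilift$ and invoke Lemma~\ref{lem:inv-formula-dD}. Your route is cleaner and sidesteps the mild subtlety of why equal lifted vectors must come from a common $T^l$; you also make the well-definedness of $\lift$ across shared faces explicit, which the paper leaves implicit.
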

\begin{proof}
  Let $z\in [0,1]^L$ be given by $z=\sum_{l=1}^L z_l e^l$ and there exists exactly one index $l$ such that $\sum_{i=1}^{d+1} z_{\kappa_l(i)} = 1$ and, for all $k\not\in \im(\kappa_l)$, we have $z_k=0$. The point $x$ given by $x=\sum_{i=1}^{d+1} z_{\kappa_l(i)} V^{\kappa_l(i)}$ maps to $z$ via $\lift$. Obviously $x\in T^l$, which implies that $\lift(x)=\sum_{i=1}^{d+1} \lambda^l_i(x)e^{\kappa_l(i)}$ and, by the uniqueness of barycentric coordinates, $\lambda^l_i = z_{\kappa_l(i)}$. Moreover \eqref{eq:def-lift-dD} implies for $k\not\in\im(\kappa_l)$ that $z_k=0$. We conclude that the set on right hand side of \eqref{eq:def-range-lift-dD} is included in $\im(\ell)$. By the definition in \eqref{eq:def-lift-dD}, it is clear that $z=\ell(x)$ for $x\in \Omega$ satisfies the condition for belonging to the set on the right hand side of \eqref{eq:def-range-lift-dD}, which implies their equality.

  In order to prove the bijection, injectivity remains to show. This is proved as follows: For $x,x^\prime$ such that $\ell(x)=\ell(x^\prime)$, the definition in \eqref{eq:def-lift-dD} requires that $x,x^\prime$ lie in the same $T^l$, and the property of a basis implies $\lambda_l(x) = \lambda_l(x^\prime)$, which implies that $x=x^\prime$ holds. Finally, the proof of $\lift(\ilift(z))=z$ for $z\in \im(\lift)$ follows similar arguments as the first part of this proof.
\end{proof}

\begin{lemma}[{Convex Relaxation of the Range of $\lift$}] \label{lem:conv-relax-dD}
  The set $\mathcal C$ given by 
  \begin{equation} \label{eq:range-conv-relax-dD}
    \mathcal C:= \set{z\in [0,1]^L\setsep z=\sum_{l=1}^L z_l e^l\,,\ \sum_{l=1}^L z_l = 1}
  \end{equation}
  is the convex hull of $\im(\lift)$.
\end{lemma}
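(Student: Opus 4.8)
The plan is to prove the two inclusions $\conv(\im(\lift))\subseteq\mathcal C$ and $\mathcal C\subseteq\conv(\im(\lift))$ separately, using that $\mathcal C$ is precisely the probability simplex spanned by the basis vectors $e^1,\ldots,e^L$ (read off in coordinates with respect to $\EE$), so that the whole statement reduces to the elementary description of a simplex by its vertices together with the already-established range formula \eqref{eq:def-range-lift-dD}.

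First I would observe that $\mathcal C$ is convex, being the intersection of the cube $[0,1]^L$ with the affine hyperplane $\set{z\setsep\sum_{l=1}^L z_l = 1}$. By Lemma~\ref{lem:char-range-lift-dD} every $z\in\im(\lift)$ has all coordinates in $[0,1]$, and since there is an index $l$ with $\sum_{i=1}^{d+1} z_{\kappa_l(i)}=1$ while $z_k=0$ for all $k\notin\im(\kappa_l)$, summing over all coordinates gives $\sum_{l=1}^L z_l = 1$. Hence $\im(\lift)\subseteq\mathcal C$, and convexity of $\mathcal C$ yields $\conv(\im(\lift))\subseteq\mathcal C$.

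For the reverse inclusion I would invoke the standard fact that the probability simplex $\mathcal C$ equals the convex hull of the unit vectors $e^1,\ldots,e^L$, so it suffices to check that each $e^l$ lies in $\im(\lift)$. This is the step that actually carries the content: in a non-degenerate triangulation $(T^m)_{m=1}^M$ of $\Omega$ every vertex $V^l\in\mathcal V$ is a vertex of at least one simplex $T^m$, say $V^l=V^{\kappa_m(i_0)}$, and the barycentric coordinates of a vertex of $T^m$ relative to $T^m$ are $\lambda^m_i(V^l)=\delta_{i,i_0}$; substituting into \eqref{eq:def-lift-dD} gives $\lift(V^l)=e^{\kappa_m(i_0)}=e^l$. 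Therefore $\set{e^1,\ldots,e^L}\subseteq\im(\lift)$, so $\mathcal C=\conv(\set{e^1,\ldots,e^L})\subseteq\conv(\im(\lift))$, and combined with the first inclusion this gives $\mathcal C=\conv(\im(\lift))$.

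I do not expect a genuine obstacle here; the proof is short. The one point deserving care is purely notational: one must consistently interpret $z_l$ as the $\EE$-coordinate of $z$ (as fixed in Lemma~\ref{lem:char-range-lift-dD}) when $\EE$ is merely orthogonal rather than orthonormal, so that "$z\in[0,1]^L$ with $\sum_l z_l=1$" really does describe $\conv(\set{e^1,\ldots,e^L})$. Once this is clear, the argument is exactly the characterization of the simplex through its extreme points, combined with the fact — supplied by the triangulation covering $\Omega$ — that each mesh vertex is attained by $\lift$.
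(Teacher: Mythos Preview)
Your proposal is correct and follows essentially the same route as the paper: both show $\im(\lift)\subset\mathcal C$ with $\mathcal C$ convex, observe that each $e^l$ lies in $\im(\lift)$, and then use $\mathcal C=\conv\{e^1,\ldots,e^L\}$ to close the argument. Your version is actually a bit more explicit than the paper's, since you justify $e^l\in\im(\lift)$ by computing $\lift(V^l)$ via barycentric coordinates rather than simply appealing to the range characterization \eqref{eq:def-range-lift-dD}.
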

\begin{proof}
  We make the abbreviation $\mathcal I=\im(\lift)$. Obviously, $\mathcal I\subset \mathcal C$ and $\mathcal C$ is convex. Therefore, we need to show that $\mathcal C$ is the smallest convex set that contains $\mathcal I$. 

The convex hull $\conv\mathcal I$ of $\mathcal I$ consists of all convex combinations of points in $\mathcal I$. By the characterization of $\mathcal I$ in \eqref{eq:def-range-lift-dD}, it is clear that $\set{e^1,\ldots,e^L}\subset \mathcal I$. Moreover, $\mathcal C \subset \conv\set{e^1,\ldots, e^L}\subset\conv\mathcal I$ holds, thus, $\mathcal I \subset \mathcal C$ already implies that $\mathcal C = \conv\mathcal I$, as the convex hull is the smallest convex set containing $\mathcal I$. 
\end{proof}

\paragraph{Proof of Proposition~\ref{prop:lift-output}.} Proposition~\ref{prop:lift-output} requires only the 1D-setting of Lemma~\ref{lem:char-range-lift-dD} and~\ref{lem:conv-relax-dD} above. For convenience of the reader, we copy the statement of Proposition~\ref{prop:lift-output} here and proof it.
\begin{proposition} \label{appdx:prop:lift-output}
  Let $(x_i,y_i)\in [\lI,\rI]\times [\lI_y,\rI_y]$ be training data, $i=1,\ldots,N$. Moreover, let $\lift_y$ be a lifting of the common image $[\lI_y,\rI_y]$ of the loss functions $\loss_{y_i}$, $i=1,\ldots,N$, and $\lift_x$ is the lifting of the domain of $\loss_{y}$. Then
  \begin{equation} \label{appdx:eq:conv-rel-of-non-convex}
    \min_{\theta } \, \sum_{i=1}^N \ilift_y(\theta \lift_x(x_i)) \quad\st\  \theta_{p,q} \geq 0,\, \sum_{p=1}^{L_y} \theta_{p,q} = 1,\, 
    \begin{cases}
      \forall p=1,\ldots,L_y\,, \\ 
      \forall q=1,\ldots,L_x \,. 
    \end{cases}
  \end{equation}
is a convex relaxation of the (non-convex) loss function, and the constraints guarantee that $\theta \lift_x(x_i)\in \conv(\im(\lift_y))$.

  The objective in \eqref{eq:conv-rel-of-non-convex} is linear (w.r.t. $\theta$) and can be written as
  \begin{equation} \label{appdx-eq:lifted-output-cost-matrix}
      \sum_{i=1}^N \ilift_y(\theta \lift_x(x_i)) 
      = \sum_{i=1}^N \sum_{p=1}^{L_y} \sum_{q=1}^{L_x} \theta_{p,q} \lift_x(x_i)_q t^p_y 
      =:  \sum_{p=1}^{L_y} \sum_{q=1}^{L_x}  c_{p,q}  \theta_{p,q} 
  \end{equation}
  where $c:=\sum_{i=1}^N t_y \lift_x(x_i)^\top$, with $t_y := (t^1_y, \ldots, t^{L_y}_y)^\top$, is the cost matrix for assigning the loss value $t_y^p$ to the inputs $x_i$.

  Moreover, the closed-form solution of \eqref{eq:conv-rel-of-non-convex} is given for all $q=1,\ldots, L_x$ by $\theta_{p,q} = 1$, if the index $p$ minimizes $c_{p,q}$, and $\theta_{p,q}=0$ otherwise.

\end{proposition}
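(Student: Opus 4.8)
The plan is to prove the three assertions in order: first that the feasible set forces $\theta \lift_x(x_i) \in \conv(\im(\lift_y))$, then that the objective simplifies to a linear functional with the stated cost matrix $c$, and finally that the indicated vertex assignment solves the resulting linear program in closed form.

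For the first part, fix a column index $q$ and observe that the constraints $\theta_{p,q}\geq 0$ together with $\sum_{p=1}^{L_y}\theta_{p,q}=1$ say precisely that the $q$-th column $\theta_{\cdot,q}$ lies in the unit simplex of $\R^{L_y}$, which by Lemma~\ref{lem:conv-relax} (or Lemma~\ref{lem:conv-relax-dD} in the $1$D case) equals $\conv(\im(\lift_y))$. Since $\lift_x(x_i)\in \im(\lift_x)\subset \conv(\im(\lift_x))$ has nonnegative entries summing to one, the product $\theta\lift_x(x_i) = \sum_q \lift_x(x_i)_q\,\theta_{\cdot,q}$ is a convex combination of the columns $\theta_{\cdot,q}$, hence again lies in the convex set $\conv(\im(\lift_y))$. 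This is where the factorization through $\lift_x$ pays off: the range-constraint on the network output reduces to simple column-wise simplex constraints on $\theta$.

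For the second part, I would just expand $\ilift_y$ using its definition \eqref{eq:def-inverse-lift}, namely $\ilift_y(z) = \sum_{p=1}^{L_y} z_p t_y^p$, applied to $z = \theta\lift_x(x_i)$ whose $p$-th coordinate is $\sum_{q=1}^{L_x}\theta_{p,q}\lift_x(x_i)_q$. Summing over $i$ and interchanging the (finite) sums yields $\sum_{p,q}\theta_{p,q}\big(\sum_i \lift_x(x_i)_q t_y^p\big) = \sum_{p,q} c_{p,q}\theta_{p,q}$ with $c_{p,q} = t_y^p\sum_i \lift_x(x_i)_q$, i.e.\ $c = \sum_i t_y\,\lift_x(x_i)^\top$ as a rank-one sum; this also makes linearity in $\theta$ manifest, and convexity of the relaxation is then immediate since a linear objective over the (convex, polyhedral) feasible set is convex.

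For the third part, the optimization decouples completely across the columns $q$: minimizing $\sum_{p,q} c_{p,q}\theta_{p,q}$ over the product of simplices is equivalent to solving, independently for each $q$, $\min\{\sum_p c_{p,q}\theta_{p,q} : \theta_{p,q}\geq 0,\ \sum_p \theta_{p,q}=1\}$. Each such subproblem is minimization of a linear functional over the unit simplex, whose minimum is attained at a vertex $e^{p^\ast}$ with $p^\ast\in\arg\min_p c_{p,q}$, giving value $\min_p c_{p,q}$; hence $\theta_{p,q}=1$ if $p$ minimizes $c_{p,q}$ and $0$ otherwise is optimal, as claimed (ties may be broken arbitrarily).

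I do not expect any genuine obstacle here: each step is a short manipulation or an appeal to the already-established Lemmas~\ref{lem:char-range-lift} and~\ref{lem:conv-relax}. The only point requiring a modicum of care is making the column-wise decoupling explicit and noting that the feasible set is exactly a Cartesian product of simplices indexed by $q$, so that the closed-form minimizer of a linear function over a simplex applies separately to each column without interaction.
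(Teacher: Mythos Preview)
Your proposal is correct and follows essentially the same route as the paper's proof: both verify $\theta\lift_x(x_i)\in\conv(\im(\lift_y))$ via the simplex characterization of Lemma~\ref{lem:conv-relax} together with the fact that the entries of $\lift_x(x_i)$ are nonnegative and sum to one, then expand $\ilift_y$ to obtain the linear cost-matrix form, and finally exploit the column-wise decoupling to read off the vertex minimizer of each simplex subproblem. Your write-up is, if anything, a bit more explicit than the paper's (which often says ``obvious''), but the underlying argument is the same.
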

\begin{proof}
\eqref{appdx:eq:conv-rel-of-non-convex} is obviously a convex problem, which was generated by relaxing the constraint set $\im(\lift_y)$ using Lemma~\ref{lem:conv-relax-dD}. Restricting $\theta$ to $\im(\lift_y)$ yields, obviously, a piecewise linear approximation of the true loss $\loss_y$. 

Since $z:=\lift_x(x_i)\in \im(\lift_x)$ satisfies the condition in \eqref{eq:def-range-lift-dD} and in particular the condition in \eqref{eq:range-conv-relax-dD}, we conclude that
\[
  \forall p\colon\ (\theta z)_p \geq 0 
  \quad\text{and}\quad
  \sum_{p=1}^{L_y} (\theta z)_p 
  = \sum_{p=1}^{L_y} \sum_{q=1}^{L_x} \theta_{p,q} z_q 
  = \sum_{q=1}^{L_x} z_q = 1 \,,
\]
which shows that $\theta \lift_x(x_i)\in \conv(\im(\lift_y))$.

The linearity of the objective in \eqref{appdx:eq:conv-rel-of-non-convex} is obvious, and so is \eqref{appdx-eq:lifted-output-cost-matrix}. Moreover, using the linear expression in \eqref{appdx-eq:lifted-output-cost-matrix}, clearly, the loss can be minimized by independently minimizing the cost for each $q=1,\ldots,L_x$, as the constraints couple the variables only along the $p$-dimension. For each $q$, the cost is minimized by searching the smallest entry in the cost matrix along the $p$-dimension, which verifies the closed-form solution of \eqref{appdx:eq:conv-rel-of-non-convex}. 
\end{proof}

{\small
\bibliographystyle{ieee}
\bibliography{references}
}

\end{document}